\newcommand{\ra}[1]{\renewcommand{\arraystretch}{#1}}
\newcommand{\wt}{T}
\newcommand{\diag}[1]{\textnormal{diag}\left(#1\right)}
\newtheorem{definition}{Definition}
\newtheorem{proposition}{Proposition}
\crefname{appendix}{App.}{App.}
\definecolor{g}{RGB}{255, 16, 16}
\newcommand{\ie}{i.e., }
\newcommand{\eg}{e.g., }
\DeclareMathOperator{\dop}{do}
\date{} 
\title{\textbf{The Causal Chambers:\\Real Physical Systems as a Testbed for AI Methodology}}
\author[1]{Juan L. Gamella\footnote{Correspondence to Juan L. Gamella\\\nolinkurl{juan.gamella@stat.math.ethz.ch}}}
\author[1]{Jonas Peters}
\author[1]{Peter B\"uhlmann}
\affil[1]{Seminar for Statistics\\ETH Zürich}
\begin{document}

%TC:ignore
% \onecolumn

% \section*{Word Counts}

% This section is \textit{not} included in the word count.

% \detailtexcount{main}

%TC:endignore

\twocolumn
\maketitle

\begin{abstract}
In some fields of AI, machine learning and statistics, the validation of new methods and algorithms is often hindered by the scarcity of suitable real-world datasets. Researchers must often turn to simulated data, which yields limited information about the applicability of the proposed methods to real problems.
As a step forward, we have constructed two devices that allow us to quickly and inexpensively produce large datasets from non-trivial but well-understood physical systems. 
The devices, which we call \emph{causal chambers}, are computer-controlled laboratories that allow us to manipulate and measure an array of variables from these physical systems, providing 
a rich testbed for algorithms from a variety of fields. We illustrate potential applications through a series of case studies in fields such as causal discovery, out-of-distribution generalization, change point detection, independent component analysis, and symbolic regression. For applications to causal inference, the chambers allow us to carefully perform interventions. We also provide and empirically validate a causal model of each chamber, which can be used as ground truth for different tasks. The hardware and software are made open source, and the datasets are publicly available at \href{https://causalchamber.org}{\nolinkurl{causalchamber.org}} or through the Python package \href{https://pypi.org/project/causalchamber/}{\texttt{causalchamber}}.
\end{abstract}

\section{Introduction}

Methodological research in AI, machine learning, and statistics often develops without a concrete application in mind. Many impactful advances in these fields have been made in this way, and there are important theoretical questions that are studied outside the context of a particular application. Crucially, progress also relies on having access to high-quality, real-world datasets, which benefits methodological and theoretical researchers by helping them steer research in meaningful directions, relaxing assumptions that are unlikely to hold in practice, and developing methodologies that may work well on a variety of real-world problems.

However, for some research areas, particularly nascent ones, it can be difficult to find real-world datasets that provide a ground truth suitable to validate new methods and check foundational assumptions that underlie theoretical work. This is because new fields come with new requirements in terms of ground truth, and few or no datasets may have been collected that already satisfy them. For example, for 
most 
sub-fields of causal inference \parencite{spirtes2000causation,pearl2009causality,peters2017elements}, we require data from phenomena whose underlying causal relationships are already exquisitely understood, or for which carefully designed intervention experiments are available. For symbolic regression \parencite{schmidt2009distilling,cava2021contemporary}, the data must follow a known, closed-form mathematical expression, e.g., a natural law in a controlled experimental environment. For the different types
of representation learning \parencite{Locatello2018ChallengingCA,scholkopf2021toward}, we may need data for which there are
some latent ``generating factors'' that we can measure directly.
Such datasets can be difficult to obtain in practice, and few exist for these tasks. As a result, researchers are often limited to synthetic data produced by computer simulations, which may fall short of answering how well a particular method works in practice.

This is where we believe our work can
contribute. We have constructed two physical devices that allow the inexpensive and automated collection of data from two well-understood, real physical systems. The devices, which we call \emph{causal chambers}, consist of a light tunnel and a wind tunnel (\autoref{fig:chambers_diagram}). They are, in essence, computer-controlled laboratories (\autoref{fig:data_collection}) to manipulate and measure different variables of the physical system they contain.

We believe that the chambers are well-suited to substantially improve the validation of methodological advancements across machine learning and statistics, by providing real datasets with a ground truth for fields where such datasets are otherwise scarce or non-existent. This is accomplished through two key properties of the chambers. First, the underlying physical systems are well-understood, in the sense that relationships between most variables are described by first principles and natural laws involving linear, non-linear, and differential equations---see \autoref{apx:physical_effects} and \ref{apx:mechanistic_models} for a detailed description with carefully designed experiments. This allows us to provide ground truths for a variety of tasks, including a causal model of each chamber. Second, we can manipulate the systems in a controlled and automated way, quickly producing vast amounts of data. Furthermore, the chambers produce data of different modalities, including i.i.d., time-series, and image data, allowing us to provide validation tasks for a wide range of methodologies.

To illustrate the practical use of the chambers, we perform case studies in causal discovery, out-of-distribution generalization, change point detection, independent component analysis and symbolic regression---see \autoref{s:applications} and \autoref{fig:benchmarks_1} and \ref{fig:benchmarks_2}. Our choice constitutes only an initial selection, and we believe many other possibilities exist.

Our work complements existing datasets from more complex real-world systems for which a ground truth is not or only partially available \cite[\eg][]{koh2021wilds}, as well as efforts to produce synthetic data that mimics such systems \parencite[\eg][]{gamella2020active,göbler2024textttcausalassembly,cheng2023causaltime,cava2021contemporary,udrescu2020ai,greenfield2010dream4}. While the good performance on the chambers is not guaranteed to carry over to more complex systems, we believe that
the chambers can
serve as a sanity check for foundational assumptions and algorithms that are
intended to work in a variety of settings.

A list of all datasets we currently provide can be found at \href{https://causalchamber.org}{\nolinkurl{causalchamber.org}}, together with a description of the experimental procedures used to collect them. To allow other researchers to build their own chambers, we provide blueprints, component lists, and source code in the repository \href{https://github.com/juangamella/causal-chamber}{\nolinkurl{github.com/juangamella/causal-chamber}}.

\section{The Causal Chambers}
\label{s:chambers}

\begin{figure*}[t]
\centerline{
\includegraphics[width=180mm]{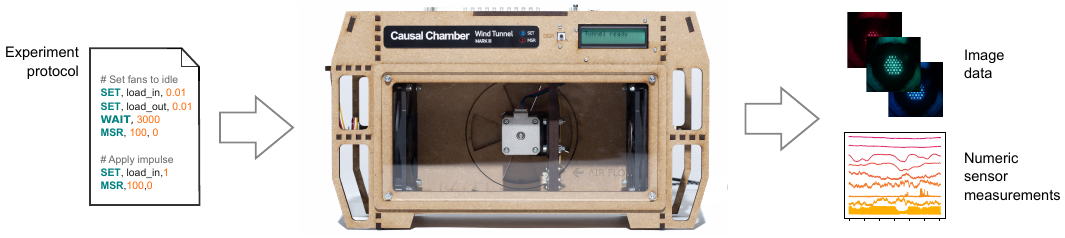}
}
\caption{\textbf{Data collection workflow}. The user provides an \emph{experiment protocol} consisting of step-by-step instructions describing the data collection procedure, which the chamber then carries out without human supervision. The instructions specify when and to which values the actuators and sensor parameters should be set. They also specify when measurements of all variables should be taken and at which frequency, at a maximum of 10 Hz for the light tunnel and 7 Hz for the wind tunnel. Actuators and sensor parameters can also be set automatically by the chamber as a function of other variables in the system, such as sensor measurements. This allows introducing additional complexity for some validation tasks, as described in \autoref{s:testbed}.}
\label{fig:data_collection}
\end{figure*}%
Each chamber is a machine that contains a simple physical system and allows us to measure and manipulate some of its variables. The chambers contain a variety of \emph{sensors}, for example, to measure light intensity or barometric pressure. To manipulate the physical system, \emph{actuators} allow us to control, 
for example, the brightness of a light source or the speed at which fans turn. Each sensor can also be manipulated by modifying some of its parameters, such as the oversampling rate or reference voltage.

Throughout this manuscript, we refer to the actuators and sensor parameters as the \emph{manipulable variables} of the chamber.
A programmable onboard computer controls all sensor parameters and actuators, enabling the chambers to conduct experiments and collect data without human supervision (\autoref{fig:data_collection}). As a result, the chambers can quickly produce vast amounts of data, up to millions of observations or tens of thousands of images per day.

In the remainder of this section, we give an overview of each chamber, its physical system, and some of the measured variables. \autoref{fig:chambers_diagram} provides diagrams of the chambers and their main components, and a detailed description of all variables can be found in \autoref{apx:chamber_variables}.

\subsection{The Wind Tunnel}

The wind tunnel (\autoref{fig:chambers_diagram}ac) is a chamber with two controllable fans that push air through it and barometers that measure air pressure at different locations. A hatch at the back of the chamber controls an additional opening to the outside. A microphone measures the noise level of the fans, and a speaker allows for an independent effect on its reading.

The tunnel provides data from 32 numerical and categorical variables (see \autoref{fig:chambers_data}a for some examples), of which 11 are sensor measurements, and 21 correspond to actuators and sensor parameters that can be manipulated.
For example, we can control the load of the two fans ($L_\text{in}, L_\text{out}$) and measure their speed ($\tilde{\omega}_\text{in}, \tilde{\omega}_\text{out}$), the current they draw ($\tilde{C}_\text{in}, \tilde{C}_\text{out}$), and the resulting air pressure inside the chamber ($\tilde{P}_{\text{dw}}, \tilde{P}_{\text{up}}$)
or at its intake ($\tilde{P}_{\text{int}}$). We can manipulate sensor parameters like the oversampling rate of the barometers ($O_\text{dw}, O_\text{up}, O_\text{int}, O_\text{amb}$) or the timer resolution of the speed sensors ($T_\text{in}, T_\text{out}$), further affecting their measurements. 
In the circuit that drives the speaker, we can manipulate the potentiometers
($A_1, A_2$) that control the amplification, monitoring the resulting signal at different points of the circuit ($\tilde{S}_1, \tilde{S}_2$) and through the microphone output ($\tilde{M}$).

\subsection{The Light Tunnel}

The light tunnel (\autoref{fig:chambers_diagram}bd) is a chamber with a controllable light source at one end and
two linear polarizers mounted on rotating frames. The relative angle between the polarizers dictates how much light passes through them (see \autoref{fig:chambers_data}c and \autoref{fig:chambers_data}e)
and sensors measure the light intensity before, between, and after the polarizers. A camera on the side opposite the light source allows taking images from inside the tunnel.

The tunnel provides image data (\autoref{fig:chambers_data}e) and 41 numerical and categorical variables (\eg \autoref{fig:chambers_data}bcd), of which 32 can be manipulated. For example, we can control the intensity of the light source at three different wavelengths ($R,G$ and $B$) and measure the drawn electric current ($\tilde{C}$). Using motors, we can rotate the polarizer frames to desired angles $(\theta_1, \theta_2)$ and measure the effect on light intensity at different wavelengths ($\tilde{I}_1, \tilde{I}_2, \tilde{I}_3, \tilde{V}_1, \tilde{V}_2, \tilde{V}_3$). 
We can manipulate sensor parameters like the exposure time of the camera ($T_\text{Im}$) or the photodiode used by the light sensors ($D^I_1, D^I_2, D^I_3$), further affecting the readings of these sensors.

\begin{figure*}
\centerline{\includegraphics[width=180mm]{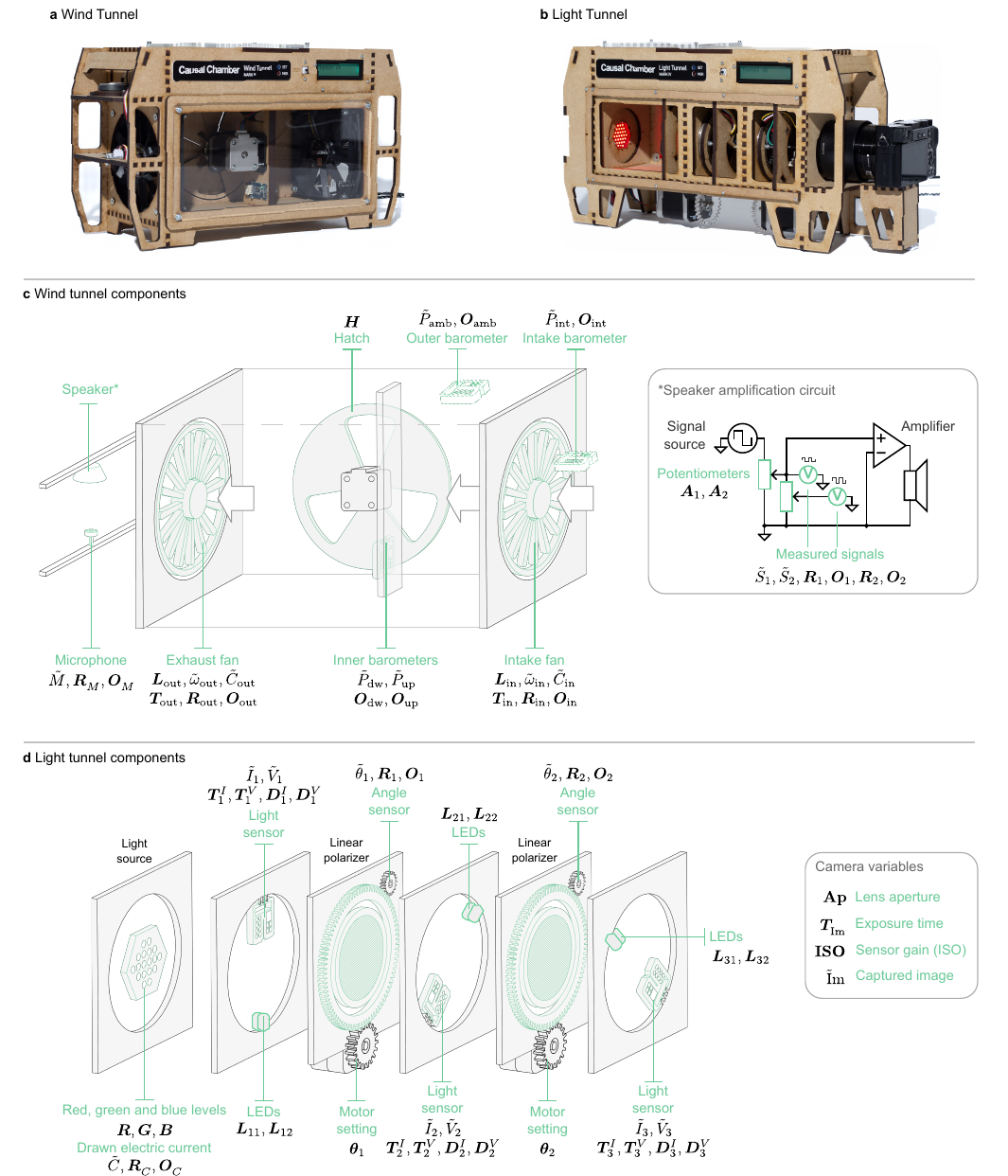}}
\caption{\textbf{The causal chambers.}
(a) The wind tunnel. (b) The light tunnel with the front panel removed to show its inner components. (c,d) Diagrams of the chambers and their main components, including the amplification circuit that drives the speaker of the wind tunnel and the variables for the light tunnel camera. The variables measured by the chambers are displayed in black math print.
Sensor measurements are denoted by a tilde. Manipulable variables, that is, actuators and sensor parameters, are shown in bold symbols (shown as non-bold text elsewhere in the text). A detailed description of each variable is given in \autoref{apx:chamber_variables}.}
\label{fig:chambers_diagram}
\end{figure*}

%%%%%%%%%%%%%%%%%%%%%%%%%%%%%%%%%%%%%%%%%%%%%%%%%%%%%%%%%%%%%%%%%%%%%%
%% SECTION
\section{A Testbed for Algorithms}
\label{s:testbed}

The chambers are designed to provide a testbed for a variety of algorithms from AI, machine learning and statistics. To set up validation tasks, we rely on two key properties of the chambers: that the encapsulated physical system is well understood and that we can manipulate it. For example, by manipulating 
actuators we can evaluate a learned causal model in its prediction of interventional distributions. Or, in the case where the relationships between actuators and sensors are well described by a natural law, we can set up a symbolic regression task where we try to recover it from data. These are some examples of the tasks we set up in our case studies in \autoref{s:applications}, but many other possibilities exist.

\begin{figure*}[t]
\centerline{\includegraphics[width=180mm]{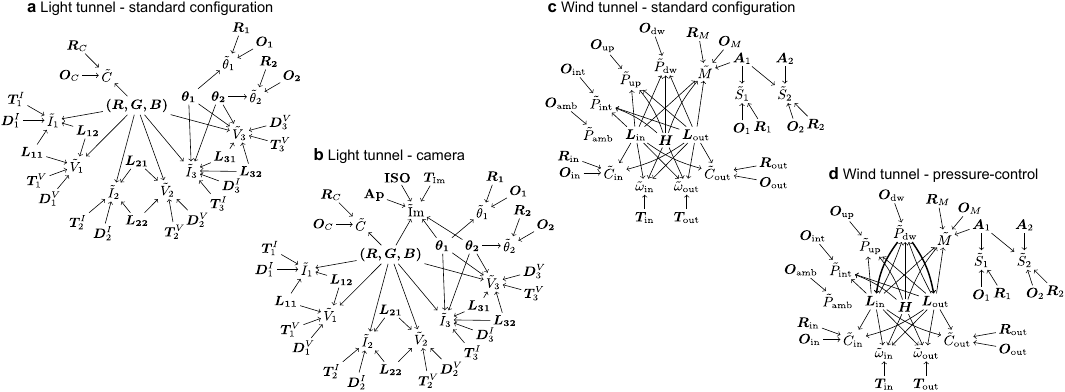}}
\caption{\textbf{Representation of the known effects for different chamber configurations}. Bold symbols correspond to manipulable variables, such as actuators and sensor parameters (shown as non-bold text elsewhere in the text). Sensor measurements are denoted by a tilde. (a,c) Standard configurations of the chambers (b) ``camera'' configuration of the light tunnel, including images from the light tunnel ($\tilde{\text{I}}$m)  and the camera parameters (Ap, ISO, $T_\text{Im}$). (d) ``Pressure-control'' configuration of the wind tunnel, where the load fans $L_\text{in},L_\text{out}$ are set by a control mechanism to maintain the chamber pressure $\tilde{P}_\text{dw}$ at a given level. Each effect (edge in the graph) is described in detail with additional experiments in \autoref{apx:physical_effects}.}
\label{fig:ground_truths}
\end{figure*}%

In \autoref{fig:ground_truths}, we provide a graphical representation of the physical system in each chamber under different configurations, in the form of a directed graph relating its variables. In their most basic form, the chambers operate in the \emph{standard configuration}, where the value of all the actuators and sensor parameters is explicitly given by the user in the experiment protocol (\autoref{fig:data_collection}). The light tunnel operates without the camera to allow the fastest measurement rate.

For additional flexibility in setting up validation tasks, the chambers can also operate in \emph{extended configurations}. For example, these can include additional variables, such as those from the light-tunnel camera (\autoref{fig:ground_truths}b) or additional sensors included in the future. Furthermore, the extended configurations also allow us to assign the value of actuators and sensor parameters as a function of other variables in the system, such as sensor measurements. The assignment is done automatically by the computer onboard the chamber and allows us to introduce additional complexity into the system. For example, the ``pressure-control'' configuration of the wind tunnel (\autoref{fig:ground_truths}d) implements a control mechanism that continuously updates the fan power $(L_\text{in}, L_\text{out})$ to keep the chamber pressure ($\tilde{P}_\text{dw}$) constant. The assignment functions can be any stochastic or deterministic function that can be expressed in the Turing-complete language that controls the chamber computer. This allows us to modify the causal structure underlying the chambers, by introducing additional effects of varying strength between variables. While this yields a vast space of possible configurations, for the moment, we only provide datasets from the four configurations shown in \autoref{fig:ground_truths}.

In \autoref{apx:physical_effects}, we provide a detailed description of all the effects (\ie edges) in \autoref{fig:ground_truths}, based on background knowledge and carefully designed experiments (see \autoref{fig:loads_hatch_effects}-\autoref{fig:angle_sensors}). Furthermore, \autoref{apx:mechanistic_models} contains mechanistic models that describe some of the effects, ranging from simple natural laws to more complex models involving the technical specifications of the actual components. For the more complex processes in the chambers, such as the image capture in the light tunnel or the effects on the wind tunnel pressure, we provide approximate models with increasing degrees of fidelity. In \autoref{fig:benchmarks_2}f, we compare the output of some of these models to measurements gathered from the chambers.

\subsection{Causal Ground Truth}
\label{ss:ground_truth}

For readers with a background in causal inference, the graphs in \autoref{fig:ground_truths} may be reminiscent of causal graphical models \parencite{pearl2009causality,peters2017elements,lauritzen2001causal}. In \autoref{apx:causal_ground_truth}, we formalize a causal interpretation of the graphs and validate them with additional randomized experiments. In short, an edge $X \to Y$ signifies that an intervention on $X$ will change the distribution of subsequent measurements of $Y$. This interpretation allows us to treat the graphs in \autoref{fig:ground_truths} as causal ground truths for a variety of causal inference tasks.

Under our interpretation, the absence of an edge between two variables does not preclude the existence of a causal effect between them. As with most real systems, effects between observed variables
may exist beyond what we know or can validate through the procedures described in this paper, due to a lack of statistical power. Furthermore, there are confounding effects where unmeasured variables simultaneously affect some of the variables in the chambers. For example, variations in the atmospheric pressure outside the chambers simultaneously affect all barometric measurements. We refer the reader to \autoref{apx:causal_ground_truth} for more details.

\section{Case Studies}
\label{s:applications}

\begin{figure*}[t]
\centerline{
\includegraphics[width=180mm]{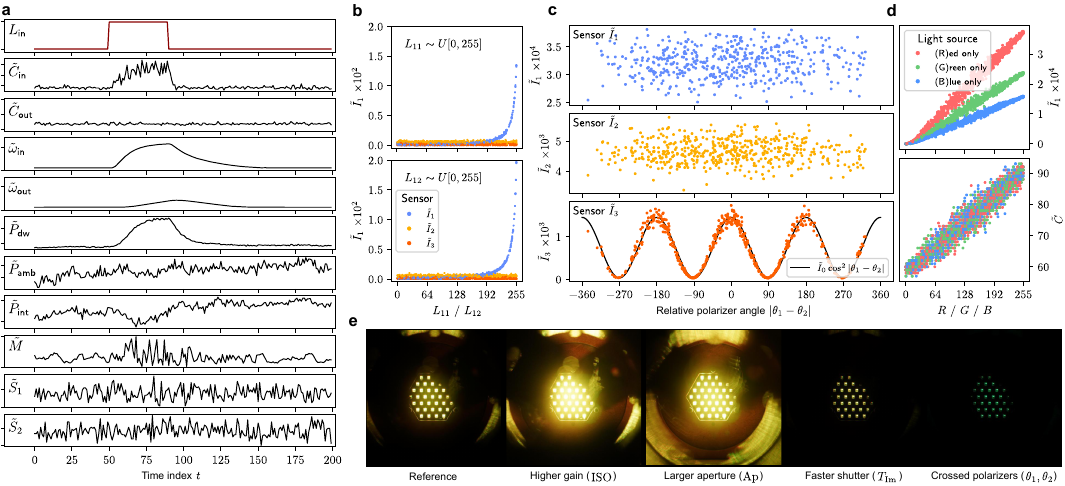}
}
\caption{\textbf{Examples of data produced by the chambers.} (a) Numeric time-series data produced by the wind tunnel under an impulse on the intake fan load ($L_\text{in}$, red), affecting other variables in the system. (b,c) numerical data from the light tunnel illustrating the effect of LED brightness $(L_{11}, L_{12})$ and polarizer angles ($\theta_1, \theta_2$) on the light-intensity readings ($\tilde{I}_1, \tilde{I}_2, \tilde{I}_3$). (d) Effect of the light source setting $(R,G,B)$ on the light intensity reading of the first sensor ($\tilde{I}_1$) and drawn current ($\tilde{C}$). (e) Examples of images from the light tunnel for a fixed light source setting (reference) and interventions on other variables that affect the resulting image.}
\label{fig:chambers_data}
\end{figure*}
We now show, through practical examples, how the chambers can be used to validate algorithms from a variety of fields. As a starting point, we provide a first collection of datasets and set up tasks from a selection of research areas. Our choice is by no means exhaustive, and these case studies are intended as illustrations rather than comprehensive benchmarks. We describe each field and the corresponding tasks below and evaluate the performance of different algorithms, showing the results in \autoref{fig:benchmarks_1} and \autoref{fig:benchmarks_2}.

For each case study, we provide a detailed description of the experimental procedure in \autoref{apx:methods}, together with well-documented code to reproduce the experiments in the paper repository\footnote{\href{https://github.com/juangamella/causal-chamber-paper}{\nolinkurl{github.com/juangamella/causal-chamber-paper}}}. See \autoref{s:data_availability} for details on accessing the datasets.

\paragraph{Causal discovery (\autoref{fig:benchmarks_1}a)}
By offering a causal ground truth and the ability to carry out interventions, the chambers provide an opportunity to validate causal discovery algorithms \parencite[\eg][]{pearl2009causal, peters2017elements, Glymour2019ReviewOC, heinze2018causal, mooij2016distinguishing}, which aim to recover cause-and-effect relationships from data. The chambers provide data suited to validate a wide range of approaches, including those that rely on i.i.d.\ or time-series data \parencite{runge2018causal} with and without instantaneous or lagged causal effects, and causal structures with and without cycles \parencite{Bongers_2021,claassen2023establishing}. We consider the task of recovering the complete causal graph describing the effects in the system \cite{shimizu2006linear, spirtes2000causation, spirtes1999algorithm}, and evaluate algorithms that take different types of data as input: GES \parencite{chickering2002optimal} for purely observational data, UT-IGSP \parencite{squires2020permutation} for interventional data with unknown targets, and PCMCI+ \parencite{runge2020discovering} for time-series data. This constitutes an example selection of methods that is not exhaustive. Performance is measured by the recovery of the ground-truth graph (see \autoref{ss:ground_truth}). The results are shown in \autoref{fig:benchmarks_1}a. In line with their underlying assumptions, both GES and UT-IGSP recover the strong, linear effects from the light-source setting ($R,G,B$) to the light-sensor readings and drawn current. However, both methods struggle with the nonlinear effects of the polarizer angles ($\theta_1, \theta_2$) and the weak effects of the additional LEDs ($L_{11},\ldots,L_{32}$), which are apparent only in the cases when the light-source brightness is low or the polarizers are crossed (\autoref{fig:leds_conditioning}). For the time-series data from the wind tunnel (task a3), PCMCI+ displays a low recall and performs similarly to random guessing, despite the data matching the settings it is intended for.

\paragraph{Out-of-distribution generalization (\autoref{fig:benchmarks_1}b)} 
By manipulating the chamber actuators and sensor parameters, we can induce distribution shifts in a controlled manner. This enables us not only to test the performance of prediction and inference algorithms on data sets with a distribution that differs from the training distribution, but also to investigate under which assumptions on the shifts such methods perform well \parencite{nagarajan2020understanding, geirhos2020shortcut, Rothenhausler2018AnchorRH}. As an illustration, we set up three simple tasks with different data modalities, as
shown in \autoref{fig:benchmarks_1}b.
The first consists of predicting the light-intensity reading $\tilde{I}_1$ from the other numeric variables of the light tunnel. We fit a simple linear regression with an increasing number of predictors and evaluate its predictive performance on data arising from interventions on the light source intensity ($R,G,B$), sensor parameters ($T^I_2, T^V_1, T^V_2, T^V_3$) and polarizer alignment ($\theta_1$). For the second task, we predict the color setting $R,G,B$ of the light source from the images captured by the camera. We employ a small convolutional neural network \cite{FUKUSHIMA1988119}, which we evaluate on shifts induced by changing the distribution of colors, the polarizer angles, and the camera parameters. The goal of the last task is to predict the hatch position $H$ from the pressure curve ($\tilde{P}_\text{dw}$) that results from applying a short impulse to the load $L_\text{in}$ of the intake fan. We fit a simple feed-forward neural network and validate its performance on curves collected under different loads of the exhaust fan $L_\text{out}$, different barometer precision ($O_\text{dw}$), and from a barometer in a different position ($\tilde{P}_\text{up}$). As expected, the performance of the methods degrades under distribution shifts. Even minute changes to the distribution of the inputs, \eg due to an increase in the barometer oversampling rate ($O_\text{dw}\leftarrow8$, see \autoref{fig:osr_comparison}), can make the MLP in task b3 fail. Interestingly, the notion of causal invariance \parencite{peters2016causal} predicts the drop in performance of some models. For example, the mean absolute error (MAE) incurred by predicting the training-set mean (\ie the empty model) remains constant across environments, except in those where the causal parents of the response (see \autoref{fig:ground_truths}a) receive an intervention (\ie $R,G,B$ in tasks b1 and b2). In task b1, the model which includes only causal parents ($\tilde{I}_1 \sim R,G,B$) is most stable across all enviroments, whereas models that include additional (non-causal) variables achieve a better MAE in the training distribution but perform worse in environments where these variables directly or indirectly receive an intervention.

\paragraph{Change point detection (\autoref{fig:benchmarks_1}c)} Change point detection aims to identify abrupt changes or transitions in time-series data or its underlying data-generating process \parencite{truong2020survey}. By manipulating actuators and sensor parameters, we can induce changes in the measurements of the affected sensors, providing real datasets with a known ground truth in terms of change points. To validate offline change point detection algorithms \parencite{truong2020survey,aminikhanghahi2017survey}, we generate time-series data with smooth and abrupt changes of increasing difficulty. We evaluate the non-parametric change-point detection algorithm \texttt{changeforest} \parencite{londschien2023forests}, displaying the results in \autoref{fig:benchmarks_1}c. As expected, the method correctly recovers all change points in the deterministic time-series of the actuator input $L_\text{in}$. For the affected sensors, the method successfully detects abrupt changes in the signal or its regime, but fails to detect more subtle changes, such as those with only a slight effect on the variance (\eg $\tilde{C}_\text{in}$ or $\tilde{M}$ in \autoref{fig:benchmarks_1}c).

\paragraph{Independent component analysis (\autoref{fig:benchmarks_2}d)}
Independent component analysis (ICA) is a family of techniques that treat data as a mixture of latent components and aim to discover a demixing transformation that can accurately recover them \parencite{hyvarinen2023nonlinear,hyvarinen2001independent}. The linear variants of ICA \parencite{hyvarinen2000algorithms,hyvarinen1999fast} are well established, and recent developments in nonlinear ICA have cast it as a framework that holds potential for effectively tackling the challenge of disentanglement in complex data \parencite{hyvarinen2023nonlinear,Locatello2018ChallengingCA}. We propose tasks that consist of recovering (up to indeterminacies such as scaling) the values of independently set actuators from the measurements of the sensors they affect. As a starting point, we set up three tasks, shown in \autoref{fig:benchmarks_2}d: recovering the light-source setting $(R,G,B)$ from the light-intensity measurements ($\tilde{I}_1, \tilde{I}_2, \tilde{I}_3$, $\tilde{V}_1, \tilde{V}_2, \tilde{V}_3$); recovering the fan loads ($L_\text{in}, L_\text{out}$) and hatch position ($H$) from the barometric readings ($\tilde{P}_\text{dw}, \tilde{P}_\text{up},\tilde{P}_\text{amb}, \tilde{P}_\text{int}$), and recovering the configuration of the light source and polarizers ($R,G,B,\theta_1,\theta_2$) from the image data of the light tunnel. The tasks display increasing difficulty in terms of the complexity and dimensionality of the mixing transformation. As a first baseline, we apply FastICA \parencite{hyvarinen1999fast}, which assumes a linear mixing function.
Indeed, the method succeeds in estimating the actuator inputs for task d1 (\autoref{fig:benchmarks_2}), where the mixing function is approximately linear---see \autoref{s:lt_ground_truth}. For the second task (d2), where the effect of the actuators on the sensors is non-linear (see \cref{ss:models_downwind}), the method produces a distorted estimate of the actuators. For the third task (d3), where the mixing function is both non-linear and high-dimensional, the method produces estimates in seemingly little agreement with
the ground-truth signals.

\paragraph{Symbolic regression (\autoref{fig:benchmarks_2}e)} Symbolic regression \parencite{udrescu2020ai,cranmer2020discovering} aims to discover mathematical equations or expressions that best describe the underlying relationships in data, enabling interpretable and compact model representations. A common motivation is the automatic discovery of natural laws from data \parencite{schmidt2009distilling}. Because simple natural laws well describe 
some of the relationships in the chambers, it is possible to provide symbolic regression tasks from real data, and evaluate the performance of such algorithms. As an example, we set up two tasks: recovering Bernoulli's principle, which relates the barometric measurements of the upwind and downwind barometers ($\tilde{P}_\text{up},\tilde{P}_\text{dw}$), and Malus' law, which describes the effect of the linear polarizers ($\theta_1, \theta_2$) on the light-intensity readings of the third sensor ($\tilde{I}_3, \tilde{V}_3$).
More details can be found in \cref{ss:models_bernoulli} and \ref{ss:models_malus}, respectively.
Bernoulli's principle provides a task with a simple ground-truth function but a low signal-to-noise ratio, whereas Malus' law provides a more complex function with weaker noise, representing two common challenges for symbolic regression algorithms. We apply the method described in \parencite{sr2022} and show the results of five runs in \autoref{fig:benchmarks_2}e. The estimated expressions depend strongly on the random initialization of the method, although they all attain a similar $R^2$ score on the data. When we apply the method to synthetic data following Malus' law
with added Gaussian noise, the dependence on the random initialization disappears, and the method returns the correct ground-truth expression in every run (see \autoref{fig:synthetic_symbolic_malus}). This highlights a scenario where synthetic benchmarks may be unreliable for estimating a method's performance in the real world.

\paragraph{Physics-informed machine learning (\autoref{fig:benchmarks_2}f)} Physics-informed machine learning integrates physical laws or domain-specific knowledge into machine learning models to enhance their accuracy and generalizability \parencite{karniadakis2021physics}. To validate such approaches, in \autoref{apx:mechanistic_models}
we provide mechanistic models of several processes in the chambers, derived from first principles. For each process, we consider models of increasing complexity, allowing us to simulate sensor measurements with varying degrees of fidelity. This provides a testbed for simulation-based inference \parencite{sbi} and approaches that exploit potentially misspecified models for inference or generation \parencite{takeishi2021physics,wehenkel2023,yin2021augmenting}. As an illustration, in \autoref{fig:benchmarks_2}f we compare measurements gathered from the chambers with the output of some of these models. In particular, we show the models describing the image capture process of the light tunnel, and the effects of fan loads ($L_\text{in}, L_\text{out}$) and hatch position ($H$) on other wind tunnel variables ($\tilde{P}_\text{dw}, \tilde{\omega}_\text{in}, \tilde{\omega}_\text{out}$). Their description, together with additional models and their outputs, can be found in \autoref{apx:mechanistic_models}.
To facilitate building additional models and simulators, we provide in \autoref{apx:datasheets} the datasheets for every chamber component, detailing its technical specifications and physical properties. 

\begin{figure*}
\centerline{\includegraphics[width=180mm]{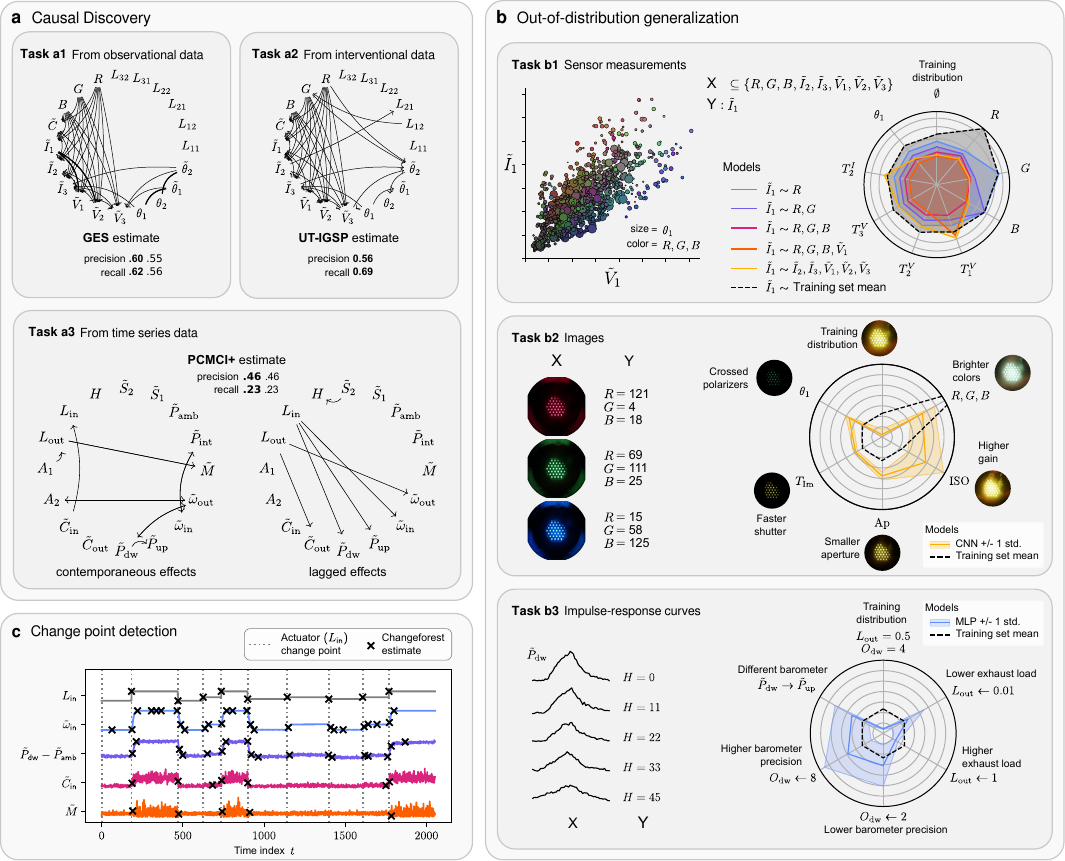}}
\caption{\textbf{Validating algorithms using the chambers (1/2)}. (a) Causal discovery from light-tunnel and wind-tunnel data. The tasks consist of recovering the causal graph from observational data and interventional data from the light tunnel (tasks a1 and a2), and from time-series data from the wind tunnel (task a3). We run a suitable method for each task (GES \parencite{chickering2002optimal}, UT-IGSP with hyperparameter tuning \parencite{squires2020permutation}, and PCMCI+ \parencite{runge2020discovering}, respectively), and evaluate their performance in the recovery of the causal structure of the corresponding ground truth (see \autoref{ss:ground_truth}). GES and PCMCI+ return a set of 12 and 5 plausible graphs, respectively, encoded by a graph with undirected edges \parencite[see \eg][Sec. 2.4.]{chickering2002optimal}. For these methods, we show the precision and recall in the recovery of the directed ground-truth edges (c.f.~equation \ref{eq:prec_recall}, \autoref{apx:methods}) for the best- (bold) and worst-scoring graph in each set. All the graphs returned by PCMCI+ attain the same scores, performing similarly to random guessing. (b) Evaluating the out-of-distribution performance of regression methods. For each task, we try to predict a sensor measurement or actuator value ($Y$) from predictors ($X$) such as numeric measurements (task b1), images (task b2), or impulse-response curves (task b3). We evaluate the predictive performance of each method in terms of its mean absolute error (MAE) on a separate validation set from the training distribution and shifted distributions arising from manipulating the chamber variables. We display the MAE with spider charts, where each axis corresponds to a different setting. As a baseline, we show the MAE incurred when using the average of $Y$ in the training set as prediction (black, dashed). For tasks b2 and b3, the MAE is averaged over 16 random initializations of the model, with error bands corresponding to $\pm1$ standard deviation.
(c) Detecting change points in the time series of different sensor measurements. We change the intake fan load ($L_\text{in}$) at random time points while keeping all other actuators and sensor parameters constant. Because the load affects all the displayed sensors, we take these time points as ground truth (vertical dotted lines) and compare them with the output of the change point detection algorithm (black crosses).
}
\label{fig:benchmarks_1}
\end{figure*}%

\begin{figure*}
\centerline{\includegraphics[width=180mm]{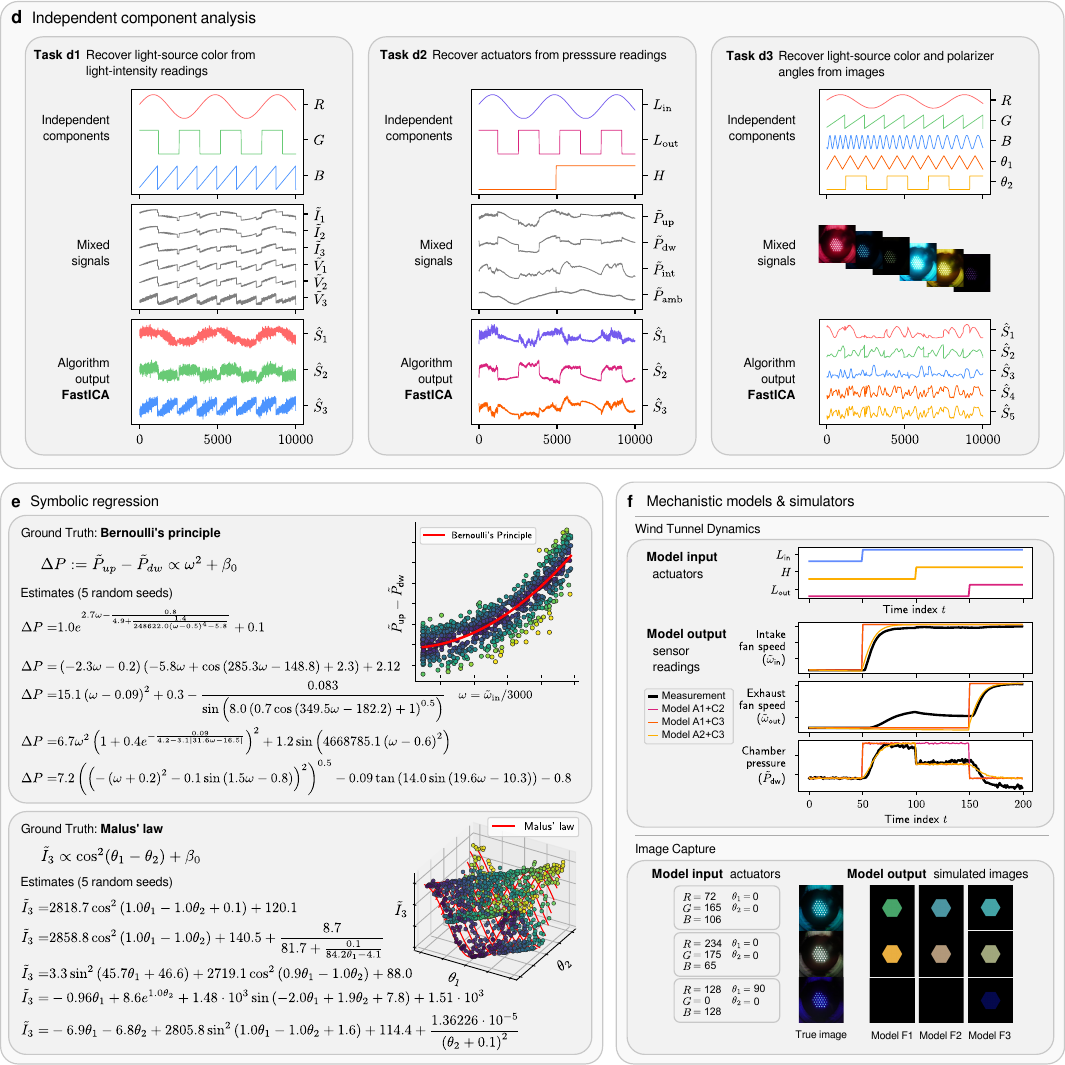}}
\caption{\textbf{Validating algorithms using the chambers (2/2)}. (d) Applying independent component analysis to disentangle the actuator inputs from sensor readings (tasks d1 and d2) and image data (task d3). For each task, we show the actuator values (top), the resulting images and measurements (center), and the sources recovered by the FastICA algorithm \parencite{hyvarinen1999fast} (bottom). For each actuator, we show the recovered source with the highest Pearson correlation coefficient.
(e) Applying symbolic regression to recover (top) Bernoulli's principle from the difference in pressure at the up- and downwind barometers, and (bottom) Malus' law from light-intensity measurements. We show the output of the method described in \cite{sr2022} for five runs with different random initializations. The colors correspond (top) to the residual of the observation w.r.t. the red line and (bottom) to the value of $\theta_2$.
(f) Mechanistic models of the chambers to simulate sensor measurements from actuator inputs with varying degrees of fidelity. For a given set of inputs, we show the outputs of models describing the fan speeds and air pressure in the wind tunnel (top), and the image generation process of the light tunnel (bottom). We compare the model outputs against the images and measurements collected from the chambers (black lines). Higher model numbers imply more complex models with increased fidelity. The models are defined in \autoref{apx:mechanistic_models}.}
\label{fig:benchmarks_2}
\end{figure*}%

\section{Discussion}

We have constructed two devices to collect real-world datasets from well-understood but non-trivial physical systems. The devices provide a testbed beyond simulated data for a variety of empirical inference algorithms in the broad field of AI. To illustrate their use, we have gathered an initial collection of datasets and employed them to perform small case studies in different fields.

The case studies are intended to showcase the flexibility of the chambers in setting up validation tasks; providing exhaustive benchmarks is beyond the scope of this work. However, the mixed performance of algorithms in the case studies suggests that, albeit limited, they can already serve as useful benchmarks for these fields. In some cases, the shortcomings of the methods can be attributed to their underlying assumptions (e.g., tasks a1, a2, d2, and d3). In others, such as tasks e and a3, they highlight a mismatch between performance on synthetic and real data that can lead to an overconfident assessment of a method's capabilities. Task b shows that the chambers provide a principled environment to study phenomena such as causal invariance or the sensitivity of neural networks to small shifts in the distribution of their inputs.

We believe the presented chambers can be used for applications that go beyond the ones we have considered. In particular, the digital control of the chambers makes it possible to validate a variety of active learning, reinforcement learning, and control algorithms.

The chambers are complementary to well-motivated, complex simulators of real phenomena. On the one hand, such simulators allow us to approximate complex systems that are intended application targets, such as mechanisms of the global climate or gene-regulatory networks with hundreds of variables and interactions. On the other hand, it can be difficult (or impossible) to judge if the assumptions used to build these simulators will hold in the real world, and---more importantly---how their violation will affect an algorithm when we use it on real rather than simulated data. Well-understood systems like the chambers provide real-world data without relying on computer simulations and their models and simplifying assumptions. However, the requirement of providing a reliable ground truth necessarily limits the chambers' complexity and size. Therefore, the success of an algorithm on the chambers may not necessarily transfer to larger and more complex systems.

Our aim is that the chambers become a sanity check for algorithms designed to work in a variety of situations. Failures in these testbeds can indicate potential shortcomings in applications to more complex systems. This will allow researchers to test and refine algorithms and methods, and consider fundamental assumptions.

We make all datasets collected from the chambers publicly available, including those used in the case studies of \autoref{s:applications}. Researchers can access them at \href{https://causalchamber.org}{\nolinkurl{causalchamber.org}} and through the Python package \href{https://pypi.org/project/causalchamber/}{\texttt{causalchamber}}, for which we provide an example in \autoref{s:data_availability}. We will continue to expand this dataset repository, and we are open to suggestions of additional experiments that may prove interesting---please reach out to the corresponding author. 

We also provide, in the repository \href{https://github.com/juangamella/causal-chamber}{\nolinkurl{github.com/juangamella/causal-chamber}}, the blueprints and code to allow other researchers to build their own chambers. We believe this to be a key contribution of our work. In first place, the datasets we currently provide amount to only a small fraction of all experiments that are possible with the chambers, and are unlikely to cover many other interesting applications.
Furthermore, having direct access to the chambers is crucial for validation tasks in areas such as active learning, reinforcement learning, and control algorithms. In second place, we hope these resources can be used as a starting point to build chambers around other well-understood systems that prove valuable for the validation of AI methodology.

\section{Data availability}
\label{s:data_availability}
All datasets can be downloaded from the repository at \href{https://causalchamber.org}{\nolinkurl{causalchamber.org}}. The identifier for each dataset used in the case studies is specified in \autoref{apx:methods}. We also provide a Python API to directly download and import the datasets into your code, through the Python package \href{https://pypi.org/project/causalchamber/}{\texttt{causalchamber}}. The package can be installed via pip, \eg by running
\begin{minted}{bash}
  >>> pip install causalchamber
\end{minted}
in an appropriate shell. Datasets can then be accessed directly from Python code. For example, to access the data from Malus' law in the symbolic regression task of \autoref{fig:benchmarks_2}e:
\begin{minted}{python}
from causalchamber.datasets import Dataset

# Download the dataset
#   and store it in the current directory
dataset = Dataset(name='lt_malus_v1', root='./')F

# Load the observations
experiment = dataset.get_experiment('white_255')
df = experiment.as_pandas_dataframe()
\end{minted}
Further examples can be found at \href{https://causalchamber.org}{\nolinkurl{causalchamber.org}}, together with a list of all currently available datasets.

\section{Code availability}
\label{s:code}
The code to reproduce the case studies and figures can be found in the paper repository at
\href{https://github.com/juangamella/causal-chamber-paper}{\nolinkurl{github.com/juangamella/causal-chamber-paper}}.

\section*{Acknowledgements}
\label{s:ack}

We thank all reviewers for their constructive and insightful comments. We would like to thank Manuel Cherep, Christopher Fuchs, Konstantin Göbler, Christina Heinze-Deml, Jörn Jakobsen, Niklas Pfister, Antoine Wehenkel and Tobias Windisch for their valuable discussions and comments on the manuscript. We would also like to thank Niklas Stolz for his help with the design of the polarizer frames of the light tunnel, and Claudio Linares and Helena B\"orjesson for their help with the diagrams and photographs of the chambers.
J.L. Gamella and P. B\"uhlmann have received funding from the European Research Council (ERC) under the European Union’s Horizon 2020 research and innovation program (grant agreement No. 786461).

\onecolumn

\printbibliography

@article{heinze2018causal,
  title={Causal structure learning},
  author={Heinze-Deml, Christina and Maathuis, Marloes H. and Meinshausen, Nicolai},
  journal={Annual Review of Statistics and Its Application},
  volume={5},
  pages={371--391},
  year={2018},
  publisher={Annual Reviews}
}

@article{chickering2002optimal,
  title={Optimal structure identification with greedy search},
  author={Chickering, David M.},
  journal={Journal of Machine Learning Research},
  volume={3},
  pages={507--554},
  year={2002}
}

@article{gamella2020active,
  title={Active invariant causal prediction: experiment selection through stability},
  author={Gamella, Juan L. and Heinze-Deml, Christina},
  journal={Advances in Neural Information Processing Systems},
  volume={33},
  pages={15464--15475},
  year={2020}
}

@article{Rothenhausler2018AnchorRH,
  title={Anchor regression: Heterogeneous data meet causality},
  author={Rothenh{\"a}usler, Dominik and Meinshausen, Nicolai and B{\"u}hlmann, Peter and Peters, Jonas},
  journal={Journal of the Royal Statistical Society Series B (Statistical Methodology)},
  volume={83},
  number={2},
  pages={215--246},
  year={2021},
  publisher={Royal Statistical Society}
}

@book{peters2017elements,
  title={Elements of Causal Inference: Foundations and Learning Algorithms},
  author={Peters, Jonas and Janzing, Dominik and Sch{\"o}lkopf, Bernhard},
  year={2017},
  publisher={MIT Press}
}

@article{pearl2009causal,
  title={Causal inference in statistics: An overview},
  author={Pearl, Judea},
  journal={Statistics Surveys},
  volume={3},
  pages={96--146},
  year={2009},
  publisher={The author, under a Creative Commons Attribution License}
}

@book{pearl2009causality,
  title={Causality},
  author={Pearl, Judea},
  year={2009},
  publisher={Cambridge University Press}
}

@book{spirtes2000causation,
  title={Causation, Prediction, and Search},
  author={Spirtes, Peter and Glymour, Clark and Scheines, Richard and Heckerman, David},
  year={2000},
  publisher={MIT Press}
}

@article{shimizu2006linear,
  title={A linear non-{G}aussian acyclic model for causal discovery},
  author={Shimizu, Shohei and Hoyer, Patrik O. and Hyv{\"a}rinen, Aapo and Kerminen, Antti},
  journal={Journal of Machine Learning Research},
  volume={7},
  pages={2003--2030},
  year={2006}
}

@inproceedings{squires2020permutation,
  title={Permutation-based causal structure learning with unknown intervention targets},
  author={Squires, Chandler and Wang, Yuhao and Uhler, Caroline},  
  pages={1039--1048},  
  booktitle={Proceedings of the 36th Conference on Uncertainty in Artificial Intelligence},
  year={2020}
}

@article{peters2016causal,
  title={Causal inference by using invariant prediction: identification and confidence intervals},
  author={Peters, Jonas and B{\"u}hlmann, Peter and Meinshausen, Nicolai},
  journal={Journal of the Royal Statistical Society: Series B (Statistical Methodology)},
  volume={78},
  number={5},
  pages={947--1012},
  year={2016},
  publisher={Wiley Online Library}
}

@article{spirtes1999algorithm,
  title={An algorithm for causal inference in the presence of latent variables and selection bias},
  author={Spirtes, Peter and Meek, Christopher and Richardson, Thomas},
  journal={Computation, Causation, and Discovery},
  volume={21},
  pages={211--252},
  year={1999}
}

@book{collett2005field,
  title={Field guide to polarization},
  author={Edward Collett},
  year={2005},
  publisher={International society for optics and photonics}
}

@article{smirnov1948table,
  title={Table for estimating the goodness of fit of empirical distributions},
  author={Smirnov, Nickolay},
  journal={The Annals of Mathematical Statistics},
  volume={19},
  number={2},
  pages={279--281},
  year={1948},
  publisher={Institute of Mathematical Statistics}
}

@inproceedings{Locatello2018ChallengingCA,
  title={Challenging Common Assumptions in the Unsupervised Learning of Disentangled Representations},
  author={Francesco Locatello and Stefan Bauer and Mario Lucic and Sylvain Gelly and Bernhard Sch{\"o}lkopf and Olivier Bachem},
  booktitle={Proceedings of the 36th International Conference on Machine Learning},
  year={2019},
  pages = 	 {4114-4124}
}

@article{runge2018causal,
  title={Causal network reconstruction from time series: From theoretical assumptions to practical estimation},
  author={Runge, Jakob},
  journal={Chaos: An Interdisciplinary Journal of Nonlinear Science},
  volume={28},
  number={7},
  year={2018},
  publisher={AIP Publishing}
}

@article{Glymour2019ReviewOC,
  title={Review of Causal Discovery Methods Based on Graphical Models},
  author={Clark Glymour and Kun Zhang and Peter Spirtes},
  journal={Frontiers in Genetics},
  year={2019},
  volume={10}
}

@article{sr2022,
 author = {Kamienny, Pierre-Alexandre and d'Ascoli, Stéphane and Lample, Guillaume and Charton, François},
 journal = {Advances in Neural Information Processing Systems},
 pages = {10269--10281},
 title = {End-to-end symbolic regression with transformers},
 volume = {35},
 year = {2022}
}

@article{truong2020survey,
title = {Selective review of offline change point detection methods},
journal = {Signal Processing},
volume = {167},
pages = {107299},
year = {2020},
author = {Charles Truong and Laurent Oudre and Nicolas Vayatis}
}

@article{aminikhanghahi2017survey,
  title={A survey of methods for time series change point detection},
  author={Aminikhanghahi, Samaneh and Cook, Diane J.},
  journal={Knowledge and information systems},
  volume={51},
  number={2},
  pages={339--367},
  year={2017},
  publisher={Springer}
}

@article{takeishi2021physics,
  title={Physics-integrated variational autoencoders for robust and interpretable generative modeling},
  author={Takeishi, Naoya and Kalousis, Alexandros},
  journal={Advances in Neural Information Processing Systems},
  volume={34},
  pages={14809--14821},
  year={2021}
}

@inproceedings{wehenkel2023,
title = {Robust Hybrid Learning With Expert Augmentation},
booktitle = {Transaction on Machine Learning Research},
author = {Antoine Wehenkel and Jens Behrmann and Hsiang Hsu and Guillermo Sapiro and Gilles Louppe and Joern-Henrik Jacobsen},
year = {2023},
}

@article{yin2021augmenting,
  title={Augmenting physical models with deep networks for complex dynamics forecasting},
  author={Yin, Yuan and Le Guen, Vincent and Dona, J{\'e}r{\'e}mie and de B{\'e}zenac, Emmanuel and Ayed, Ibrahim and Thome, Nicolas and Gallinari, Patrick},
  journal={Journal of Statistical Mechanics: Theory and Experiment},
  volume={2021},
  number={12},
  pages={124012},
  year={2021},
  publisher={IOP Publishing}
}

@article{Bongers_2021,
   title={Foundations of structural causal models with cycles and latent variables},
   volume={49},
   number={5},
   journal={The Annals of Statistics},
   publisher={Institute of Mathematical Statistics},
   author={Bongers, Stephan and Forré, Patrick and Peters, Jonas and Mooij, Joris M.},
   year={2021}
}

@article{FUKUSHIMA1988119,
title = {Neocognitron: A hierarchical neural network capable of visual pattern recognition},
journal = {Neural Networks},
volume = {1},
number = {2},
pages = {119-130},
year = {1988},
author = {Kunihiko Fukushima}
}

@article{karniadakis2021physics,
  title={Physics-informed machine learning},
  author={Karniadakis, George E. and Kevrekidis, Ioannis G. and Lu, Lu and Perdikaris, Paris and Wang, Sifan and Yang, Liu},
  journal={Nature Reviews Physics},
  volume={3},
  number={6},
  pages={422--440},
  year={2021},
  publisher={Nature Publishing Group UK London}
}

@article{
sbi,
author = {Kyle Cranmer  and Johann Brehmer  and Gilles Louppe },
title = {The frontier of simulation-based inference},
journal = {Proceedings of the National Academy of Sciences},
volume = {117},
number = {48},
pages = {30055-30062},
year = {2020}
}

@article{hyvarinen2023nonlinear,
  title={Nonlinear independent component analysis for principled disentanglement in unsupervised deep learning},
  author={Hyv{\"a}rinen, Aapo and Khemakhem, Ilyes and Morioka, Hiroshi},
  journal={Patterns},
  volume={4},
  number={10},
  year={2023},
  publisher={Elsevier}
}

@article{greenfield2010dream4,
  title={DREAM4: Combining genetic and dynamic information to identify biological networks and dynamical models},
  author={Greenfield, Alex and Madar, Aviv and Ostrer, Harry and Bonneau, Richard},
  journal={PloS one},
  volume={5},
  number={10},
  pages={e13397},
  year={2010},
  publisher={Public Library of Science San Francisco, USA}
}

@article{scholkopf2021toward,
  title={Toward causal representation learning},
  author={Sch{\"o}lkopf, Bernhard and Locatello, Francesco and Bauer, Stefan and Ke, Nan R. and Kalchbrenner, Nal and Goyal, Anirudh and Bengio, Yoshua},
  journal={Proceedings of the IEEE},
  volume={109},
  number={5},
  pages={612--634},
  year={2021},
  publisher={IEEE}
}

@article{londschien2023forests,
  author  = {Malte Londschien and Peter B\"uhlmann and Solt Kovács},
  title   = {Random Forests for Change Point Detection},
  journal = {Journal of Machine Learning Research},
  year    = {2023},
  volume  = {24},
  number  = {216},
  pages   = {1--45}
}

@article{hyvarinen2000algorithms,
  title={Independent component analysis: algorithms and applications},
  author={Hyv{\"a}rinen, Aapo and Oja, Erkki},
  journal={Neural networks},
  volume={13},
  number={4-5},
  pages={411--430},
  year={2000},
  publisher={Elsevier}
}

@article{hyvarinen1999fast,
  title={Fast and robust fixed-point algorithms for independent component analysis},
  author={Hyv{\"a}rinen, Aapo},
  journal={IEEE Transactions on Neural Networks},
  volume={10},
  number={3},
  pages={626--634},
  year={1999},
  publisher={IEEE}
}

@book{hyvarinen2001independent,
  title={Independent Component Analysis},
  author={Hyv{\"a}rinen, Aapo and Karhunen, Juha and Oja, Erkki},  
  year={2001},
  publisher={Wiley Interscience}
}

@inproceedings{göbler2024textttcausalassembly,
  title={\texttt{causalAssembly}: Generating Realistic Production Data for Benchmarking Causal Discovery},
  author={G{\"o}bler, Konstantin and Windisch, Tobias and Drton, Mathias and Pychynski, Tim and Roth, Martin and Sonntag, Steffen},
  booktitle={Causal Learning and Reasoning},
  pages={609--642},
  year={2024},
  organization={PMLR}
}

@inproceedings{cheng2023causaltime,
  title={CausalTime: Realistically Generated Time-series for Benchmarking of Causal Discovery},
  author={Cheng, Yuxiao and Wang, Ziqian and Xiao, Tingxiong and Zhong, Qin and Suo, Jinli and He, Kunlun},
  booktitle={The Twelfth International Conference on Learning Representations},
  year={2023}
}

@article{cranmer2020discovering,
  title={Discovering symbolic models from deep learning with inductive biases},
  author={Cranmer, Miles and Sanchez Gonzalez, Alvaro and Battaglia, Peter and Xu, Rui and Cranmer, Kyle and Spergel, David and Ho, Shirley},
  journal={Advances in Neural Information Processing Systems},
  volume={33},
  pages={17429--17442},
  year={2020}
}

@article{lauritzen2001causal,
  title={Causal inference from graphical models},
  author={Lauritzen, Steffen L.},
  journal={Monographs on Statistics and Applied Probability},
  volume={87},
  pages={63--108},
  year={2001},
  publisher={Chapman \& Hall}
}

@inproceedings{claassen2023establishing,
  title={Establishing Markov equivalence in cyclic directed graphs},
  author={Claassen, Tom and Mooij, Joris M.},
  booktitle={Proceedings of the 39th Conference on Uncertainty in Artificial Intelligence},
  pages={433--442},
  year={2023}
}

@article{geirhos2020shortcut,
  title={Shortcut learning in deep neural networks},
  author={Geirhos, Robert and Jacobsen, J{\"o}rn-Henrik and Michaelis, Claudio and Zemel, Richard and Brendel, Wieland and Bethge, Matthias and Wichmann, Felix A.},
  journal={Nature Machine Intelligence},
  volume={2},
  number={11},
  pages={665--673},
  year={2020},
  publisher={Nature Publishing Group UK London}
}

@inproceedings{nagarajan2020understanding,
  title={Understanding the failure modes of out-of-distribution generalization},
  author={Nagarajan, Vaishnavh and Andreassen, Anders and Neyshabur, Behnam},
  booktitle={Proceedings of the 8th International Conference on Learning Representations},
  year={2020}
}

@ARTICLE{726791,
  author={LeCun, Yann and Bottou, L{\'e}on and Bengio, Yoshua and Haffner, Patrick},
  journal={Proceedings of the IEEE}, 
  title={Gradient-based learning applied to document recognition}, 
  year={1998},
  volume={86},
  number={11},
  pages={2278-2324}
}

@inproceedings{runge2020discovering,
  title={Discovering contemporaneous and lagged causal relations in autocorrelated nonlinear time series datasets},
  author={Runge, Jakob},
  booktitle={Proceedings of the 36th Conference on Uncertainty in Artificial Intelligence},
  pages={1388--1397},
  year={2020}
}

@book{nakayama2018introduction,
  title={Introduction to fluid mechanics},
  author={Nakayama, Yasuki},
  year={2018},
  publisher={Butterworth-Heinemann}
}

@article{lages2008composition,
  title={Composition law for polarizers},
  author={Lages, Jose and Giust, Remo and Vigoureux, Jean-Marie},
  journal={Physical Review A},
  volume={78},
  number={3},
  pages={033810},
  year={2008},
  publisher={APS}
}

@misc{fans2024tang,
  title = {Fan Basics: Air Flow, Static Pressure, and Impedance},
author = {Tang, Johann},
  howpublished = {\url{https://blog.orientalmotor.com/fan-basics-air-flow-static-pressure-impedance}},
  note = {Accessed: 2024-01-28}
}

@article{leckner2008ludwig,
  title={Ludwig's Applied Process Design for Chemical and Petrochemical Plants Volume 1, By A. Kayode Coker},
  author={Leckner, Philip},
  journal={Chemical Engineering},
  volume={115},
  number={7},
  pages={8--9},
  year={2008},
  publisher={Access Intelligence, LLC}
}

@inproceedings{koh2021wilds,
  title={Wilds: A benchmark of in-the-wild distribution shifts},
  author = {Koh, Pang W. and Sagawa, Shiori and Marklund, Henrik and Xie, Sang Michael and Zhang, Marvin and Balsubramani, Akshay and Hu, Weihua and Yasunaga, Michihiro and Phillips, Richard L. and Gao, Irena and Lee, Tony and David, Etienne and Stavness, Ian and Guo, Wei and Earnshaw, Berton and Haque, Imran and Beery, Sara M. and Leskovec, Jure and Kundaje, Anshul and Pierson, Emma and Levine, Sergey and Finn, Chelsea and Liang, Percy},
  booktitle={Proceedings of the 38th International Conference on Machine Learning},
  pages={5637-5664},
  year={2021}
}

@article{udrescu2020ai,
  title={AI Feynman: A physics-inspired method for symbolic regression},
  author={Udrescu, Silviu-Marian and Tegmark, Max},
  journal={Science Advances},
  volume={6},
  number={16},
  pages={eaay2631},
  year={2020},
  publisher={American Association for the Advancement of Science}
}

@inproceedings{
cava2021contemporary,
title={Contemporary Symbolic Regression Methods and their Relative Performance},
author={William La Cava and Patryk Orzechowski and Bogdan Burlacu and Fabricio Olivetti de Franca and Marco Virgolin and Ying Jin and Michael Kommenda and Jason H. Moore},
booktitle={35th Conference on Neural Information Processing Systems Datasets and Benchmarks Track (Round 1)},
year={2021},
}

@article{
schmidt2009distilling,
author = {Michael Schmidt  and Hod Lipson },
title = {Distilling Free-Form Natural Laws from Experimental Data},
journal = {Science},
volume = {324},
number = {5923},
pages = {81-85},
year = {2009}
}

@article{mann1947test,
  title={On a test of whether one of two random variables is stochastically larger than the other},
  author={Mann, Henry B. and Whitney, Donald R.},
  journal={The Annals of Mathematical Statistics},
  pages={50--60},
  year={1947},
  publisher={JSTOR}
}

@article{mooij2016distinguishing,
  title={Distinguishing cause from effect using observational data: methods and benchmarks},
  author={Mooij, Joris M. and Peters, Jonas and Janzing, Dominik and Zscheischler, Jakob and Sch{\"o}lkopf, Bernhard},
  journal={Journal of Machine Learning Research},
  volume={17},
  number={32},
  pages={1--102},
  year={2016}
}

\newpage

%%%%%%%%%%%%%%%%%%%%%%%%%%%%%%%%%%%%%%%%%%%%%%%%%%%%%%%%%%%%%%%%%%%%%%%%%%%%%%%%%
%%%%%%%%%%%%%%%%%%%%%%%%%%%%%%%%%%%%%%%%%%%%%%%%%%%%%%%%%%%%%%%%%%%%%%%%%%%%%%%%%
%%%%%%%%%%%%%%%%%%%%%%%%%%%%%%%%%%%%%%%%%%%%%%%%%%%%%%%%%%%%%%%%%%%%%%%%%%%%%%%%%
%% APPENDIX

\appendix
\renewcommand{\partname}{Appendix}
\renewcommand\thepart{\Roman{part}}
\renewcommand{\thesection}{\thepart.\arabic{section}}
\renewcommand*{\theHsection}{\thepart\the\value{section}}

%%%%%%%%%%%%%%%%%%%%%%%%%%%%%%%%%%%%%%%%%%%%%%%%%%%%%%%%%%%%%%%%%%%%%%%%%%%%%%%%%
%% SECTION
\part{Methods}
\label{apx:methods}
\setcounter{section}{0}

In this appendix, we provide a brief description of the experimental setup for each case study in \autoref{s:applications}, together with a link to the corresponding datasets at \href{https://causalchamber.org}{\nolinkurl{causalchamber.org}}, and to the corresponding code in the paper repository at \href{https://github.com/juangamella/causal-chamber-paper}{\nolinkurl{github.com/juangamella/causal-chamber-paper}}.

%%%%%%%%%%%%%%%%%%%%%%%%%%%%%%%%%%%%%%%%%%%%%%%%%%%%%%%%%%%%%%%%%%%%%%%%%%%%%%%%%%%%%%%
\section*{Case Study: Causal Discovery}

All the methods we evaluate in this case study return a directed acyclic graph (DAG) (or a set of them) as an estimate. Given a single DAG estimate $\hat{G} := (V, \hat{E})$ and a ground-truth graph $G^\star := (V, E^\star)$, we compute the precision $P$ and recall $R$ in terms of directed edge recovery as
\begin{align}
\label{eq:prec_recall}
P := \frac{\hat{E} \cap E^\star}{|\hat{E}|}\text{ and } R := \frac{\hat{E} \cap E^\star}{|E^\star|},    
\end{align}
where $\hat{E}$ and $E^\star$ are the sets of directed edges in $\hat{G}$ and $G^\star$, respectively.
If a method outputs several DAGs, we compute $P$ and $R$ for each element in this set.

\subsection*{Task a1: Observational Data}
\begin{tabular}{@{}rl@{}}
Dataset&\href{https://github.com/juangamella/causal-chamber/tree/master/datasets/lt_interventions_standard_v1}{\nolinkurl{lt_interventions_standard_v1}}\\
Code&\href{https://github.com/juangamella/causal-chamber-paper/blob/main/case_studies/causal_discovery_iid.ipynb}{\nolinkurl{case_studies/causal_discovery_iid.ipynb}}\\\\
\end{tabular}

As input for GES, we take $10000$ observations from a subset of the variables (see \autoref{fig:benchmarks_1}) in the \nolinkurl{uniform_reference} experiment of the \href{https://github.com/juangamella/causal-chamber/tree/master/datasets/lt_interventions_standard_v1}{\nolinkurl{lt_interventions_standard_v1}} dataset. As score for the algorithm, we use the BIC score with a Gaussian likelihood. GES returns the Markov equivalence class of the estimated data-generating graph, and for each graph we compute the corresponding precision and recall in the recovery of the edges in the ground-truth graph.

\subsection*{Task a2: Interventional Data}
\begin{tabular}{@{}rl@{}}
Dataset&\href{https://github.com/juangamella/causal-chamber/tree/master/datasets/lt_interventions_standard_v1}{\nolinkurl{lt_interventions_standard_v1}}\\
Code&\href{https://github.com/juangamella/causal-chamber-paper/blob/main/case_studies/causal_discovery_iid.ipynb}{\nolinkurl{case_studies/causal_discovery_iid.ipynb}}\\\\
\end{tabular}

We consider the same subset of variables as for task a1, taking data from several experiments in the \href{https://github.com/juangamella/causal-chamber/tree/master/datasets/lt_interventions_standard_v1}{\nolinkurl{lt_interventions_standard_v1}} dataset as input for UT-IGSP \parencite{squires2020permutation}. As ``observational data'', we take the 10000 observations from the \nolinkurl{uniform_reference} experiment; as ``interventional data'' we take 1000 observations from each experiment where the considered variables receive an intervention---see the accompanying code for the experiment names. For the conditional independence and invariance tests, we use the default Gaussian tests implemented in the Python package of UT-IGSP, and run the algorithm at different significance levels $(\alpha, \beta) \in [10^{-4}, 10^{-2}]^2$. We show the result for $\alpha=0.008, \beta=0.009$, which performs best in terms of both precision and recall \eqref{eq:prec_recall}.

\subsection*{Task a3: Time-series Data}
\begin{tabular}{@{}rl@{}}
Dataset&\href{https://github.com/juangamella/causal-chamber/tree/master/datasets/wt_walks_v1}{\nolinkurl{wt_walks_v1}}\\
Code&\href{https://github.com/juangamella/causal-chamber-paper/blob/main/case_studies/causal_discovery_time.ipynb}{\nolinkurl{case_studies/causal_discovery_time_series.ipynb}}\\\\
\end{tabular}

As input to PCMCI+ \parencite{runge2020discovering}, we take $10000$ observations from a subset of the variables in the \nolinkurl{actuators_random_walk_1} experiment of the \href{https://github.com/juangamella/causal-chamber/tree/master/datasets/wt_walks_v1}{\nolinkurl{wt_walks_v1}} dataset. We run the method with partial correlation tests at significance level $\alpha=1e-2$ and a maximum of $10$ lags. From the resulting estimate, we drop edges from a variable to itself and edges for which orientation conflicts arise. We compute the precision and recall \eqref{eq:prec_recall} for each of the two graphs in the resulting equivalence class.

\label{ss:methods_causal_discovery}

%%%%%%%%%%%%%%%%%%%%%%%%%%%%%%%%%%%%%%%%%%%%%%%%%%%%%%%%%%%%%%%%%%%%%%%%%%%%%%%%%%%%%%%
\section*{Case Study: Out-of-distribution generalization}
\subsection*{Task b1: Regression from sensor measurements}
\begin{tabular}{@{}rl@{}}
Dataset&\href{https://github.com/juangamella/causal-chamber/tree/main/datasets/lt_interventions_standard_v1}{\nolinkurl{lt_interventions_standard_v1}}\\
Code&\href{https://github.com/juangamella/causal-chamber-paper/blob/main/case_studies/ood_sensors.ipynb}{\nolinkurl{case_studies/ood_sensors.ipynb}}\\\\
\end{tabular}

We use the data from several experiments in the \href{https://github.com/juangamella/causal-chamber/tree/main/datasets/lt_interventions_standard_v1}{\nolinkurl{lt_interventions_standard_v1}} dataset. We begin by splitting the observations from the \nolinkurl{uniform_reference} experiment into a training set (100 observations) and a validation set (1000 observations, shown with $\emptyset$ in the spider plot of \autoref{fig:benchmarks_1}b1). As additional validation sets (1000 observations each), we select experiments where the variables $R,G,B,T^V_1,T^V_2,T^V_3,T^I_2$ and $\theta_1$ receive an intervention---see the accompanying code for the experiment names. These validation sets correspond to the additional axes in the spider plot of \autoref{fig:benchmarks_1}b1. On the training set, we fit linear models with intercept using ordinary least squares, with response $\tilde{I}_1$ and different sets of predictors: $\{R\},\{R,G\},\{R,G,B\},\{R,G,B,\tilde{V}_1\}$ and $\{\tilde{I}_2, \tilde{I}_3, \tilde{V}_1, \tilde{V}_2, \tilde{V}_3\}$. As a baseline, we consider the model that predicts the average of $\tilde{I}_1$ in the training set. For each resulting model, we compute the mean absolute error on each of the validation sets. The additional scatter plot in \autoref{fig:benchmarks_1}b1 corresponds to the pooled data across all validation sets.

\subsection*{Task b2: Regression from images}
\begin{tabular}{@{}rl@{}}
Dataset&\href{https://github.com/juangamella/causal-chamber/tree/main/datasets/lt_color_regression_v1}{\nolinkurl{lt_color_regression_v1}}\\
Code&\href{https://github.com/juangamella/causal-chamber-paper/blob/main/case_studies/ood_images.ipynb}{\nolinkurl{case_studies/ood_images.ipynb}}\\\\
\end{tabular}

We use the images from the \href{https://github.com/juangamella/causal-chamber/tree/main/datasets/lt_color_regression_v1}{\nolinkurl{lt_color_regression_v1}} datasets, at a size of $100 \times 100$ pixels. We split the data from the \nolinkurl{reference} experiment into a training and validation set (9000 and 500 observations, respectively). As additional validation sets, we take those arising from shifts in the distribution of the response $R,G,B$ (\nolinkurl{bright_colors} experiment) and from interventions on the parameters of the camera---see the accompanying code for the experiment names. We subsample each of the additional validation sets to a size of $500$ observations.
As a regression model, we employ a small LeNet-like convolutional neural network \parencite{726791}---see the code for more details. As a loss function, we use the mean-squared error in predicting the light-source settings $R,G,B$, which we minimize using stochastic gradient descent. We fit the model a total of 16 times, each with a different random initialization of the network weights. For each resulting model, we compute the mean absolute error on each validation set,
and plot the results in \autoref{fig:benchmarks_1}b2. As baseline, we consider the model that predicts the average of $R,G,B$ in the training set.

\subsection*{Task b3: Regression from impulse-response curves}
\begin{tabular}{@{}rl@{}}
Dataset&\href{https://github.com/juangamella/causal-chamber/tree/main/datasets/wt_intake_impulse_v1}{\nolinkurl{wt_intake_impulse_v1}}\\
Code&\href{https://github.com/juangamella/causal-chamber-paper/blob/main/case_studies/ood_impulses.ipynb}{\nolinkurl{case_studies/ood_impulses.ipynb}}\\\\
\end{tabular}

We use the data from several experiments in the \href{https://github.com/juangamella/causal-chamber/tree/main/datasets/wt_intake_impulse_v1}{\nolinkurl{wt_intake_impulse_v1}} dataset, corresponding to different settings of the exhaust load $L_\text{out}$ and oversampling rates $O_\text{dw}$ of the downwind barometer---see the accompanying code for the experiment names. We split the data from the \nolinkurl{load_out_0.5_osr_downwind_4} experiment into a training and validation set (4000 and 900 observations, respectively). As a regression model, we employ a multi-layer perceptron with an input layer of size 50 (the impulse length), an output layer of size 1, and two additional hidden layers with 200 neurons and ReLu activations. As a loss function, we use the mean-squared error in predicting the hatch position $H$, and train the model using stochastic gradient descent. We fit the model a total of 16 times, each with a different random initialization of the network weights. For each resulting model, we compute the mean absolute error on validation sets from the training distribution and the additional experiments. Each corresponds to the different axes in the spider plot of \autoref{fig:benchmarks_1}b3. As baseline, we consider the model that predicts the average of $H$ in the training set.

%%%%%%%%%%%%%%%%%%%%%%%%%%%%%%%%%%%%%%%%%%%%%%%%%%%%%%%%%%%%%%%%%%%%%%%%%%%%%%%%%%%%%%%
\section*{Case Study: Change point detection}
\begin{tabular}{@{}rl@{}}
Dataset&\href{https://github.com/juangamella/causal-chamber/tree/main/datasets/wt_changepoints_v1}{\nolinkurl{wt_changepoints_v1}}\\
Code&\href{https://github.com/juangamella/causal-chamber-paper/blob/main/case_studies/changepoints.ipynb}{\nolinkurl{case_studies/changepoints.ipynb}}\\\\
\end{tabular}

We take the data from the \nolinkurl{load_in_seed_9} experiment in the \href{https://github.com/juangamella/causal-chamber/tree/main/datasets/wt_changepoints_v1}{\nolinkurl{wt_changepoints_v1}} dataset, and apply the changeforest algorithm \parencite{londschien2023forests} to each of the time-series $L_\text{in}$, $\tilde{\omega}_\text{in}$, $\tilde{P}_\text{dw} - \tilde{P}_\text{amb}$, $\tilde{C}_\text{in}$, and $\tilde{M}$. For the algorithm, we use the ``random\_forest'' method and default hyperparameters---see the accompanying code for details. As ground truth for the changepoints (vertical gray lines in \autoref{fig:benchmarks_1}c), we take the time points where $L_\text{in}$ is set to a new level. In all datasets collected from the chambers, the column \nolinkurl{intervention} takes a value of 1 for the first measurement after an intervention on any of the chamber variables.

%%%%%%%%%%%%%%%%%%%%%%%%%%%%%%%%%%%%%%%%%%%%%%%%%%%%%%%%%%%%%%%%%%%%%%%%%%%%%%%%%%%%%%%
\section*{Case Study: Independent Component Analysis}
\subsection*{Task d1: Recovering light-source color}
\begin{tabular}{@{}rl@{}}
Dataset&\href{https://github.com/juangamella/causal-chamber/tree/main/datasets/lt_walks_v1}{\nolinkurl{lt_walks_v1}}\\
Code&\href{https://github.com/juangamella/causal-chamber-paper/blob/main/case_studies/ica.ipynb}{\nolinkurl{case_studies/ica.ipynb}}\\\\
\end{tabular}

We use the \nolinkurl{color_mix} experiment from the \href{https://github.com/juangamella/causal-chamber/tree/main/datasets/lt_walks_v1}{\nolinkurl{lt_walks_v1}} dataset. As input to the FastICA algorithm \parencite{hyvarinen1999fast} we take the light-intensity measurements $\tilde{I}_1, \tilde{I}_2, \tilde{I}_3$, $\tilde{V}_1, \tilde{V}_2$ and $\tilde{V}_3$, to which we first apply a whitening transformation. We run the algorithm with 6 components (sources). For each ground-truth source ($R,G,B$), we show the recovered signal with the highest Pearson correlation coefficient (in absolute value).

\subsection*{Task d2: Recovering fan loads and hatch position}
\begin{tabular}{@{}rl@{}}
Dataset&\href{https://github.com/juangamella/causal-chamber/tree/main/datasets/wt_walks_v1}{\nolinkurl{wt_walks_v1}}\\
Code&\href{https://github.com/juangamella/causal-chamber-paper/blob/main/case_studies/ica.ipynb}{\nolinkurl{case_studies/ica.ipynb}}\\\\
\end{tabular}

We use the \nolinkurl{loads_hatch_mix_slow} experiment from the \href{https://github.com/juangamella/causal-chamber/tree/main/datasets/wt_walks_v1}{\nolinkurl{wt_walks_v1}} dataset. As input to the FastICA algorithm \parencite{hyvarinen1999fast} we take the barometric pressure measurements $\tilde{P}_\text{dw}, \tilde{P}_\text{up},\tilde{P}_\text{amb}$ and $\tilde{P}_\text{int}$, to which we first apply a whitening transformation. We run the algorithm with 4 components (sources). For each ground-truth source ($L_\text{in},L_\text{out},H$), we show the recovered signal with the highest Pearson correlation coefficient (in absolute value).

\subsection*{Task d3: Recovering actuators from images}
\begin{tabular}{@{}rl@{}}
Dataset&\href{https://github.com/juangamella/causal-chamber/tree/main/datasets/lt_camera_walks_v1}{\nolinkurl{lt_camera_walks_v1}}\\
Code&\href{https://github.com/juangamella/causal-chamber-paper/blob/main/case_studies/ica.ipynb}{\nolinkurl{case_studies/ica.ipynb}}\\\\
\end{tabular}

As input to the FastICA algorithm \parencite{hyvarinen1999fast}, we use the images from the \nolinkurl{actuator_mix} experiment in the \href{https://github.com/juangamella/causal-chamber/tree/main/datasets/lt_camera_walks_v1}{\nolinkurl{lt_camera_walks_v1}} dataset, at a size of $50 \times 50$ pixels. We first flatten the images, so that each pixel becomes an input variable, applying a whitening transformation as an additional pre-processing step. We run the algorithm with 5 components (sources). For each ground-truth source ($R,G,B,\theta_1,\theta_2$), we show the recovered signal with the highest Pearson correlation coefficient (in absolute value).

%%%%%%%%%%%%%%%%%%%%%%%%%%%%%%%%%%%%%%%%%%%%%%%%%%%%%%%%%%%%%%%%%%%%%%%%%%%%%%%%%%%%%%%
\section*{Case Study: Symbolic Regression}
\begin{tabular}{@{}rl@{}}
Datasets&\href{https://github.com/juangamella/causal-chamber/tree/main/datasets/wt_bernoulli_v1}{\nolinkurl{wt_bernoulli_v1}},\href{https://github.com/juangamella/causal-chamber/tree/main/datasets/lt_malus_v1}{\nolinkurl{lt_malus_v1}}\\
Code&\href{https://github.com/juangamella/causal-chamber-paper/blob/main/case_studies/symbolic_regression.ipynb}{\nolinkurl{case_studies/symbolic_regression.ipynb}}\\\\
\end{tabular}

We use a pre-trained version of the model described in \cite{sr2022}---see the code for more details. For both tasks, we employ the same hyperparameters as in the demonstration provided by the authors\footnote{See the demonstration at \url{https://github.com/facebookresearch/symbolicregression/blob/main/Example.ipynb}} and run the algorithm with 5 different random initializations. As input for the first task, we randomly sample 1000 observations from the \nolinkurl{random_loads_intake} experiment in the \href{https://github.com/juangamella/causal-chamber/tree/main/datasets/wt_bernoulli_v1}{\nolinkurl{wt_bernoulli_v1}} dataset; for the second task we use the 1000 observations from the \nolinkurl{white_255} experiment in the \href{https://github.com/juangamella/causal-chamber/tree/main/datasets/lt_malus_v1}{\nolinkurl{lt_malus_v1}} dataset. We show the estimated expressions in \autoref{fig:benchmarks_2}d, rounding the constants to one decimal place.

\section*{Case Study: Mechanistic Models}
\begin{tabular}{@{}rl@{}}
Datasets&\href{https://github.com/juangamella/causal-chamber/tree/main/datasets/wt_test_v1}{\nolinkurl{wt_test_v1}}, \href{https://github.com/juangamella/causal-chamber/tree/main/datasets/lt_camera_test_v1}{\nolinkurl{lt_camera_test_v1}}\\
Code&\href{https://github.com/juangamella/causal-chamber-paper/blob/main/case_studies/mechanistic_models.ipynb}{\nolinkurl{case_studies/mechanistic_models.ipynb}}\\\\
\end{tabular}

We compare the output of some of the models defined in \autoref{apx:mechanistic_models} to actual measurements collected from the chamber. For the wind-tunnel models, we use the \nolinkurl{steps} experiment from the \href{https://github.com/juangamella/causal-chamber/tree/main/datasets/wt_test_v1}{\nolinkurl{wt_test_v1}}, setting their parameters to the values suggested in \cref{s:models_wt}. For the models of the image-capture process of the light tunnel, we take the images from the \nolinkurl{palette} experiment in the \href{https://github.com/juangamella/causal-chamber/tree/main/datasets/lt_camera_test_v1}{\nolinkurl{lt_camera_test_v1}} dataset and use the parameters suggested in \cref{ss:image_capture}. All models are implemented in the \href{https://github.com/juangamella/causal-chamber#mechanistic-models}{\nolinkurl{causalchamber}} Python package---see the accompanying code for examples.

%%%%%%%%%%%%%%%%%%%%%%%%%%%%%%%%%%%%%%%%%%%%%%%%%%%%%%%%%%%%%%%%%%%%%%%%%%%%%%%%%
%% APPENDIX
\newpage
\part{Chamber Variables}
\label{apx:chamber_variables}
\setcounter{section}{0}

This appendix documents the variables in each chamber, \ie the actuators, sensor parameters, and sensor measurements. A description of the physical effects between them can be found in \autoref{apx:physical_effects}.

\section*{Wind Tunnel}

\renewcommand*{\arraystretch}{1.4}
\begin{longtable}{@{}llclp{8.2cm}@{}}
\toprule
Variables  & Value range & Type & Column name & Description\\
\midrule
\endhead
\bottomrule

\caption{Description of the wind tunnel variables, including their symbol, value range, type, and the corresponding column name in the dataset files. Variables are categorized into different types: actuators (A) or sensor parameters (P), which can be directly manipulated, and sensor measurements (S), which cannot be directly manipulated.
For the value range, $n$:$m$ corresponds to the range $\{n,n+1,\ldots,m\}$ and $n$:$m$:$k$ to $\{n,n+k,n+2k,\ldots,m\}$.}
\endfoot
\bottomrule
\caption{Description of the wind tunnel variables, including their symbol, value range, type, and the corresponding column name in the dataset files. Variables are categorized into different types: actuators (A) or sensor parameters (P), which can be directly manipulated, and sensor measurements (S), which cannot be directly manipulated.
For the value range, $n$:$m$ corresponds to the range $\{n,n+1,\ldots,m\}$ and $n$:$m$:$k$ to $\{n,n+k,n+2k,\ldots,m\}$.}
\label{tab:wind_tunnel_vars}
\endlastfoot
%%%%%%%%%%%
$L_{\text{in}}, L_{\text{out}}$
& \begin{tabular}[t]{@{}l@{}}
$[0,1]$\\
(32-bit float)\\
\end{tabular}
& A & \begin{tabular}[t]{@{}l@{}}
\nolinkurl{load_in}\\[-1.2ex]
\nolinkurl{load_out}\\
\end{tabular} & The load of the fans, corresponding to the duty cycle of the pulse-width-modulation (PWM) signal that controls their speed. For more details, see the datasheet for the \nolinkurl{fan} component in \autoref{apx:datasheets}.\\

%%%%%%%%%%%
$\tilde{C}_{\text{in}}, \tilde{C}_{\text{out}}$
&\begin{tabular}[t]{@{}l@{}}
$[0,1023]$\\
(32-bit float)\\
\end{tabular}
& S & \begin{tabular}[t]{@{}l@{}}
\nolinkurl{current_in}\\[-1.2ex]
\nolinkurl{current_out}\\
\end{tabular} & The uncalibrated measurements of the electric current drawn by the fans. The calibrated measurements (in amperes) are given by
$$\tilde{C}_\text{in} \times \frac{\text{vref}(R_\text{in})}{1023 \times 5} \times 2.5 \;\text{ and }\; \tilde{C}_\text{out} \times \frac{\text{vref}(R_\text{out})}{1023 \times 5} \times 2.5,$$
where $\text{vref}(R_\text{in}), \text{vref}(R_\text{out})$ are the reference voltages of the corresponding sensors (see \autoref{tab:vrefs}).\\
%%%%%%%%%%%
$\tilde{\omega}_{\text{in}}, \tilde{\omega}_{\text{out}}$
& \begin{tabular}[t]{@{}l@{}}
$\geq 0$
(32-bit float)
\end{tabular}
& S & \nolinkurl{rpm_in}, \nolinkurl{rpm_out} & The speed of the fans in revolutions per minute.\\
%%%%%%%%%%%
$T_{\text{in}}, T_{\text{out}}$
& \{0,1\}
& P & \nolinkurl{res_in}, \nolinkurl{res_out} & The resolution of the tachometer timer that measures the elapsed time between successive revolutions of the fan, where $1$ corresponds to microseconds and $0$ to milliseconds. Choosing microseconds yields a higher resolution in the fan-speed measurement.\\
%%%%%%%%%%%
\begin{tabular}[t]{@{}l@{}}
$\tilde{P}_{\text{up}}, \tilde{P}_{\text{dw}}$\\
$\tilde{P}_{\text{amb}}, \tilde{P}_{\text{int}}$
\end{tabular}
& \begin{tabular}[t]{@{}l@{}}
$\geq 0$\\
(32-bit float)\\
\end{tabular}
& S & \begin{tabular}[t]{@{}l@{}}
\nolinkurl{pressure_upwind}\\[-1.2ex]
\nolinkurl{pressure_downwind}\\[-1.2ex]
\nolinkurl{pressure_ambient}\\[-1.2ex]
\nolinkurl{pressure_intake}\\
\end{tabular} & The barometric pressure, in pascals, as measured by the different barometers of the chamber. $\tilde{P}_{\text{int}}$ corresponds to the barometer placed at the tunnel intake. $\tilde{P}_{\text{amb}}$ is the ambient pressure measured by the outer barometer. $\tilde{P}_{\text{up}}$ and $\tilde{P}_{\text{dw}}$ correspond to the barometers inside the tunnel, which are placed facing into ($\tilde{P}_{\text{up}}$) and away ($\tilde{P}_{\text{dw}}$) from the airflow. Even at the same height and under the same conditions, the readings of the barometers have an offset resulting from the manufacturing process.\\
%%%%%%%%%%%
$A_1, A_2$
& 0:255
& A & \nolinkurl{pot_1}, \nolinkurl{pot_2} & The wiper position of the digital potentiometers in the speaker amplification circuit (\autoref{fig:chambers_diagram}c).\\
%%%%%%%%%%%
$\tilde{S}_1, \tilde{S}_2$
& \begin{tabular}[t]{@{}l@{}}
$[0,1023]$\\
(32-bit float)\\
\end{tabular}
& S & \begin{tabular}[t]{@{}l@{}}
\nolinkurl{signal_1}\\[-1.2ex]
\nolinkurl{signal_2}\\
\end{tabular} & The amplitude of the signal after the first potentiometer ($\tilde{S}_1$) and after the second potentiometer ($\tilde{S}_2$) in the speaker amplification circuit (\autoref{fig:chambers_diagram}c). The calibrated amplitudes, in volts, are given by
$$\tilde{S}_1 \times \frac{\text{vref}(R_1)}{1023} \;\text{ and }\; \tilde{S}_2 \times \frac{\text{vref}(R_2)}{1023},$$
where $\text{vref}(R_1), \text{vref}(R_2)$ are the reference voltages of the corresponding sensors (see \autoref{tab:vrefs}).\\
%%%%%%%%%%%
$H$
& 0:45:0.1
& A & \nolinkurl{hatch} & The position, in degrees, of the motor controlling the hatch that covers an additional opening of the wind tunnel. The position can be set in increments of $0.1^{\circ}$ degrees, with the hatch being closed at $0^{\circ}$ and fully open at $45^{\circ}$.\\
%%%%%%%%%%%
$\tilde{M}$
& \begin{tabular}[t]{@{}l@{}}
$[0,1023]$\\
(32-bit float)\\
\end{tabular}
& S & \nolinkurl{mic} & The uncalibrated measurement of the sound level captured by the microphone. The calibrated signal amplitude, in volts, is given by
$$\tilde{M} \times \frac{\text{vref}(R_M)}{1023},$$
where $\text{vref}(R_M)$ is the reference voltage of the corresponding sensor (see \autoref{tab:vrefs}).\\
%%%%%%%%%%%
\begin{tabular}[t]{@{}l@{}}
$R_{\text{in}},R_{\text{out}}$ \\
$R_1,R_2$\\
$R_M$
\end{tabular}
&$\{1.1, 2.56, 5\}$
& P & \begin{tabular}[t]{@{}l@{}}
\nolinkurl{v_in}, \nolinkurl{v_out}\\[-1.2ex]
\nolinkurl{v_1}, \nolinkurl{v_2}\\[-1.2ex]
\nolinkurl{v_mic}
\end{tabular} & The reference voltages, in volts, of the sensors used to measure the current ($\tilde{C}_{\text{in}}, \tilde{C}_{\text{out}}$), amplifier ($\tilde{S}_1, \tilde{S}_2$) and microphone signals ($\tilde{M}$), respectively. The values differ slightly from the actual reference voltages seen by the sensors, which can be found in \autoref{tab:vrefs}.\\
%%%%%%%%%%%
\begin{tabular}[t]{@{}l@{}}
$O_{\text{in}},O_{\text{out}}$ \\
$O_1,O_2,O_M$\\
$O_{\text{up}}, O_{\text{dw}}$\\
$O_{\text{amb}}, O_{\text{int}}$
\end{tabular}
& \{1,2,4,8\}
& P & \begin{tabular}[t]{@{}l@{}}
\nolinkurl{osr_in}, \nolinkurl{osr_out}\\[-1.2ex]
\nolinkurl{osr_1}, \nolinkurl{osr_2}\\[-1.2ex]
\nolinkurl{osr_mic}\\[-1.2ex]
\nolinkurl{osr_upwind}\\[-1.2ex]
\nolinkurl{osr_downwind}\\[-1.2ex]
\nolinkurl{osr_ambient}\\[-1.2ex]
\nolinkurl{osr_intake}
\end{tabular} & The oversampling rates when taking measurements of the current ($\tilde{C}_{\text{in}}, \tilde{C}_{\text{out}}$), amplifier ($\tilde{S}_1, \tilde{S}_2$) and microphone signals ($\tilde{M}$), and of air pressure at the different barometers ($\tilde{P}_{\text{up}}, \tilde{P}_{\text{dw}}, \tilde{P}_{\text{amb}}, \tilde{P}_{\text{int}}$). To avoid affecting the overall measurement time, the chambers always take the maximum number of readings (8) and discard excess readings accordingly.\\
\end{longtable}

\section*{Light Tunnel}

\renewcommand*{\arraystretch}{1.4}
\begin{longtable}{@{}llclp{9.5cm}@{}}
\toprule
Variables  & Value range & Type & Column name & Description\\
\midrule
\endhead
\bottomrule
\caption{Description of the light tunnel variables, including their symbol, value range, type, and the corresponding column name in the dataset files. Variables are categorized into different types: actuators (A) or sensor parameters (P), which can be directly manipulated, and sensor measurements (S), which cannot be directly manipulated. For the value range, $n$:$m$ corresponds to the range $\{n,n+1,\ldots,m\}$.}
\endfoot
\bottomrule
\caption{Description of the light tunnel variables, including their symbol, value range, type, and the corresponding column name in the dataset files. Variables are categorized into different types: actuators (A) or sensor parameters (P), which can be directly manipulated, and sensor measurements (S), which cannot be directly manipulated. For the value range, $n$:$m$ corresponds to the range $\{n,n+1,\ldots,m\}$.}
\label{tab:light_tunnel_vars}
\endlastfoot

%%%%%%%%%%%
$R, G, B$
& 0:255
& A & \begin{tabular}[t]{@{}l@{}}
\nolinkurl{red}\\[-1.2ex]
\nolinkurl{green}\\[-1.2ex]
\nolinkurl{blue}
\end{tabular} & The brightness setting of the red, green, and blue LEDs on the main light source. Higher values correspond to higher brightness.\\
%%%%%%%%%%%
$\tilde{C}$
& \begin{tabular}[t]{@{}l@{}}
$[0,1023]$\\
(32-bit float)\\
\end{tabular}
& S & \nolinkurl{current} & The uncalibrated measurement of the electric current drawn by the light source. The calibrated measurement (in amperes) is given by
$$\tilde{C} \times \frac{\text{vref}(R_C)}{1023 \times 5} \times 2.5,$$
where $\text{vref}(R_C)$ is the reference voltage of the corresponding sensor (see \autoref{tab:vrefs}).\\
%%%%%%%%%%%
$\tilde{I}_1, \tilde{I}_2, \tilde{I}_3$
& 0:$2^{16}-1$
& S & \begin{tabular}[t]{@{}l@{}}
\nolinkurl{ir_1}\\[-1.2ex]
\nolinkurl{ir_2}\\[-1.2ex]
\nolinkurl{ir_3}
\end{tabular} & The uncalibrated infrared measurement of the light-intensity sensors, placed before ($\tilde{I}_1$), between ($\tilde{I}_2$), and after ($\tilde{I}_3$) the polarizers, in reference to the light source.\\
%%%%%%%%%%%
\pagebreak
$\tilde{V}_1, \tilde{V}_2, \tilde{V}_3$
& 0:$2^{16}-1$
& S & \begin{tabular}[t]{@{}l@{}}
\nolinkurl{vis_1}\\[-1.2ex]
\nolinkurl{vis_2}\\[-1.2ex]
\nolinkurl{vis_3}
\end{tabular} & The uncalibrated visible-light measurement of the light-intensity sensors, placed before ($\tilde{V}_1$), between ($\tilde{V}_2$), and after ($\tilde{V}_3$) the polarizers, in reference to the light source.\\
%%%%%%%%%%%
$D^I_1, D^I_2, D^I_3$
& $\{0,1,2\}$
& P & \begin{tabular}[t]{@{}l@{}}
\nolinkurl{diode_ir_1}\\[-1.2ex]
\nolinkurl{diode_ir_2}\\[-1.2ex]
\nolinkurl{diode_ir_3}
\end{tabular} & The photodiodes used by the light sensors to take infrared measurements, corresponding to the small $(D^I_j=0)$, medium $(D^I_j=1)$ and large $(D^I_j=2)$ infrared photodiodes onboard.\\
%%%%%%%%%%%
$D^V_1, D^V_2, D^V_3$
& $\{0,1\}$
& P & \begin{tabular}[t]{@{}l@{}}
\nolinkurl{diode_vis_1}\\[-1.2ex]
\nolinkurl{diode_vis_2}\\[-1.2ex]
\nolinkurl{diode_vis_3}
\end{tabular} & The photodiodes used by the light sensors to take visible-light measurements, corresponding to the small $(D^V_j=0)$ and large $(D^V_j=1)$ photodiodes onboard.\\
%%%%%%%%%%%
\begin{tabular}[t]{@{}l@{}}
$T^I_1, T^I_2, T^I_3$\\
$T^V_1, T^V_2, T^V_3$\\
\end{tabular}
& $\{0,1,2,3\}$
& P & \begin{tabular}[t]{@{}l@{}}
\nolinkurl{t_ir_1/2/3}\\[-1.2ex]
\nolinkurl{t_vis_1/2/3}
\end{tabular} & The exposure time of the photodiode during a light-intensity measurement.\\
\begin{tabular}[t]{@{}l@{}}
$L_{11}, L_{12},$ \\
$L_{21}, L_{22},$ \\
$L_{31}, L_{32}$
\end{tabular}
& 0:255
%& \{0,1,...,255\}
& A & \begin{tabular}[t]{@{}l@{}}
\nolinkurl{l_11}, \nolinkurl{l_12}\\[-1.2ex]
\nolinkurl{l_21}, \nolinkurl{l_22}\\[-1.2ex]
\nolinkurl{l_31}, \nolinkurl{l_32}
\end{tabular} & The brightness of LEDs placed by each light-intensity sensor, where $L_{i1}, L_{i2}$ correspond to the two LEDs placed by the $i^\text{th}$ sensor ($\tilde{I}_i, \tilde{V}_i$). The value corresponds to the wiper position of a digital rheostat, which controls the current flowing to the LED. Higher values correspond to increased brightness.\\
%%%%%%%%%%%
$\theta_1, \theta_2$
& -180:180:0.1
& A & \begin{tabular}[t]{@{}l@{}}
\nolinkurl{pol_1}\\[-1.2ex]
\nolinkurl{pol_2}
\end{tabular} & The setting of the stepper motors that control the angle of the polarizer frames, which can be set in increments of 0.1 degrees. Because the mechanism functions without feedback, the actual angle of the polarizers may slightly deviate from this setting due to the imperfect coupling of the mechanical pieces.\\
%%%%%%%%%%%
$\tilde{\theta}_1, \tilde{\theta}_2$
& \begin{tabular}[t]{@{}l@{}}
$[0,1023]$\\
(32-bit float)\\
\end{tabular}
& S & \begin{tabular}[t]{@{}l@{}}
\nolinkurl{angle_1}\\[-1.2ex]
\nolinkurl{angle_2}
\end{tabular} & The position of the polarizers is encoded into a voltage using a rotary potentiometer, which is then read by the control computer to produce the measurements $\tilde{\theta_1}$ and $\tilde{\theta_2}$. Given these measurements, the calibrated angle measurement is given as
$$(\tilde{\theta}_j - Z_j) \times \frac{720}{1023} \times \frac{\text{vref}(R_j)}{5} \;\text{ degrees},$$
where $\text{vref}(R_j)$ is the reference voltage of the corresponding sensor (see \autoref{tab:vrefs}), and $Z_1 = 507, Z_2 = 512$ are the readings at angles $\theta_1 = \theta_2 = 0$ and reference voltages $R_1 = R_2 = 5$.\\
%%%%%%%%%%%
$R_C,R_1,R_2$
&$\{1.1, 2.56, 5\}$
& P & \begin{tabular}[t]{@{}l@{}}
\nolinkurl{v_c}\\[-1.2ex]
\nolinkurl{v_angle_1}\\[-1.2ex]
\nolinkurl{v_angle_2}
\end{tabular} & The reference voltage, in volts, of the sensors used to measure the current ($\tilde{C}$) and polarizer angles ($\tilde{\theta}_1, \tilde{\theta}_2$), respectively. The values differ slightly from the actual reference voltages used by the sensors, which can be found in \autoref{tab:vrefs}.\\
%%%%%%%%%%%
$O_C,O_1,O_2$
& \{1,2,4,8\}
& P & \begin{tabular}[t]{@{}l@{}}
\nolinkurl{osr_c}\\[-1.2ex]
\nolinkurl{osr_angle_1}\\[-1.2ex]
\nolinkurl{osr_angle_2}
\end{tabular} & The oversampling rate of the sensors used to measure the current ($\tilde{C}$) and polarizer angles ($\tilde{\theta}_1, \tilde{\theta}_2$), respectively. To avoid affecting the overall measurement time, the chambers always take the maximum number of readings (8) and discard excess readings accordingly.\\
%%%%%%%%%%%
$\tilde{\text{I}}\text{m}$
& \begin{tabular}[t]{@{}l@{}}
24-bit RGB \\
image
\end{tabular}
& S & \nolinkurl{im} & The color image produced by the camera, with a size of $2000 \times 2000$ pixels and 8 bits per color channel.\\
%%%%%%%%%%%
Ap
&  cf.\ \autoref{tab:camera_values}
& P & \nolinkurl{aperture} & The $f$-number describing the aperture of the camera lens, with higher values corresponding to smaller openings.\\
%%%%%%%%%%%
\pagebreak
ISO
& cf.\ \autoref{tab:camera_values}
& P & \nolinkurl{iso} & The gain of the camera sensor, where higher values correspond to higher sensitivity.\\
%%%%%%%%%%%
$T_\text{Im}$
& cf.\ \autoref{tab:camera_values}
& P & \nolinkurl{shutter_speed} & The shutter speed of the camera, \ie how many seconds the camera sensor is exposed when taking an image.\\
%\end{tabular}
\end{longtable}
%\end{table}

\begin{table}[H]\centering
\ra{1.3}
\begin{tabular}{@{}rcc@{}}\toprule
& Wind Tunnel & Light Tunnel\\
$R$ & $\text{vref}(R)$ & $\text{vref}(R)$\\
\midrule
1.1 & 1.16 & 1.09\\
2.56 & 2.65 & 2.55\\
5 & 5 & 5\\
\bottomrule
\end{tabular}
\caption{Calibrated reference voltages for the wind tunnel ($R = R_{\text{in}},R_{\text{out}}, R_1,R_2, R_M$) and the light tunnel ($R = R_C,R_1,R_2,$). They are estimated using the \nolinkurl{analog_calibration} experiments in the \href{https://github.com/juangamella/causal-chamber/tree/main/datasets/wt_test_v1}{\nolinkurl{wt_test_v1}} and \href{https://github.com/juangamella/causal-chamber/tree/main/datasets/lt_test_v1}{\nolinkurl{lt_test_v1}} datasets.}
\label{tab:vrefs}
\end{table}

\begin{table}[H]\centering
\ra{1.3}
\begin{tabular}{@{}rp{10cm}@{}}\toprule
Variable & Values\\
\midrule
Ap & 1.8, 2.0, 2.2, 2.5, 2.8, 3.2, 3.5, 4.0, 4.5, 5.0, 5.6, 6.3, 6.4, 7.1, 8.0, 9.0, 10, 11, 13, 14, 16, 18, 20, 22\\
ISO & 100, 125, 160, 200, 250, 320, 400, 500, 640, 800, 1000, 1250, 1600, 2000, 2500, 3200, 4000, 5000, 6400, 8000, 10000, 12800, 16000, 20000, 25600, 32000, 40000, 51200\\
$T_\text{Im}$ & 1/200, 1/250, 1/320, 1/400, 1/500, 1/640, 1/800, 1/1000, 1/1250, 1/1600, 1/2000, 1/2500, 1/3200, 1/4000\\
\bottomrule
\end{tabular}
\caption{Possible values for the camera parameters controlling the aperture of the lens (Ap), the sensor gain (ISO), and the shutter speed ($T_\text{Im}$).}
\label{tab:camera_values}
\end{table}
 
%%%%%%%%%%%%%%%%%%%%%%%%%%%%%%%%%%%%%%%%%%%%%%%%%%%%%%%%%%%%%%%%%%%%%%%%%%%%%%%%%
%% APPENDIX
\newpage
\part{Description of Physical Effects}
\label{apx:physical_effects}
\setcounter{section}{0}

We describe the effects that the chamber actuators and sensor parameters have on the measurements of each sensor. The effects correspond to the edges in the ground-truth graphs of the standard configurations in \autoref{fig:ground_truths}, where an edge $A \to B$ denotes that the actuator or sensor parameter $A$ has an effect on the sensor measurement $B$. We justify each effect in terms of the chamber design and the underlying physical principles, further characterizing each effect through additional experiments (\autoref{fig:loads_hatch_effects} to \autoref{fig:polarizer_model}). As discussed in \autoref{ss:ground_truth}, these effects can be understood as causal effects---see \autoref{apx:causal_ground_truth} for an in-depth discussion and additional validation through randomized experiments. Throughout this appendix, we use a short-hand notation for denoting multiple effects \eg $\{A_1,A_2\} \to \{B_1,B_2\}$ refers to the four edges $A_1 \to B_1, A_1 \to B_2, A_2 \to B_1$ and $A_2 \to B_2$.

The code to generate the plots in this section can be found in the Jupyter notebook \href{https://github.com/juangamella/causal-chamber-paper/blob/main/plots_appendices.ipynb}{\nolinkurl{case_studies/plots_appendices.ipynb}} in the paper repository \href{https://github.com/juangamella/causal-chamber-paper}{\nolinkurl{github.com/juangamella/causal-chamber-paper}}.

\section{Wind Tunnel}
\label{s:wt_ground_truth}

\begin{figure}[ht]
\centerline{
\includegraphics[width=159mm]{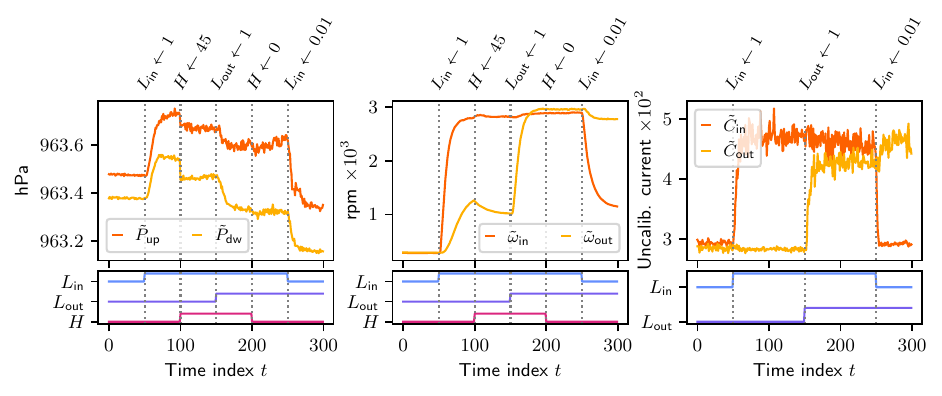}
}
\caption{Time-series data from different sensor measurements under varying inputs to the fan loads ($L_\textnormal{in}, L_\textnormal{out}$) and hatch position ($H$), shown on the bottom row. (left) The measurements $\tilde{P}_\text{up}, \tilde{P}_\text{dw}$ of the inner barometers are affected by both loads and the hatch position. (center) The fan speeds are affected by their corresponding load. Because the fans are placed in tandem, their speed is also affected by the load of the other fan, with the strength of the effect depending on the hatch position (\eg $t=100$). (right) Because they share the same power supply, the load of one fan has a slight effect on the current drawn by the other (\eg $t=250$). The data and corresponding experiment protocol can be found under the \nolinkurl{steps} experiment in the \href{https://github.com/juangamella/causal-chamber/tree/main/datasets/wt_test_v1}{\nolinkurl{wt_test_v1}} dataset.}
\label{fig:loads_hatch_effects}
\end{figure}%

\begin{figure}[ht]
\centerline{
\includegraphics[width=154mm]{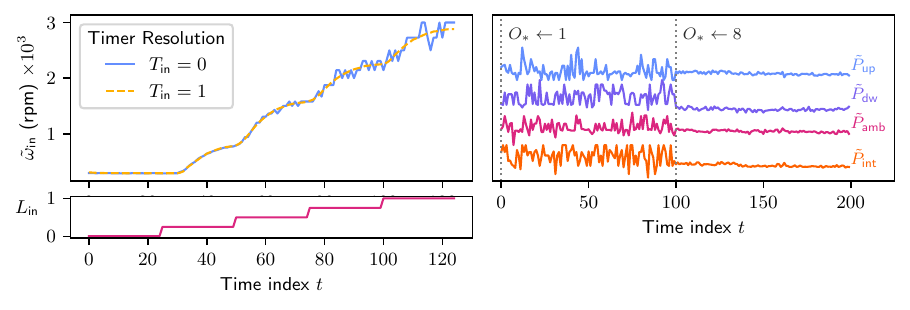}
}
\caption{(left) Measurements of the fan speed $\tilde{\omega}_\text{in}$ under a progressive increase in the fan load $L_\text{in}$ for different resolutions of the tachometer timer, with $(T_\text{in} = 0)$ corresponding to milliseconds and $(T_\text{in} = 1)$ to microseconds. (right) Effect of the barometer oversampling rate ($O_\text{up}, O_\text{dw}, O_\text{amb}, O_\text{int}$) on the resulting measurement ($\tilde{P}_\text{up}, \tilde{P}_\text{dw}, \tilde{P}_\text{amb}, \tilde{P}_\text{int}$). For all barometers, the oversampling rate is increased from 1 to 8 at $t=100$, while keeping all other chamber actuators and sensor parameters constant. The data and experimental setup for both plots can be found under the \nolinkurl{tach_resolution} and \nolinkurl{osr_barometers} experiments in the \href{https://github.com/juangamella/causal-chamber/tree/main/datasets/wt_test_v1}{\nolinkurl{wt_test_v1}} dataset.}
\label{fig:tachometer_oversampling}
\end{figure}%

\paragraph{$L_\textnormal{in} \to \{\tilde{C}_\textnormal{in}, \tilde{\omega}_\textnormal{in}\}, L_\textnormal{out} \to \{\tilde{C}_\textnormal{out}, \tilde{\omega}_\textnormal{out}\}$} The fan loads ($L_\textnormal{in}, L_\textnormal{out}$) define the duty cycle of the control signal sent to the fans, affecting their speed ($\tilde{\omega}_\textnormal{in}, \tilde{\omega}_\textnormal{out}$) and the current they consume ($\tilde{C}_\textnormal{in}, \tilde{C}_\textnormal{out}$). We provide mechanistic models describing these effects in \cref{ss:models_loads_fans}, comparing their outputs to real measurements in \autoref{fig:wt_models}. When the load is set to zero, the fan is completely powered off and no longer produces a tachometer signal (no corresponding observation shown in \autoref{fig:wt_models}); the resulting speed measurement corresponds to the last measured speed (see \autoref{fig:zero_load}).

\paragraph{$\{L_\textnormal{in}, L_\textnormal{out}, H\} \to \{\tilde{\omega}_\textnormal{in}, \tilde{\omega}_\textnormal{out}\}$} Each fan drives the flow of air through the wind tunnel, in turn making it easier or harder for the other fan to rotate. Thus, the speeds of the fans are coupled, and the strength of the coupling is affected by the hatch position (see \autoref{fig:loads_hatch_effects}, center).

\paragraph{$L_\textnormal{in} \to \tilde{C}_\textnormal{out}, L_\textnormal{out} \to \tilde{C}_\textnormal{in}$} Because they share the same power supply, the load of one fan has a small effect on the current drawn by the other (see \autoref{fig:loads_hatch_effects}, right). We believe these edges would not be present if each fan were driven by an independent power supply.

\paragraph{$T_\textnormal{in}\to\tilde{\omega}_\textnormal{in}, T_\textnormal{out}\to\tilde{\omega}_\textnormal{out}$} Changing the resolution ($T_\textnormal{in}, T_\textnormal{out}$) of the timers used in the fan tachometers also changes the resolution of the resulting speed measurement ($\tilde{\omega}_\textnormal{in}, \tilde{\omega}_\textnormal{out}$). Using a resolution of microseconds (\eg $T_\textnormal{in} = 1$) allows measuring smaller changes in the fan speed, with the difference being more noticeable at higher speeds (see \autoref{fig:tachometer_oversampling}, left).

\paragraph{$\{L_\textnormal{in}, L_\textnormal{out}, H\} \to \{\tilde{P}_\textnormal{up}, \tilde{P}_\textnormal{dw}\}$} By controlling the speed of the fans, the loads ($L_\textnormal{in}, L_\textnormal{out}$) affect the air pressure inside the wind tunnel, and thus the measurements of the two inner barometers ($\tilde{P}_\textnormal{up}, \tilde{P}_\textnormal{dw}$). By controlling the size of the additional opening to the outside, the hatch position ($H$) also affects the pressure inside the tunnel (\autoref{fig:loads_hatch_effects}). We provide mechanistic models describing these effects in \cref{s:models_wt}.

\paragraph{$\{L_\textnormal{in}, L_\textnormal{out}, H\} \to \tilde{P}_\textnormal{int}$} The fan loads and hatch position also affect the airflow and air pressure at the intake of the tunnel.

\paragraph{$O_\textnormal{up}\to\tilde{P}_\textnormal{up}, O_\textnormal{dw}\to\tilde{P}_\textnormal{dw}, O_\textnormal{amb}\to\tilde{P}_\textnormal{amb}, O_\textnormal{int}\to\tilde{P}_\textnormal{int}$} The oversampling rate determines how many barometric readings are averaged to produce a single measurement of the air pressure. A higher oversampling rate increases the precision of the barometers, reducing the noise in their measurements (see \autoref{fig:tachometer_oversampling}, right).

\paragraph{$\{R_\textnormal{out}, O_\textnormal{out}\}\to\tilde{C}_\textnormal{out}, \{R_\textnormal{in}, O_\textnormal{in}\}\to\tilde{C}_\textnormal{in}$} The current sensors encode their reading of the current as a voltage between 0 and 5 volts. This voltage is then read by the onboard computer, linearly mapping the range of $[0,R_\text{in}]$ ($[0,R_\text{out}]$) volts to $[0,1023]$. Thus, reducing the reference voltages $R_\text{in}, R_\text{out}$ increases the resolution of the current measurements (see \autoref{fig:voltage_sensors}, right), but can also cause the measurements to saturate if the voltage surpasses the reference voltage. As for the barometers, the oversampling rate ($O_\text{in}, O_\text{out}$) determines the number of readings that are averaged to produce a single measurement, thus affecting the precision of the measurements (see \autoref{fig:voltage_sensors}, left).

\paragraph{$A_1\to\{\tilde{S}_1, \tilde{S}_2\}, A_2\to\tilde{S}_2$} The signal fed to the speaker amplification circuit (\autoref{fig:chambers_diagram}c) is binary white noise with a period of $40 \mu s$: after every period the voltage is set to $0$ or $5$ volts with equal probability and is statistically independent of previous voltages. The two potentiometers act as controllable voltage dividers, affecting the amplitude of the signal at different points of the circuit. Thus, the first potentiometer ($A_1$) affects the measurements $\tilde{S}_1, \tilde{S}_2$, whereas the second potentiometer ($A_2$) only has an effect on the measurement $\tilde{S}_2$ (\autoref{fig:potentiometer_effects}).

\paragraph{$\{R_1, O_1\} \to\tilde{S}_1, \{R_2, O_2\} \to\tilde{S}_2$}  As for the current measurements, the reference voltages $R_1, R_2$ and oversampling rates $O_1, O_2$ have an effect on the resulting measurements $\tilde{S}_1, \tilde{S}_2$.

\paragraph{$A_1\to\tilde{M}$} The setting of the first potentiometer ($A_1$) controls the amplitude of the signal that is sent to the speaker, affecting the sound-level measurement $\tilde{M}$ taken by the microphone (\autoref{fig:potentiometer_effects}, left).

\paragraph{$\{L_\textnormal{in}, L_\textnormal{out}, H\}\to\tilde{M}$} The speed of the fans, controlled by the loads $L_\textnormal{in}, L_\textnormal{out}$, have an effect on the sound level $\tilde{M}$ measured by the microphone (see \autoref{fig:microphone_effects}, left). The position of the hatch ($H$) affects the amount of air flowing through the exhaust and over the microphone, also affecting its reading (\autoref{fig:microphone_effects}, center).

\paragraph{$\{R_M, O_M\} \to\tilde{M}$} As for the other sensors, the reference voltage affects the resolution of the microphone signal measurement and can induce saturation if the signal voltage rises above it. The oversampling rate $O_M$ determines the Nyquist frequency of the system and can introduce aliasing in the resulting signal (\autoref{fig:microphone_effects}, right).

\begin{figure}[H]
\centerline{
\includegraphics[width=179mm]{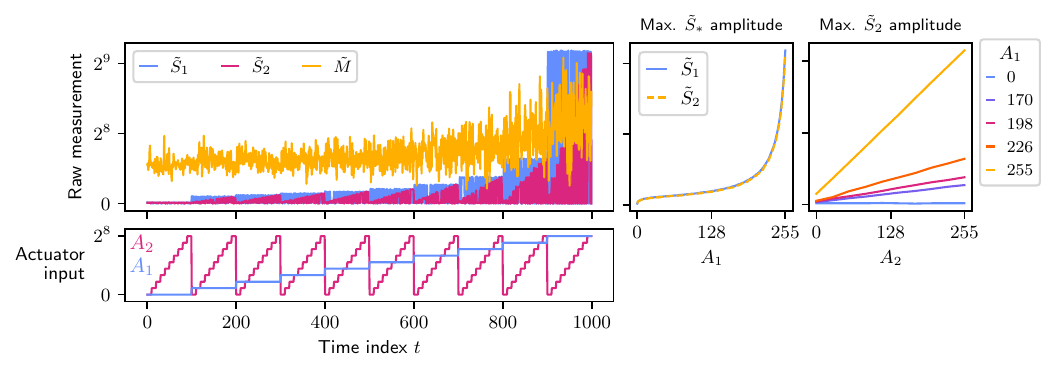}
}
\caption{(left) Time-series data of $\tilde{S}_1, \tilde{S}_2$ and the microphone output $\tilde{M}$, collected under the inputs $A_1, A_2$ to the amplification circuit potentiometers (bottom left), while all other actuators and sensor parameters are kept constant. The setting of the first potentiometer $A_1$ determines the maximum amplitude of the signals $\tilde{S}_1, \tilde{S}_2$ (center) and of the signal sent to the speaker, affecting the microphone output $\tilde{M}$ (left, gold). The setting of the second potentiometer $A_2$ only affects the maximum amplitude of $\tilde{S}_2$ (left), and the effect is linear for a fixed value of $A_1$ (right). The data and experimental setup for the plots can be found under the \nolinkurl{potis_coarse} and \nolinkurl{potis_fine} experiments in the \href{https://github.com/juangamella/causal-chamber/tree/main/datasets/wt_test_v1}{\nolinkurl{wt_test_v1}} dataset.}
\label{fig:potentiometer_effects}
\end{figure}%

\paragraph{$\tilde{P}_\text{dw} \to \{L_\text{in}, L_\text{out}\}$} 
In the \emph{pressure-control} configuration of the wind tunnel, the fan loads $L_\text{in}, L_\text{out}$ are set following a control mechanism to keep the pressure measured at the downwind barometer ($\tilde{P}_\text{dw}$) constant. In particular, the loads of the fans are set as
$$
L_\text{in} \leftarrow \begin{cases}
\min(1, u(t)) &\text{ if } u(t) > 0\\
0 &\text{ otherwise}
\end{cases} \quad \text{and} \quad L_\text{out} \leftarrow \begin{cases}
\min(1, -u(t)) &\text{ if } u(t) < 0\\
0 &\text{ otherwise}
\end{cases}
$$
where $u(t)$ is the controller output of the PID controller given by
$$u(t) := K_pe(t) + K_i\sum_{\tau=0}^te(t) + K_d(e(t) - e(t-1)),$$
where $e(t) := T - \tilde{P}_\text{dw}^t$ is the control error, \ie the difference in the pressure target $T$ and the measurement $\tilde{P}_\text{dw}^t$ of the downwind barometer at time point $t$. For the \href{https://github.com/juangamella/causal-chamber/tree/main/datasets/wt_pressure_control_v1}{\nolinkurl{wt_pressure_control_v1}} dataset, we set $(K_p, K_i, K_d) := (0.5, 0.1, 10^{-3})$ and set as pressure target $T$ the first measurement $\tilde{P}_\text{dw}^0$ taken by the downwind barometer after the chamber powers up. The control mechanism is executed internally by the chamber computer, producing a new output $u(t)$ at every time step. The related variables, \ie target $T$, control constants $K_p, K_i, K_d$, output $u(t)$ and error terms $e(t), \sum_{\tau=0}^te(t)$ and $(e(t) - e(t-1))$, are returned with each measurement as additional columns in datasets collected from the pressure-control configuration.

\begin{figure}[H]
\centerline{
\includegraphics[width=180mm]{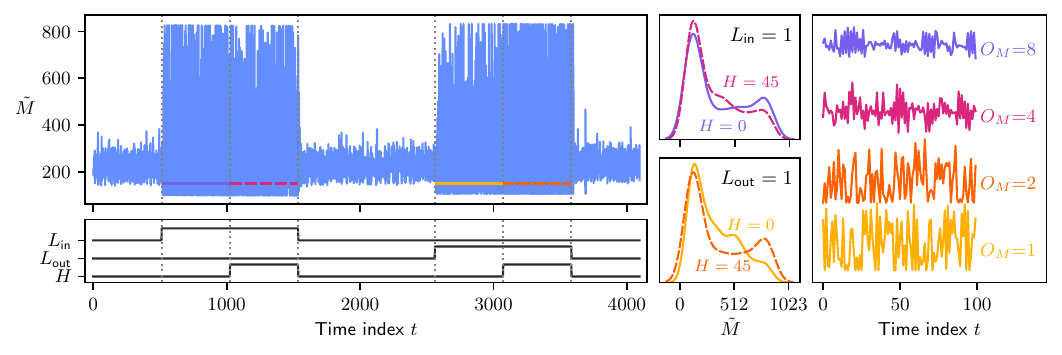}
}
\caption{(left) Time-series data of the microphone output $\tilde{M}$ collected under varying inputs to the fan loads $L_\text{in}, L_\text{out}$ and the hatch position $H$ (bottom left). All three actuators have an effect on the microphone output. (center) Marginal distribution of the microphone output $\tilde{M}$ for $L_\text{in}=1, L_\text{out}=0$ (top) and $L_\text{in}=0, L_\text{out}=1$ (bottom), corresponding to the regions marked in the time-series plot on the left. The hatch affects the amount of air that flows through the exhaust of the tunnel and over the microphone, affecting its readings. The effect of opening the hatch ($H=45$) is dependent on the loads of the fans, decreasing the airflow when $L_\text{in}=1, L_\text{out}=0$ and increasing it when $L_\text{in}=0, L_\text{out}=1$. (right) The oversampling rate determines the Nyquist frequency of the microphone system, introducing aliasing in the resulting signal. The data and experimental setup for the plots can be found in the \nolinkurl{mic_effects} experiment in the \href{https://github.com/juangamella/causal-chamber/tree/main/datasets/wt_test_v1}{\nolinkurl{wt_test_v1}} dataset.}
\label{fig:microphone_effects}
\end{figure}%

\begin{figure}[h]
\centerline{
\includegraphics[width=86mm]{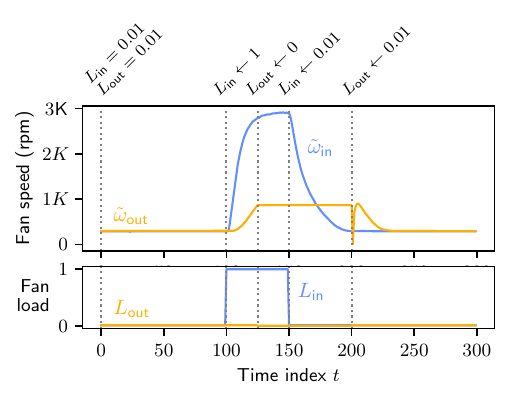}
}
\caption{By setting a fan load to zero (\eg $L_\text{out}\leftarrow 0$ at $t=125$), the fan is completely powered off and will decelerate until it stops rotating; however, at zero load, the fan no longer produces a tachometer signal, and the resulting speed measurement $\tilde{\omega}_\text{out}$ corresponds to the last measured speed (\eg $t \in [125,200]$). When powered up again ($t=200$) the fan always draws full power for an instant, quickly accelerating before returning to the level specified by the load. The data and experimental setup for the plot can be found under the \nolinkurl{zero_load} experiment in the \href{https://github.com/juangamella/causal-chamber/tree/main/datasets/wt_test_v1}{\nolinkurl{wt_test_v1}} dataset.}
\label{fig:zero_load}
\end{figure}%
\vfill

\section{Light Tunnel}
\label{s:lt_ground_truth}

\begin{figure}[H]
\centering
\includegraphics[width=167mm]{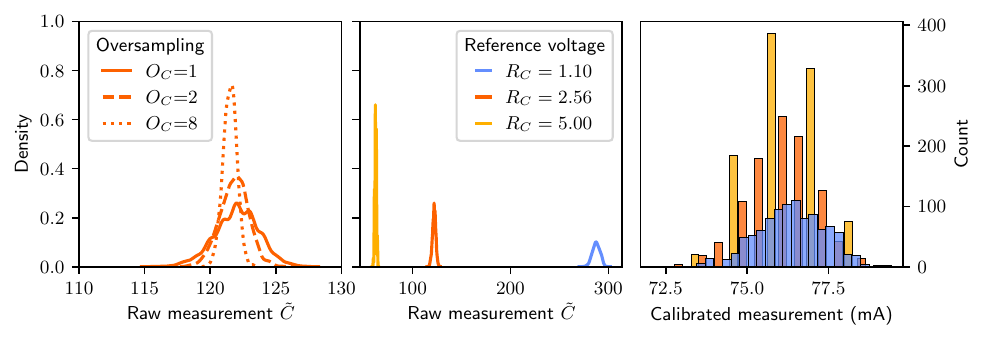}
\caption{Effect of the reference voltage $R_C$ and oversampling rate $O_C$ on the measurements $\tilde{C}$ of the current drawn by light source, when its brightness ($R,G,B$) is kept constant. The oversampling rate determines the amount of readings that are averaged to produce a single measurement, affecting the precision of the sensor (left). For a constant oversampling rate ($O_C=1$), changing the reference voltage results in a shift of the sensor output (middle) and an increase in the resolution of the calibrated measurement (right). While the plots show the current measurement $\tilde{C}$ in the light-tunnel, the same principles apply for the other sensors which encode their measurement as voltage ($\tilde{\theta}_1, \tilde{\theta}_2, \tilde{M}, \tilde{S}_1, \tilde{S}_2$) and for the wind-tunnel barometers ($\tilde{P}_\text{dw}, \tilde{P}_\text{up},\tilde{P}_\text{amb}, \tilde{P}_\text{int}$), which also allow for varying oversampling rates  ($O_\text{dw}, O_\text{up},O_\text{amb}, O_\text{int}$). The data corresponds to the \nolinkurl{current_sensor} experiment from the \href{https://github.com/juangamella/causal-chamber/tree/main/datasets/lt_test_v1}{\nolinkurl{lt_test_v1}} dataset.}
\label{fig:voltage_sensors}
\end{figure}

\begin{figure}[H]
\centering
\includegraphics[width=157mm]{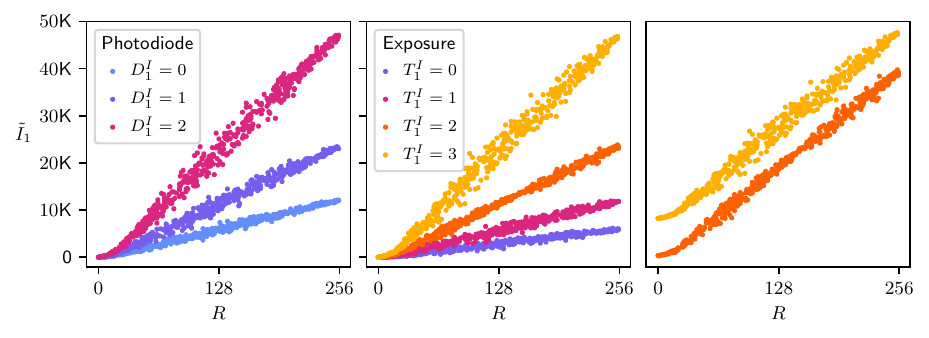}
\caption{Effect of the photodiode size ($D^I_1$) and exposure time ($T^I_1$) on the infrared measurements ($\tilde{I}_1$) of the light sensor  nearest the light source. We display the output $\tilde{I}_1$ of the sensor for varying brightness of the red LEDs on the light source ($R$), while keeping $G=B=0$ and all other actuators constant. Increasing the photodiode size (left) or exposure time (center) both increase the sensitivity of the sensor. Due to an interplay between the exposure time and the pulse-width-modulation frequency of the light source, changes in the exposure time also affect the conditional distribution of the measurements given the light-source intensity ($R,G,B$). For example, when comparing the measurements for $T^I_1=2$ and $T^I_1=3$ in the rightmost panel (same data as in the center panel, but vertically scaled for comparison), we can see a difference in the variance as a function of $R$. The effects on the visible-light measurements $\tilde{V}_1$ follow the same principles and are not shown. The data corresponds to the \nolinkurl{ir_sensors} experiment from the \href{https://github.com/juangamella/causal-chamber/tree/main/datasets/lt_test_v1}{\nolinkurl{lt_test_v1}} dataset.}
\label{fig:diodes_exposures}
\end{figure}

\paragraph{$\{R,G,B\} \to \{\tilde{I}_1, \tilde{I}_2, \tilde{I}_3, \tilde{V}_1, \tilde{V}_2, \tilde{V}_3\}$}
The brightness settings of the light-source colors $(R,G,B)$ affect the readings of all light-intensity sensors. The effect of each color is approximately linear with heteroscedastic noise (\autoref{fig:chambers_data}d), with the slope determined by its typical wavelength and the spectral sensitivity of the sensor (see \nolinkurl{light_source} and \nolinkurl{light_sensor} components in \autoref{apx:datasheets}).

\paragraph{$\{R,G,B\} \to \tilde{C}$} The brightness settings also affect the electric current drawn by the light source. The effect is again approximately linear with roughly the same slope for each channel (see \autoref{fig:chambers_data}d).

\begin{figure}[H]
\centering
\includegraphics[width=140mm]{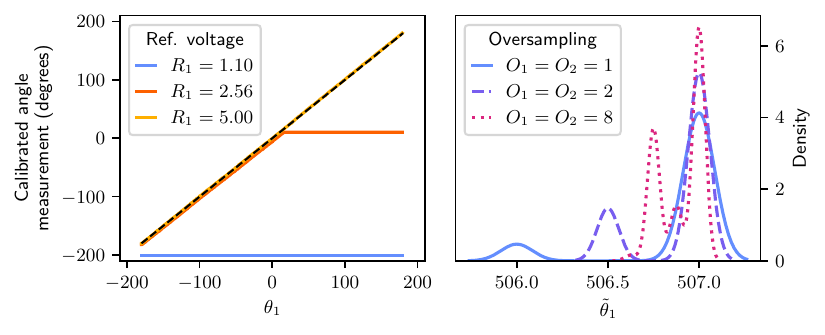}
\caption{Effect of the reference voltage ($R_1$) and oversampling rate ($O_1$) on the angle measurements of the first polarizer ($\tilde{\theta}_1$). (left) We show the calibrated measurements of the angle sensor (see \autoref{tab:light_tunnel_vars} for details) for different values of the motor setting $\theta_1$ and reference voltages $R_1$, while keeping the oversampling rate constant at $O_1=1$. Lowering the reference voltage $R_1$ results in a saturation of the sensor, being completely saturated (\ie returning a constant reading) at $R_1=1.1$. (right) Uncalibrated measurements $\tilde{\theta}_1$ for a constant motor setting $\theta_1 = 0$ and reference voltage $R_1=5$. The stochasticity of the measurement is due to electrical noise in the voltage measurement---increasing the oversampling rate $O_1$ increases the precision of the sensor. The same principles apply to the measurements $\tilde{\theta}_2$ of the second angle sensor. The data corresponds to the \nolinkurl{angle_sensors} experiment from the \href{https://github.com/juangamella/causal-chamber/tree/main/datasets/lt_test_v1}{\nolinkurl{lt_test_v1}} dataset.}
\label{fig:angle_sensors}
\end{figure}

\paragraph{$\{R_C, O_C\} \to \tilde{C}$} As for the wind-tunnel, the sensor measuring the current drawn by the light source encodes its reading into a voltage between 0 and 5 volts. This voltage is then read by the onboard computer, linearly mapping the range of $[0,R_C]$ volts to $[0,1023]$. Thus, reducing the reference voltage $R_C$ increases the resolution of the current measurements (see \autoref{fig:voltage_sensors}, right). The oversampling rate ($O_C$) determines the number of readings that are averaged to produce a single measurement, affecting its precision (\autoref{fig:voltage_sensors}, left).

\paragraph{$\{D^I_j, T^I_j\} \to \tilde{I}_j, \{D^V_j, T^V_j\} \to \tilde{V}_j$} The measurements of each light sensor are affected by the choice of photodiode ($D^I_j, D^V_j$) and gain ($T^I_j, T^V_j$) used to perform the infrared ($\tilde{I}_j$) and visible-light ($\tilde{V}_j$) readings. Larger photodiodes collect light over a larger area, increasing the sensitivity of the sensor (\autoref{fig:diodes_exposures}, left). Higher exposure times achieve a similar effect by increasing the reading duration, which also has an effect on the conditional distribution of the measurements given the light source brightness (\autoref{fig:diodes_exposures}, right). A timing mechanism ensures that changes in the exposure time do not affect the overall measurement time.

\paragraph{$\{\theta_j, R_j, O_j\} \to \tilde{\theta}_j$} The motor settings $\theta_1, \theta_2$ determine the positions of the polarizer frames. The positions are encoded into a voltage by means of rotary potentiometers, resulting in the measurements $\tilde{\theta}_1, \tilde{\theta}_2$. Changes in the reference voltages $R_1, R_2$ affect the resolution of the measurements, but can also cause the sensor to saturate (see \autoref{fig:angle_sensors}, left). The oversampling rates ($O_1, O_2$) determine the number of readings that are averaged to produce a single measurement, affecting its precision (\autoref{fig:angle_sensors}, right).

\paragraph{$\{\theta_1, \theta_2\} \to \{\tilde{I}_3, \tilde{V}_3, \tilde{\textmd{I}}\textmd{m}\}$} The position of the polarizers affects the intensity and spectral composition (\ie color) of the light passing through them, affecting the readings ($\tilde{I}_3, \tilde{V}_3$) of the third light sensor (see \autoref{fig:chambers_data}c and \autoref{fig:polarizer_model}) and the images ($\tilde{\textmd{I}}\textmd{m}$) captured by the camera (see \autoref{fig:chambers_data}e). The effect is described by Malus' law---see \cref{ss:models_malus} for more details.

\paragraph{$\{\textmd{Ap}, \textmd{ISO}, T\} \to \tilde{\textmd{I}}\textmd{m}$} The camera parameters affect the brightness of the captured images while introducing a variety of side effects (\autoref{fig:chambers_data}e). For example, a higher gain (ISO) increases the noise, while the aperture (Ap) changes the depth of field, introducing blur.

\paragraph{$\{L_{j1}, L_{j1}\} \to \tilde{I}_j, \tilde{V}_j$} Besides the light source, two additional LEDs placed by each light sensor (see \autoref{fig:chambers_diagram}d) have an effect on its reading. To avoid affecting the measurements of the other light sensors, the LEDs only turn on when their corresponding sensor is taking a measurement. Their brightness is controlled by means of digital rheostats that control the current flowing to each LED. The settings $L_{j1}, L_{j2}$ correspond to the wiper position of each rheostat. The relationship between the setting and the intensity readings follows an exponential function (\autoref{fig:chambers_data}b), resulting from the LED's voltage-current characteristic, luminosity response and typical wavelength, and the spectral sensitivity of the sensor. The brightness of the LEDs is much lower than that of the light source (\autoref{fig:leds_conditioning}). See the \nolinkurl{led} and \nolinkurl{light_sensor} components in \autoref{apx:datasheets} for the corresponding datasheets.
\\

\begin{figure}[H]
\centering
\includegraphics[width=140mm]{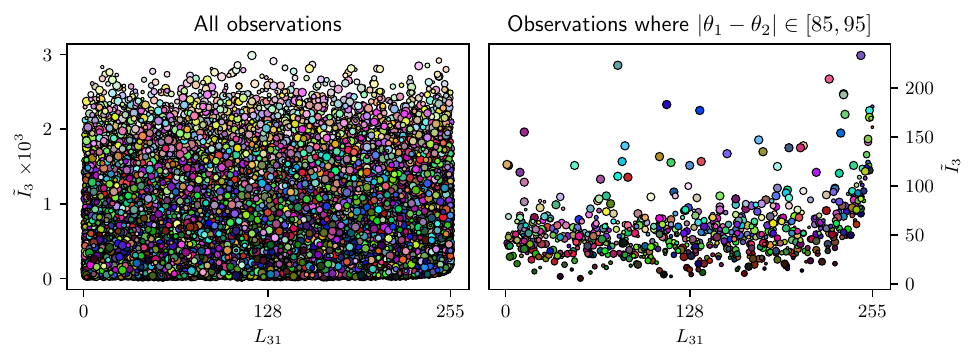}
\caption{Effect of the LED brightness settings ($L_{31}, L_{32}$) on the reading $\tilde{I}_3$ of the third light sensor, which is separated from the light source by both polarizers. The light source colors $R,G,B$ and brightness settings $L_{11}, \ldots L_{32}$ are sampled independently and uniformly at random from the range $\{0,\ldots,255\}$, and the polarizer angles $\theta_1, \theta_2$ from $[-15,105]$. In both plots, the color corresponds to $R,G,B$, and the size to $L_{32}$. Because the brightness of the LEDs is very small when compared to that of the light source, their effect on the intensity reading $\tilde{I}_3$ is barely noticeable when looking at all observations (left), but becomes apparent, for example, when the polarizers are close to orthogonal (right) and block most of the light emanating from the light source. The data corresponds to the \nolinkurl{actuators_white} experiment from the \href{https://github.com/juangamella/causal-chamber/tree/main/datasets/lt_walks_v1}{\nolinkurl{lt_walks_v1}} dataset.}
\label{fig:leds_conditioning}
\end{figure}

%%%%%%%%%%%%%%%%%%%%%%%%%%%%%%%%%%%%%%%%%%%%%%%%%%%%%%%%%%%%%%%%%%%%%%%%%%%%%%%%
%% APPENDIX
\newpage
\part{Mechanistic Models of The Chambers}
\label{apx:mechanistic_models}

\setcounter{section}{0}

This appendix provides mechanistic models describing several effects and processes in the causal chambers. Each model is labeled with a letter and a number (\eg B1, C2); different letters correspond to different modeled quantities, and higher numbers mean increased fidelity. A Python implementation of each model can be found in \href{https://github.com/juangamella/causal-chamber}{\nolinkurl{github.com/juangamella/causal-chamber}}, and is accessible through the \nolinkurl{causalchamber} package; examples and the code to generate the plots in this section can be found in the Jupyter notebook \href{https://github.com/juangamella/causal-chamber-paper/blob/main/case_studies/mechanistic_models.ipynb}{\nolinkurl{case_studies/mechanistic_models.ipynb}} in the paper repository \href{https://github.com/juangamella/causal-chamber-paper}{\nolinkurl{github.com/juangamella/causal-chamber-paper}}.

Excluding models D1 and F1--F3, the models in this section are agnostic to phenomena arising from the sensors that measure the modeled physical quantities, such as measurement and quantization noise, saturation, or non-linearities in their response. For this reason, we denote the modeled quantities without a tilde (\eg $\omega_\text{in}, P_\text{dw}, I_3)$ to differentiate them from the corresponding sensor measurements (\eg $\tilde{\omega}_\text{in}, \tilde{P}_\text{dw}, \tilde{I}_\text{3})$.

%%%%%%%%%%%%%%%%%%%%%%%%%%%%%5
%% SECTION
\section{Wind Tunnel}
\label{s:models_wt}

We provide models describing the effect of the fan load on the speed of the fans and the drawn current (\cref{ss:models_loads_fans}); the effect of the fan loads and hatch position on the reading of the downwind barometer (\cref{ss:models_downwind}); and the difference between the readings of the up- and downwind barometers, as described by Bernoulli's principle (\cref{ss:models_bernoulli}).

\begin{figure}[h]
\centerline{
\includegraphics[width=180mm]{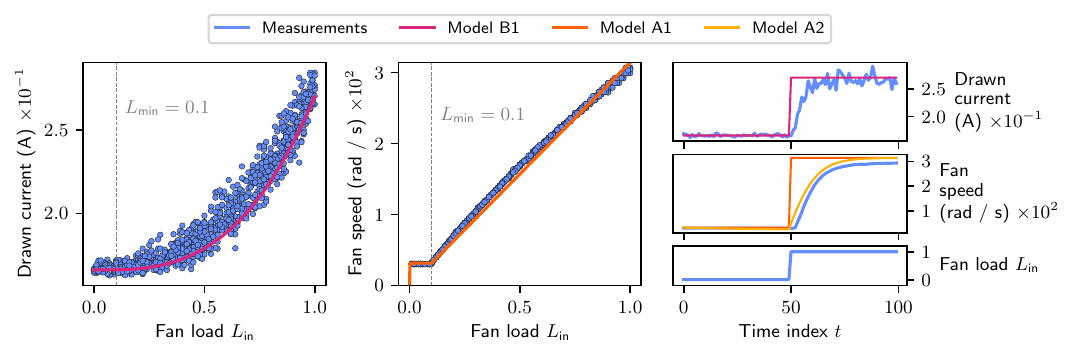}
}
\caption{Comparing model predictions with real measurements. (left and center): models 
B1 and A1 of the drawn current and speed of the fans, respectively, as a function of the load $L_\text{in}$, compared to steady-state measurements from the \href{https://github.com/juangamella/causal-chamber/tree/main/datasets/wt_bernoulli_v1}{\nolinkurl{wt_bernoulli_v1}} dataset. When the load is set to zero, the fan is completely powered off and no longer produces a tachometer signal (not shown)---see also \autoref{fig:zero_load}.
(right) Output of the models B1, A1 and A2 for time-series data from the \href{https://github.com/juangamella/causal-chamber/tree/main/datasets/wt_test_v1}{\nolinkurl{wt_test_v1}} dataset. The parameters of all models are set as in \cref{ss:models_loads_fans}.}
\label{fig:wt_models}
\end{figure}%

The fans used in the wind tunnel are high-speed fans for industrial cooling applications. Their speed is controlled via pulse-width-modulation (PWM), where the pulse-width, or duty cycle, corresponds to the fan loads $L_\text{in}$ and $L_\text{out}$ of the wind tunnel. Additional details and technical specifications of the fans can be found in the datasheets of \autoref{apx:datasheets}, under the \nolinkurl{fan} component. As we derive the different models of this section, we will refer to these datasheets for the values of certain model parameters.

\subsection{Effect of the fan load on fan speed and drawn current}
\label{ss:models_loads_fans}

Two important aspects of the fans' design are used throughout the models of the fan speed (A1, A2) and drawn current (B1): first, the fans are designed so their steady-state speed scales broadly linearly with the load. At full load, the fans turn at a maximum speed $\omega_\text{max}$, which is specified by the manufacturer as $314.16 \text{ rad}/\text{s}$ (3000 RPM). Second, due to their intended application, unless completely powered off (\ie $L_\text{in},L_\text{out} = 0$) the fans never operate below a certain speed, corresponding to a minimum effective load which we denote by $L_\text{min}$. This value is specified by the manufacturer to be around $0.2$, but our experiments show it to be closer to $0.1$ (see \autoref{fig:wt_models}, center).
    
\subsubsection*{Model A1}

We model the steady-state speed $\omega$ of the fan under a load $L$ as
\begin{align}
  \omega = \begin{cases}
      \max(L, L_\text{min})\omega_\text{max} &\text{if } L > 0\\
      0 &\text{if } L = 0
  \end{cases}
\end{align}
where $\Theta := (\omega_\text{max}, L_\text{min})$ are the parameters of the model. The first two correspond to the maximum speed of the fan and the minimum effective load. From the datasheets and our experiments, we can set $L_\text{min}= 0.1$ and $\omega_\text{max} = 314.16 \text{ rad}/\text{s}$ (see \autoref{fig:wt_models}, left).

\subsubsection*{Model B1} We model the effect of fan load on its drawn current through the affinity laws \parencite[\eg][Sec. 9.3.5]{nakayama2018introduction}, which state that the power $W$ consumed by a fan is proportional to the cube of its speed $\omega$, that is, $W = W_\text{max} (\omega / \omega_\text{max})^3$, where $W_\text{max}$ is the power consumed by the fan at its maximum speed $\omega_\text{max}$. Because the fans operate at a constant voltage, we can rewrite the above expression in terms of the drawn current $C$ as $C = C_\text{max} (\omega / \omega_\text{max})^3$, where $C_\text{max}$ is the current drawn at maximum speed. Accounting for the no-load current $C_\text{min}$ of the motor, and combining the above with model A1 of the fan speed, we arrive at the model
\begin{align}
  C = \begin{cases}
      C_\text{min} + \max(L_\text{min},L)^3 (C_\text{max} - C_\text{min}) &\text{if } L > 0\\
      C_\text{min} &\text{if } L = 0
  \end{cases}
\end{align}
where $\Theta := (L_\text{min}, C_\text{min}, C_\text{max})$ are the parameters of the model. As before, we can set $L_\text{min}=0.1$; from the datasheet, we can set the nominal current of the fan to $C_\text{max}=0.26$ A, and empirically determine the no-load current to be approximately $C_\text{min}=0.166$ A.

\subsubsection*{Model A2}

To model the dynamics of the fan speed beyond the steady state, we express the change in the speed through the torque-balance equation
\begin{align}
    d\omega = \frac{1}{I} (\tau(L) - K \omega^2)dt,
\end{align}
where $I$ is the fan's moment of inertia, $\tau(L)$ is the torque applied by the fan motor for a given load $L$, and $K\omega^2$ is the opposing torque due to the drag experienced by the fan. Here, $K$ is a constant that depends on the geometry of the blades and the density of the air \parencite[\eg][Sec. 9.3.5]{nakayama2018introduction}. These constitute the parameters $\Theta := (I, \tau(L), K)$ of our model, which we approximate as follows:

\begin{itemize}
    \item We model the fan as a solid cylinder, resulting in a moment of inertia $I = \frac{1}{2}mr^2$, where $m,r$ are the mass and radius of the disc, respectively. The fan has a radius of $r=0.059$m, and we estimate its mass as $m=0.02$kg, resulting in $I = 3.48 \times 10^{-5} \text{kg}/\text{m}^2$.
    \item We approximate the load-to-torque function as        
    $$\tau(L) := \begin{cases}
        T(\max(L_\text{min},L)^3 (C_\text{max} - C_\text{min})) &\text{if } L > 0\\
        0 &\text{if } L = 0
    \end{cases}$$    
    where $T$ is the torque constant of the motor and the remaining terms correspond to the drawn current as described by model B1 (after substracting the no-load current $C_\text{min}$). In \autoref{fig:wt_models} and \autoref{fig:benchmarks_2}f, we set $T = 0.05 \text{Nm} / \text{A}$.
    \item To obtain a value for the constant $K$ in our model, we solve the steady-state equation $d\omega = \frac{1}{I}(\tau(1) - K \omega_\text{max}^2) = 0$ with $I$ and $\tau(L)$ as above, resulting in $K=5.26 \times 10^{-8}$.
\end{itemize}
In \autoref{fig:wt_models} we compare the output of the models A1 and A2 to real measurements collected from the chamber. An important source of misspecification is that we model the intake and exhaust fans independently, whereas in the tunnel, their speeds affect each other (see \autoref{fig:benchmarks_2}f).

\subsection{Effect of the fan loads and hatch position on the downwind barometer}
\label{ss:models_downwind}

In this section, we provide models describing the effect of the fan loads and hatch position on the static pressure inside the wind tunnel, corresponding to the measurement $\tilde{P}_\text{dw}$ of the downwind barometer. Models C1--C3 relate the pressure to the fan speeds $\omega_\text{in}$ and hatch position $H$. We can combine them with models A1 or A2 of the fan speeds to simulate the barometer reading as a function of the fan loads ($L_\text{in}, L_\text{out}$), as we do in \autoref{fig:benchmarks_2}f.

To model the complex dynamics of the airflow through the wind tunnel, models C1--C3 make some simplifying assumptions. In first place, we assume that the change in static pressure can be simply computed as the difference of the static pressure produced by each fan. Furthermore, we model each fan independently, ignoring mutual effects between them.

\subsubsection*{Model C1} As a first approach, we model the static pressure $P_\text{dw}$ inside the tunnel as
\begin{align}
\label{eq:model_1_pressure}
    P_\text{dw} = P_\text{amb} + S_\text{max}\left(\frac{\omega_\text{in}}{\omega_\text{max}}\right)^2 - S_\text{max}\left(\frac{\omega_\text{out}}{\omega_\text{max}}\right)^2,
\end{align}
where $P_\text{amb}$ is the static pressure outside the wind tunnel (corresponding to the barometer reading $\tilde{P}_\text{amb}$). The other two terms on the RHS correspond to the static pressures produced by the intake and exhaust fans, resulting from the the affinity laws \parencite[\eg][chapter 5.14]{leckner2008ludwig}. The parameters of our model are $\Theta := (S_\text{max}, \omega_\text{max})$, corresponding to the maximum static pressure and maximum speed of the fan. From the technical specifications, we can set these values to $74.82$ Pa and $314.16 \text{ rad}/\text{s}$, respectively.

\subsubsection*{Model C2} Model C1 assumes that the fans always produce their maximum static pressure $S_\text{max}\left(\frac{\omega}{\omega_\text{max}}\right)^2$ at a given speed $\omega$. In reality, there is a trade-off between the airflow and static pressure produced by the fan, dictated by the system's resistance to the flow of air, which is called \emph{impedance}. As an illustration, a fan blowing into an enclosure with no other openings produces no airflow but maximum static pressure, whereas a fan at the boundary of two completely open enclosures produces its maximum airflow but no static pressure \parencite{fans2024tang}. To account for this effect, we model the pressure $P$ as
$$
    P_\text{dw} = P_\text{amb} + S_Z(\omega_\text{in}) - S_Z(\omega_\text{out}),
$$
where $S_Z(\omega)$ is the static pressure produced by the fan at speed $\omega$ in a system with impedance $Z$. We compute $S_Z(\omega)$ as the intersection of the impedance curve $S=ZQ^2$ with the pressure-airflow characteristic (PQ-curve) of the fan at speed $\omega$ \parencite[see, \eg][]{fans2024tang}. Because the PQ-curve of our fans is not available, we approximate them as the linear relation
$$S = \left(\frac{\omega_\text{in}}{\omega_\text{max}}\right)^2S_\text{max} - \left(\frac{\omega_\text{in}}{\omega_\text{max}}\right)\frac{S_\text{max}}{Q_\text{max}}Q,$$
where $Q_\text{max}$ is the maximum airflow of the fan. The impedance $Z \in [0,\infty]$ is a result of the complex dynamics of the air as it flows through the wind tunnel, and we cannot directly calculate it. For a more intuitive interpretation, we can express it in terms of the ratio $r$ of the maximum airflow that the fans produce when turning at full speed, yielding $Z := \frac{S_\text{max}}{Q_\text{max}^2}\left(\frac{1-r}{r^2}\right)$ for $r \in [0,1]$. Under this parametrization, our model becomes
\begin{align}
\label{eq:model_pressure_2}
    P_\text{dw} = P_\text{amb} + S_r(\omega_\text{in}) - S_r(\omega_\text{out}),
\end{align}
with parameters $\Theta := (S_\text{max}, \omega_\text{max}, Q_\text{max}, r)$. Following the fan's specifications, we set the maximum airflow to $Q_\text{max} = 0.052 \text{m}^3/\text{s}$. For the simulation results in \autoref{fig:benchmarks_2}f, we set $r=0.7$.

\subsubsection*{Model C3} So far. we have not considered the effect of the hatch position ($H$) on the pressure inside the tunnel. Building on Model C2, we model the effect of the hatch as a change in the impedance of the system. Opening the hatch lowers the system impedance and, thus, the static pressure produced by the fans. As in \eqref{eq:model_pressure_2}, we model the pressure as
\begin{align}
    P_\text{dw} = P_\text{amb} + S_{r(H)}(\omega_\text{in}) - S_{r(H)}(\omega_\text{out}),
\end{align}
except now the maximum airflow ratio is given by the function $r(H) = \min(1, r_0 + \beta \frac{H}{45})$, where $H \in [0,45]$, $r_0 \in [0,1]$ is the ratio when the hatch is closed ($H=0$), and $\beta > 0$ is the effect of the hatch, which we model as linear. Thus, the parameters of the model become $\Theta := (S_\text{max}, \omega_\text{max}, Q_\text{max}, r_0, \beta)$. For the simulation results in \autoref{fig:benchmarks_2}f, we set $r_0=0.7$, $\beta=0.15$, and all other parameters as for the previous models.

\subsection{Relationship between the up- and downwind barometers}
\label{ss:models_bernoulli}

Due to their positioning, the pair of barometers inside the wind tunnel function as a pitot tube, where the upwind barometer ($\tilde{P}_\text{up}$) measures the stagnation or total pressure, and the downwind barometer ($\tilde{P}_\text{dw}$) measures the static pressure. Ignoring measurement noise, the readings of both barometers can be related by Bernoulli's principle:

\begin{align}
\label{eq:bernoulli}
    \tilde{P}_\text{up} = \tilde{P}_\text{dw} + \frac{1}{2}\rho v^2 + \Delta.  
\end{align}
Above, $\rho$ is the density of the air, $v$ is the speed of the airflow and $\Delta$ is the offset between the readings of the barometers due to manufacturing differences, which we empirically determine to be around $\Delta = 7.1$ Pa. By solving for $v$, we can use \eqref{eq:bernoulli} to estimate the speed of the airflow through the tunnel. Model D1 relates the difference in the barometer readings to the speed of the intake fan.

\subsubsection*{Model D1}

To express the speed $v$ of the airflow as a function of the intake fan speed, we again rely on the affinity laws, which state that the airflow $Q$ produced by the fan is proportional to its speed $\omega_\text{in}$, i.e.

$$Q = \left(\frac{\omega_\text{in}}{\omega_\text{max}}\right) Q_\text{max},$$
where $\omega_\text{max}, Q_\text{max}$ are the maximum speed and airflow of the fan. Dividing by the area $A$ of the fan opening, we obtain the following expression for airspeed
$$v = \frac{1}{A}\left(\frac{\omega_\text{in}}{\omega_\text{max}}\right) Q_\text{max}.$$
Combining this with \eqref{eq:bernoulli}, we arrive at

\begin{align}
\label{eq:barometers_bernoulli}
\tilde{P}_\text{up} - \tilde{P}_\text{dw}
= \frac{\rho}{2A^2}\left(\frac{Q_\text{max}}{\omega_\text{max}}\right)^2\omega_\text{in}^2 + \Delta,
\end{align}
where $\Theta := (\rho, A, \omega_\text{max}, Q_\text{max},\Delta)$ are the parameters of our model. Given the specifications of the fan, we have $A = \pi 0.06^2\text{ m}^2$ and $\omega_\text{max} = 314.16 \text{ rad}/\text{s}$. We can set $\rho=1.2$ and we empirically determine $\Delta=7.1$ Pa. The same difficulties apply to estimating $Q_\text{max}$ as for $S_\text{max}$ in model C1. Furthermore, in this model, we ignore the additional effect that the exhaust fan has on the total airflow through the tunnel. For the symbolic regression task in \autoref{fig:benchmarks_2}e, we turn off the exhaust fan and take steady-state measurements for different loads of the intake fan (see the \href{https://github.com/juangamella/causal-chamber/tree/main/datasets/wt_bernoulli_v1}{\nolinkurl{wt_bernoulli_v1}} dataset for more details).

%%%%%%%%%%%%%%%%%%%%%%%%%%%%%5
%% SECTION
\section{Light Tunnel}
\label{s:modelst_lt}

In \cref{ss:models_malus}, we provide some background on Malus' law and provide a model of the effect of the polarizer angles ($\theta_1, \theta_2$) on the light intensity reaching the third light sensor ($\tilde{I}_3, \tilde{V}_3)$. In \cref{ss:image_capture} we provide models of the image-capture process with increasing degrees of fidelity.

\subsection{Polarizer Effects}
\label{ss:models_malus}

\begin{figure}[ht]
\centerline{
\includegraphics[width=180mm]{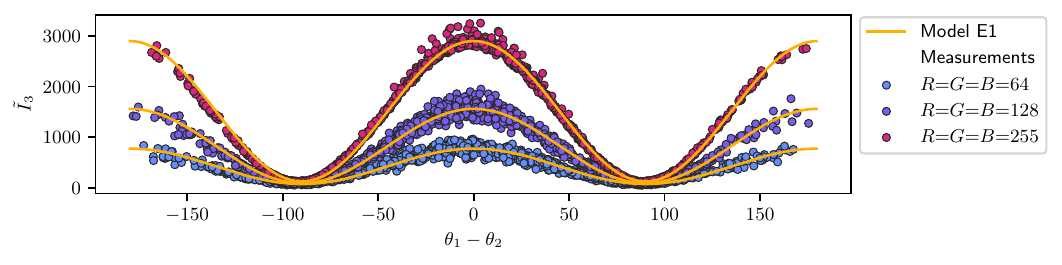}
}
\caption{Comparing the prediction of model E1 (Malus' law) with real measurements gathered from the light tunnel. Model E1 is fit to data gathered under different brightness settings of the light source (\ie $R=G=B=64,128,255$), obtaining a coefficient of determination of $R^2 = 0.93, 0.97, 0.99$, respectively. The data for $R=G=B=255$ is used in the symbolic regression task of \autoref{fig:benchmarks_2}e. The signal-to-noise decreases with the light-source brightness, allowing for more challenging scenarios. The data can be found in the \href{https://github.com/juangamella/causal-chamber/tree/main/datasets/lt_malus_v1}{\nolinkurl{lt_malus_v1}} dataset.}
\label{fig:polarizer_model}
\end{figure}%

Depending on their angle, the linear polarizers dim the light passing through them. Malus' law describes their effect on light intensity \parencite{collett2005field}: given a beam of totally polarized light and a perfect polarizer, the intensity $I$ after the polarizer is given by $I = I_0 \cos^2(\theta)$,
where $I_0$ is the intensity before the polarizer and $\theta$ is the angle between the polarizer axis and the polarization of the incident light. For unpolarized light passing through two polarizers at angles $\theta_1$ and $\theta_2$, the intensity after the polarizers becomes
$$I = \frac{1}{2}I_0\cos^2(\theta_1 - \theta_2).$$

In practice, polarizers are imperfect: they are not fully transparent to light polarized parallel to their polarization axis, and they do not block all light with polarization orthogonal to it. Furthermore, their effect depends on the frequency of light. For an unpolarized light beam with a given spectral composition, a pair of polarizers will scale its intensity by $T^p < 1/2$; when orthogonal, they will still allow some light to pass through, scaling the intensity by $T^c > 0$.

\subsubsection*{Model E1}

To model the effect of the polarizer angles $\theta_1,\theta_2$ on the intensity $I_3$ of light reaching the third sensor, we can reformulate the generalization of Malus' law for imperfect polarizers \parencite{lages2008composition} in terms of the transmission rates $T^p, T^c$, resulting in the model
$$I_3 = I_0 [(T^p - T^c)\cos^2(\theta_1 - \theta_2) + T^c],$$
where $\Theta := (I_0, T^p, T^c)$ are the parameters of the model. The radiant intensity before the polarizers ($I_0$) is given in watts per steradian (W/sr), and depends \eg on the brightness of the tunnel light source. For the polarizing film used in the light tunnel, values for the transmission rates $T^p,T^c \in [0,1]$ can be found in \autoref{apx:datasheets} under the \nolinkurl{polarizer} component. We can rewrite the above model as
$$I_3 = \beta_1 \cos^2(\theta_1 - \theta_2) + \beta_0,$$
becoming our ground truth for the symbolic regression task in \autoref{fig:benchmarks_2}e. We compare, in \autoref{fig:polarizer_model}, the output of model E1 to measurements gathered from the chamber, with the parameters $\beta_0, \beta_1$ fit to the data.

\subsection{Image Capture}
\label{ss:image_capture}

As models of the image-capture process in the light tunnel, we provide simple simulators that produce a synthetic image given the light-source setting $R,G,B$ and polarizer angles $\theta_1,\theta_2$. The resulting image simulates the image ($\tilde{\text{I}}$m) produced by the light tunnel and consists of a hexagon over a black background, whose color is given by the RGB vector $(\tilde{R}, \tilde{G}, \tilde{B}) \in [0,1]^3$ and models the response of the camera sensor. The models F1--F3 differ in how the vector $(\tilde{R}, \tilde{G}, \tilde{B})$ is computed. In \autoref{fig:benchmarks_2}f, we compare their output to real images produced by the light tunnel.

\subsubsection*{Model F1} The first model assumes the linear polarizers to be perfect and that their effect is uniform across all wavelengths. Furthermore, it assumes that the camera sensor is perfectly calibrated to the light source. The resulting color vector is given by
\begin{align}
    \begin{pmatrix}
        \tilde{R}\\
        \tilde{G}\\
        \tilde{B}\\
    \end{pmatrix}
    = \cos^2(\theta_1 - \theta_2)\frac{1}{255}
    \begin{pmatrix}
        R\\
        G\\
        B\\
    \end{pmatrix},
\end{align}
where $\cos^2(\theta_1 - \theta_2)$ models the dimming effect of the polarizers according to Malus' law (see \cref{ss:models_malus} above).

\subsubsection*{Model F2} The second model explicitly models the output of the camera sensor as a result of its spectral sensitivity and the white balance correction applied by the processor in the camera. Our complete model is
\begin{align}
    \begin{pmatrix}
        \tilde{R}\\
        \tilde{G}\\
        \tilde{B}\\
    \end{pmatrix}
    = \min \{1,eWS\cos^2(\theta_1 - \theta_2)\frac{1}{255}
    \begin{pmatrix}
        R\\
        G\\
        B\\
    \end{pmatrix}\},
\end{align}
where
\begin{itemize}
    \item $S \in \mathbb{R}^{3 \times 3}_+$ models the response of the sensor to the light of each color,
    \item $W := \diag{w_R, w_B, w_C}$ with $w_R, w_B, w_C \in \mathbb{R}_+ : w_R + w_B + w_C = 1$ is the white-balance correction applied by the camera, and
    \item $e > 0$ is an additional parameter to model the exposure of the sensor as affected by the camera parameters aperture (Ap), shutter speed ($T_\text{Im}$), and sensor gain (ISO).
\end{itemize}
The term $\min\{1,x\} := (\min \{1,x_i\})_{i=1,2,3}$ truncates the resulting color vector to the interval $[0,1]^3$, modeling overexposure of the camera sensor.

\subsubsection*{Model F3}

We separately model the effect of the polarizers on each of the frequencies produced by the light source. Our model becomes
\begin{align}
    \begin{pmatrix}
        \tilde{R}\\
        \tilde{G}\\
        \tilde{B}\\
    \end{pmatrix}
    = \min \{1,eWS    
    \diag{(T^p - T^c)
    \cos^2(\theta_1 - \theta_2)    
    + T^c}
    \frac{1}{255}
    \begin{pmatrix}
        R\\
        G\\
        B\\
    \end{pmatrix}\},
\end{align}
where $T^p := (T^p_R, T^p_G, T^p_B) \in [0,1]^3$ are the transmission rates for each color when the polarizers are aligned, and $T^c \in [0,1]^3$ when their polarization axes are perpendicular. From the technical specifications of the light source and polarizers, appropriate values are $T^p = (0.29,0.35,0.33)$ and $T^c = (0.02, 0.08, 0.18)$; for the corresponding datasheets, see the \nolinkurl{light_source} and \nolinkurl{polarizer} components in \autoref{tab:datasheets}.
\vfill

%%%%%%%%%%%%%%%%%%%%%%%%%%%%%%%%%%%%%%%%%%%%%%%%%%%%%%%%%%%%%%%%%%%%%%%%%%%%%%%%%
%% APPENDIX
\newpage
\part{Causal Ground Truth}
\label{apx:causal_ground_truth}
\setcounter{section}{0}
In this appendix, we formalize a causal interpretation of the ground-truth graphs in \autoref{fig:ground_truths}. Equipped with this interpretation, which we give in \autoref{def:ground_truth}, the graphs describe constraints on the underlying causal system. We also provide a procedure to empirically validate these constraints using data collected from the chambers.

In \autoref{s:chambers}, we categorized the variables in each ground-truth graph $G$ as \emph{non-manipulable} variables, corresponding to sensor measurements, and \emph{manipulable} variables, that is, actuators and sensor parameters. Here, we further split the latter into manipulable variables with no incoming edges in $G$, which we call \emph{exogenous}, and manipulable variables with incoming edges, which we call \emph{endogenous} and are set by the control computer as a function of the other variables in the system. We refer to such functions as \emph{assignment functions}.

Let $X_V = (X_1,\ldots,X_k)$ denote a vector of chamber variables and let us assume that there is a true causal model describing the system; more formally, we denote by $P(X_j^t \mid \dop (X^{t_0}_{V \setminus \{j\}} = x))$ the distribution of variable $X_j$ at time $t$, after performing an intervention on all other chamber variables $X_{V\setminus \{j\}}$ at some earlier time $t_0 < t$. 
We can sample from this distribution by taking a measurement from the chamber, where the intervention corresponds to the following manipulations of its variables: if $X_i$ is a non-manipulable variable, \ie a sensor measurement, the intervention consists of overwriting its value with $x_i$. If $X_k \in X_{V\setminus \{j\}}$ is an exogenous manipulable variable,
then the intervention corresponds to setting it to $x_k$. If it is an endogenous manipulable variable, the intervention also sets its value to $x_k$, overriding the value set by its assignment function. 

\begin{definition}[Ground-truth causal model]
\label{def:ground_truth}
Let $G$ be a directed graph over $X_V$. 
We call $G$ a \emph{ground-truth graph} if
an edge $X_i \to X_j$ in $G$ implies that
\begin{align}
\label{eq:total_effect}
&\exists t, \wt > 0, x^A, x^B \text{ with } x^A_i \neq x^B_i \text{ and } x^A_k = x^B_k \text{ for all } k\neq i,\nonumber\\
&\text{such that } P(X_j^{t+\wt} \mid \dop (X_{V\setminus\{j\}}^t = x^A)) \neq P(X_j^{t+\wt} \mid \dop (X_{V\setminus\{j\}}^t = x^B)).
\end{align}
\end{definition}%

Although it is impossible to directly evaluate \eqref{eq:total_effect}---we would need to perform two different interventions simultaneously---\autoref{def:ground_truth} allows us, under additional assumptions, to make statements about 
what a ground-truth graph $G$ implies for the observations collected from the chambers. For example, if we assume that 
the distribution resulting from an intervention depends on the time lag (but not the absolute time $t$) at which the intervention is performed (the assumption is formalized in Proposition~\ref{prop:assumptions} below), then we can use two-sample tests for testing \eqref{eq:total_effect}. This assumption is suitable for the controlled environment of the light tunnel and for many of the effects in the wind tunnel. For the effects on the barometric measurements ($\tilde{P}_\text{up}, \tilde{P}_\text{dw}, \tilde{P}_\text{amb}, \tilde{P}_\text{int})$, whose distribution changes with time due to shifts in atmospheric pressure, the assumption is still reasonable for measurements taken closely in time.

\section{Empirical validation}
\label{ss:validation}

According to \autoref{def:ground_truth}, to validate an effect $X_i \to X_j$, we would need to reject the null hypothesis
\begin{align}
    \label{eq:null_theory}
    H_0 : &\forall t, \wt > 0, x^A, x^B \text{ with } x^A_i \neq x^B_i \text{ and } x^A_k = x^B_k \text{ for } k\neq i,\nonumber\\
&\text{such that } P(X_j^{t+\wt} \mid \dop (X_{V\setminus\{j\}}^t = x^A)) = P(X_j^{t+\wt} \mid \dop (X_{V\setminus\{j\}}^t = x^B)).
\end{align}
However, as discussed after \autoref{def:ground_truth}, additional assumptions are needed to test $H_0$, since we cannot perform the two different interventions $\dop (X_{V\setminus\{j\}}^t = x^A)$ and $\dop (X_{V\setminus\{j\}}^t = x^B)$ at the same time $t$. We first state our validation procedure and then provide sufficient conditions for it to have the appropriate level for testing $H_0$.

\paragraph{Validation procedure} 
Pick predefined $x_A,x_B,\wt,N,\alpha$.
\begin{enumerate}
    \item sample $U^1,\ldots,U^N$ i.i.d.\ from a Bernoulli distribution with $p=0.5$
    \item sample $\Delta t_1, \ldots, \Delta t_N \text{ i.i.d.} \sim \text{Unif}[10^{-3}, 1]$
    \item let $t_0 + \wt$ be the current time
    \item for $n=1,\ldots,N$
    \begin{enumerate}            
        \item wait $\Delta t_n$ seconds
        \item (at time $t_n := t_{n-1} + \wt + \Delta t_n$) set $X_{V \setminus \{j\}} \leftarrow x^A$ if $U^n=0$, or $X_{V \setminus \{j\}} \leftarrow x^B$ otherwise
        \item wait $\wt$ seconds
        \item (at time $t_n + \wt$) collect the measurement $x^{t_n + \wt}$ from the chamber
        \item add the measurement of $X_j$ to the sample $\bm{X}^A \leftarrow \bm{X}^A \cup \{x_j^{t_n + \wt}\}$ if $U^n=0$, and $\bm{X}^B \leftarrow \bm{X}^B \cup \{x_j^{t_n + \wt}\}$ otherwise
        \end{enumerate}
    \item perform a two-sample test $\psi$ that is level $\alpha$ (in practice, we use a  Kolmogorov-Smirnov \cite{smirnov1948table} test), comparing the samples $\bm{X}^A$ and $\bm{X}^B$, resulting in a p-value $p$
    \item reject $H_0$ if $p \leq \alpha$
    \end{enumerate}%
The above procedure amounts to a randomized controlled trial, where the treatment units are the $N$ measurements, and both the intervention allocations and intervention times are randomized. \autoref{prop:assumptions} provides sufficient conditions for the procedure to achieve the correct level in testing $H_0$ \eqref{eq:null_theory}.

\begin{proposition}
\label{prop:assumptions}
Let  $x_A,x_B,T,N,\alpha$ be the values used in the validation procedure and
assume that the following two assumptions hold.
\begin{enumerate}
\item[\emph{A1.}]
For all $t_1, \ldots, t_N$
such that $t_i + T < t_{i+1}$ for all $i \in \{1, \ldots, N-1\}$, and for all 
$x^1, \ldots, x^N \in \{x_A, x_B\}$,
it holds that for all $n \in \{1,\ldots,N\}$
$$
P(X_j^{t_n+\wt} \mid \dop (X_{V\setminus\{j\}}^{t_n} = x^n),
\ldots,
\dop (X_{V\setminus\{j\}}^{t_1} = x^{1}),
X_j^{t_{n-1}+\wt}, \ldots, X_j^{t_1+\wt})
= 
P(X_j^{t_n+\wt} \mid \dop (X_{V\setminus\{j\}}^{t_n} = x^n)).
$$

\item[\emph{A2.}] $\forall \tau, t$
$$P(X_j^{t+\wt} \mid \dop (X_{V\setminus\{j\}}^t = x_A)) = P(X_j^{t+\wt+\tau} \mid \dop (X_{V\setminus\{j\}}^{t+\tau} = x_A)),$$
and
$$P(X_j^{t+\wt} \mid \dop (X_{V\setminus\{j\}}^t = x_B)) = P(X_j^{t+\wt+\tau} \mid \dop (X_{V\setminus\{j\}}^{t+\tau} = x_B)).$$
\end{enumerate}
Then, the validation procedure is level $\alpha$ for testing $H_0$.
\end{proposition}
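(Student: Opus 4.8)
The plan is to show that, under $H_0$ together with A1 and A2, the measurements collected by the procedure form a single i.i.d.\ sample that is independent of the randomized treatment labels; the level guarantee of the two-sample test $\psi$ then transfers directly to the whole procedure. Concretely, I would fix the notation $M_n := X_j^{t_n+\wt}$ for the $n$-th collected measurement and $x^n := x_A$ if $U^n=0$ and $x^n := x_B$ otherwise for the applied intervention, so that $\mathbf{X}^A = \{M_n : U^n = 0\}$ and $\mathbf{X}^B = \{M_n : U^n = 1\}$. I would then condition on the realized waiting times $\Delta t_1, \ldots, \Delta t_N$ (equivalently on the realized intervention times $t_1, \ldots, t_N$). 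Since $t_{i+1} = t_i + \wt + \Delta t_{i+1}$ with $\Delta t_{i+1} > 0$ by construction, the spacing hypothesis $t_i + \wt < t_{i+1}$ required by A1 holds automatically for every realization, so the assumptions are applicable.

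The core step combines the three ingredients. Assumption A1 is a Markov-type property: conditional on the past interventions $x^1, \ldots, x^n$ and past measurements $M_1, \ldots, M_{n-1}$, the law of $M_n$ equals the single-intervention law $P(X_j^{t_n+\wt} \mid \dop(X_{V\setminus\{j\}}^{t_n} = x^n))$ and thus depends only on the current intervention $x^n$. Assumption A2 removes the dependence on the absolute time $t_n$, and $H_0$ (specialized to the lag $\wt$) asserts that this interventional law is one and the same distribution, call it $Q$, whether $x^n = x_A$ or $x^n = x_B$. Chaining these yields $P(M_n \mid M_1, \ldots, M_{n-1}, x^1, \ldots, x^n) = Q$, a law that does not depend on the conditioning variables at all. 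Hence $M_n$ is independent of $(M_1, \ldots, M_{n-1}, x^1, \ldots, x^n)$ with $M_n \sim Q$; telescoping over $n$ shows that $(M_1, \ldots, M_N)$ are i.i.d.\ $Q$ and jointly independent of $(x^1, \ldots, x^N)$, and therefore of the labels $(U^1, \ldots, U^N)$, since the $x^n$ are deterministic functions of the $U^n$.

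It remains to transfer the level. Conditional on $(U^1, \ldots, U^N)$ — equivalently on the group sizes $n_A, n_B$ — the samples $\mathbf{X}^A$ and $\mathbf{X}^B$ are two independent i.i.d.\ samples from the common law $Q$. This is precisely the null configuration for which $\psi$ is level $\alpha$, so the conditional rejection probability is at most $\alpha$. Taking expectation over the labels (and over the $\Delta t_n$ on which we conditioned) preserves the bound, so the procedure rejects $H_0$ with probability at most $\alpha$, as claimed.

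The main obstacle I anticipate is the core step: carefully chaining A1, A2, and $H_0$ to conclude that the entire measurement vector is i.i.d.\ and independent of the randomized allocation, since it is exactly this reduction that collapses the randomized trial to the clean two-sample null against which $\psi$ is calibrated. A minor technical point to dispatch is the event that one of the two groups is empty (probability $2^{1-N}$), which I would handle by the convention that $\psi$ does not reject in that case, which only decreases the rejection probability and hence leaves the level guarantee intact.
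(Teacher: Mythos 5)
Your proof is correct and follows essentially the same route as the paper's: use A1 and A2 (together with the randomized timing) to show that, conditional on the allocation, the collected measurements form i.i.d.\ samples whose laws coincide under $H_0$, and then transfer the level of the two-sample test. The paper's own proof is a three-line version of exactly this argument; your additional care with the spacing hypothesis, the conditioning on waiting times, and the empty-group edge case only fills in details the paper leaves implicit.
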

\begin{proof}
Assumptions A1 and A2 imply that there is $P^A$ and $P^B$ such that, conditioned on the realizations of $U_1, \ldots, U_N$,  
$\bm{X}^A \sim \text{i.i.d. } P^A$, and $\bm{X}^B \sim \text{i.i.d. } P^B$.
Additionally, under $H_0$, $P^A = P^B$. Thus,
$P(\psi(\bm{X}^A, \bm{X}^B)=1) \leq \alpha$.
\end{proof}

In essence, assumptions \emph{A1} and \emph{A2} ensure that the test in step 5 of the procedure receives two samples composed of independent and identically distributed observations. Alternative tests requiring weaker conditions may be possible. In any case, the assumptions are not unreasonable for the causal chambers. Both assumptions hold when $X_j$ is an endogenous manipulable variable, and they are reasonable for all sensor measurements in the light tunnel, and most in the wind tunnel. An exception comes from the barometric measurements $\tilde{P}_\text{dw}, \tilde{P}_\text{up},\tilde{P}_\text{amb}$ and $\tilde{P}_\text{int}$, which are affected by the atmospheric pressure outside the chamber. In this case, assumption \emph{A2} may not hold for large $\tau$. However, the issue is largely reduced for measurements taken very closely in time. Furthermore, the randomization of time points and intervention assignments provides some robustness against violations of A1 and A2.

The results of the procedure, in the form of the p-values computed in step 5 (using the Kolmogorov-Smirnov two-sample test), are shown for each edge and ground-truth graph in \autoref{tab:validation_lt_standard}-\autoref{tab:validation_wt_pressure_control}. The corresponding datasets are listed in the caption of each table, and the code to compute the p-values can be found in the Jupyter notebook \href{https://github.com/juangamella/causal-chamber-paper/blob/main/causal_validation.ipynb}{\nolinkurl{causal_validation.ipynb}} in the paper repository at \href{https://github.com/juangamella/causal-chamber-paper}{\nolinkurl{github.com/juangamella/causal-chamber-paper}}. The data from some experiments was used to compute the p-values for different edges and some experiments have been repeated a few times; in \autoref{tab:validation_lt_standard}-\autoref{tab:validation_wt_pressure_control} we show the raw p-values produced by the test without additional corrections for multiple testing.

\begin{table}[H]\centering
\ra{1.3}
\begin{tabular}{@{}rlrlrlrl@{}}\toprule
Edge & p-value & Edge & p-value & Edge & p-value & Edge & p-value\\
\midrule
$R\to\tilde{I}_1$ & $2.3 \times 10^{-14}\quad$& $G\to\tilde{I}_1$ & $1.6 \times 10^{-14}\quad$& $B\to\tilde{I}_1$ & $1.9 \times 10^{-14}\quad$& $R\to\tilde{I}_2$ & $2.3 \times 10^{-14}\quad$\\
$G\to\tilde{I}_2$ & $1.6 \times 10^{-14}\quad$& $B\to\tilde{I}_2$ & $1.9 \times 10^{-14}\quad$& $R\to\tilde{I}_3$ & $2.3 \times 10^{-14}\quad$& $G\to\tilde{I}_3$ & $1.6 \times 10^{-14}\quad$\\
$B\to\tilde{I}_3$ & $1.9 \times 10^{-14}\quad$& $R\to\tilde{V}_1$ & $2.3 \times 10^{-14}\quad$& $G\to\tilde{V}_1$ & $1.6 \times 10^{-14}\quad$& $B\to\tilde{V}_1$ & $1.9 \times 10^{-14}\quad$\\
$R\to\tilde{V}_2$ & $2.3 \times 10^{-14}\quad$& $G\to\tilde{V}_2$ & $1.6 \times 10^{-14}\quad$& $B\to\tilde{V}_2$ & $1.9 \times 10^{-14}\quad$& $R\to\tilde{V}_3$ & $2.3 \times 10^{-14}\quad$\\
$G\to\tilde{V}_3$ & $1.6 \times 10^{-14}\quad$& $B\to\tilde{V}_3$ & $1.9 \times 10^{-14}\quad$& $R\to\tilde{C}$ & $2.3 \times 10^{-14}\quad$& $G\to\tilde{C}$ & $1.6 \times 10^{-14}\quad$\\
$B\to\tilde{C}$ & $1.9 \times 10^{-14}\quad$& $\theta_1\to\tilde{I}_3$ & $3 \times 10^{-14}\quad$& $\theta_2\to\tilde{I}_3$ & $6.6 \times 10^{-14}\quad$& $\theta_1\to\tilde{V}_3$ & $3 \times 10^{-14}\quad$\\
$\theta_2\to\tilde{V}_3$ & $6.6 \times 10^{-14}\quad$& $\theta_1\to\tilde{\theta}_1$ & $3 \times 10^{-14}\quad$& $\theta_2\to\tilde{\theta}_2$ & $6.6 \times 10^{-14}\quad$& $R_1\to\tilde{\theta}_1$ & $1.6 \times 10^{-14}\quad$\\
$O_1\to\tilde{\theta}_1$ & $8.3 \times 10^{-101}\quad$& $R_2\to\tilde{\theta}_2$ & $2.3 \times 10^{-14}\quad$& $O_2\to\tilde{\theta}_2$ & $3.4 \times 10^{-67}\quad$& $R_C\to\tilde{C}$ & $1.9 \times 10^{-14}\quad$\\
$O_C\to\tilde{C}$ & $6 \times 10^{-24}\quad$& $L_{11}\to\tilde{I}_1$ & $1.6 \times 10^{-14}\quad$& $L_{12}\to\tilde{I}_1$ & $2.3 \times 10^{-14}\quad$& $L_{11}\to\tilde{V}_1$ & $1.6 \times 10^{-14}\quad$\\
$L_{12}\to\tilde{V}_1$ & $2.3 \times 10^{-14}\quad$& $T^I_1\to\tilde{I}_1$ & $1.9 \times 10^{-14}\quad$& $D^I_1\to\tilde{I}_1$ & $3 \times 10^{-14}\quad$& $T^V_1\to\tilde{V}_1$ & $1.9 \times 10^{-14}\quad$\\
$D^V_1\to\tilde{V}_1$ & $1.9 \times 10^{-14}\quad$& $L_{21}\to\tilde{I}_2$ & $4.2 \times 10^{-14}\quad$& $L_{22}\to\tilde{I}_2$ & $6.6 \times 10^{-14}\quad$& $L_{21}\to\tilde{V}_2$ & $4.2 \times 10^{-14}\quad$\\
$L_{22}\to\tilde{V}_2$ & $1.3 \times 10^{-12}\quad$& $T^I_2\to\tilde{I}_2$ & $1.9 \times 10^{-14}\quad$& $D^I_2\to\tilde{I}_2$ & $1.6 \times 10^{-14}\quad$& $T^V_2\to\tilde{V}_2$ & $1.6 \times 10^{-14}\quad$\\
$D^V_2\to\tilde{V}_2$ & $8.9 \times 10^{-13}\quad$& $L_{31}\to\tilde{I}_3$ & $2.3 \times 10^{-14}\quad$& $L_{32}\to\tilde{I}_3$ & $4.2 \times 10^{-14}\quad$& $L_{31}\to\tilde{V}_3$ & $6.8 \times 10^{-12}\quad$\\
$L_{32}\to\tilde{V}_3$ & $4.2 \times 10^{-14}\quad$& $T^I_3\to\tilde{I}_3$ & $2.3 \times 10^{-14}\quad$& $D^I_3\to\tilde{I}_3$ & $4.2 \times 10^{-14}\quad$& $T^V_3\to\tilde{V}_3$ & $1.9 \times 10^{-14}\quad$\\
$D^V_3\to\tilde{V}_3$ & $2.3 \times 10^{-14}\quad$&\\ 
\bottomrule
\end{tabular}
\caption{Results of the validation procedure for the edges of the standard-configuration graph of the light tunnel (\autoref{fig:ground_truths}a). For each edge, we show the p-value resulting from the Kolmogorov-Smirnov \parencite{smirnov1948table} test computed in step 5 of the procedure. All p-values are below $10^{-11}$. The experimental data, together with the interventions $x^A, x^B$, waiting time $T$ and sample size $N$ for each edge can be found in the \href{https://github.com/juangamella/causal-chamber/tree/main/datasets/lt_validate_v1}{\nolinkurl{lt_validate_v1}} dataset at \href{https://causalchamber.org}{\nolinkurl{causalchamber.org}}.}
\label{tab:validation_lt_standard}
\end{table}

\begin{table}[H]\centering
\ra{1.3}
\begin{tabular}{@{}rlrlrl@{}}\toprule
Edge & p-value & Edge & p-value & Edge & p-value\\
\midrule
$L_\text{in}\to\tilde{\omega}_\text{in}$ & $1.6 \times 10^{-14}\quad$& $T_\text{in}\to\tilde{\omega}_\text{in}$ & $1.4 \times 10^{-152}\quad$& $L_\text{in}\to\tilde{\omega}_\text{out}$ & $1.6 \times 10^{-14}\quad$\\
$L_\text{in}\to\tilde{C}_\text{in}$ & $1.6 \times 10^{-14}\quad$& $L_\text{in}\to\tilde{C}_\text{out}$ & $1.8 \times 10^{-22}\quad$& $L_\text{out}\to\tilde{\omega}_\text{in}$ & $3.4 \times 10^{-14}\quad$\\
$L_\text{out}\to\tilde{\omega}_\text{out}$ & $3.4 \times 10^{-14}\quad$& $T_\text{out}\to\tilde{\omega}_\text{out}$ & $8.3 \times 10^{-122}\quad$& $L_\text{out}\to\tilde{C}_\text{out}$ & $3.4 \times 10^{-14}\quad$\\
$L_\text{out}\to\tilde{C}_\text{in}$ & $1.1 \times 10^{-8}\quad$& $H\to\tilde{\omega}_\text{in}$ & $1.5 \times 10^{-17}\quad$& $H\to\tilde{\omega}_\text{out}$ & $4 \times 10^{-58}\quad$\\
$L_\text{in}\to\tilde{P}_\text{int}$ & $2.5 \times 10^{-7}\quad$& $H\to\tilde{P}_\text{int}$ & $8.8 \times 10^{-11}\quad$& $L_\text{out}\to\tilde{P}_\text{int}$ & $0.082\quad$\\
$O_\text{int}\to\tilde{P}_\text{int}$ & $1.1 \times 10^{-23}\quad$& $L_\text{in}\to\tilde{P}_\text{up}$ & $1.6 \times 10^{-14}\quad$& $H\to\tilde{P}_\text{up}$ & $3 \times 10^{-14}\quad$\\
$L_\text{out}\to\tilde{P}_\text{up}$ & $3.4 \times 10^{-14}\quad$& $O_\text{up}\to\tilde{P}_\text{up}$ & $8 \times 10^{-7}\quad$& $L_\text{in}\to\tilde{P}_\text{dw}$ & $1.6 \times 10^{-14}\quad$\\
$H\to\tilde{P}_\text{dw}$ & $3 \times 10^{-14}\quad$& $L_\text{out}\to\tilde{P}_\text{dw}$ & $3.4 \times 10^{-14}\quad$& $O_\text{dw}\to\tilde{P}_\text{dw}$ & $1.7 \times 10^{-5}\quad$\\
$O_\text{amb}\to\tilde{P}_\text{amb}$ & $1.5 \times 10^{-284}\quad$& $O_\text{in}\to\tilde{C}_\text{in}$ & $9.2 \times 10^{-22}\quad$& $R_\text{in}\to\tilde{C}_\text{in}$ & $2.8 \times 10^{-59}\quad$\\
$O_\text{out}\to\tilde{C}_\text{out}$ & $2.5 \times 10^{-22}\quad$& $R_\text{out}\to\tilde{C}_\text{out}$ & $5 \times 10^{-59}\quad$& $A_1\to\tilde{S}_1$ & $2.2 \times 10^{-59}\quad$\\
$O_1\to\tilde{S}_1$ & $6.9 \times 10^{-36}\quad$& $R_1\to\tilde{S}_1$ & $2.8 \times 10^{-59}\quad$& $A_1\to\tilde{S}_2$ & $1.1 \times 10^{-55}\quad$\\
$A_2\to\tilde{S}_2$ & $2.2 \times 10^{-59}\quad$& $O_2\to\tilde{S}_2$ & $9.5 \times 10^{-35}\quad$& $R_2\to\tilde{S}_2$ & $6 \times 10^{-59}\quad$\\
$A_1\to\tilde{M}$ & $4.5 \times 10^{-9}\quad$& $L_\text{in}\to\tilde{M}$ & $1.8 \times 10^{-9}\quad$& $L_\text{out}\to\tilde{M}$ & $2.7 \times 10^{-7}\quad$\\
$H\to\tilde{M}$ & $5.7 \times 10^{-16}\quad$& $O_M\to\tilde{M}$ & $1 \times 10^{-32}\quad$& $R_M\to\tilde{M}$ & $4.2 \times 10^{-59}\quad$\\
\bottomrule
\end{tabular}
\caption{
Results of the validation procedure for the edges of the standard-configuration graph of the wind tunnel (\autoref{fig:ground_truths}c). For each edge, we show the p-value resulting from the Kolmogorov-Smirnov \parencite{smirnov1948table} test computed in step 5 of the procedure. The computed p-values are all below $8 \times 10^{-7}$, except for the one corresponding to the edge $L_\text{out} \to \tilde{P}_\text{int}$ (0.082). The data to validate this edge was collected during a windy day (16.4.2024 in Zurich), and the effect on the intake barometer (a few pascals) was small compared to the large fluctuations (100 pascals) in the ambient atmospheric pressure. When using a more powerful rank-sum test \parencite{mann1947test}), we obtained a p-value of $0.029$. The experimental data, together with the interventions $x^A, x^B$, waiting time $T$ and sample size $N$ for each edge can be found in the \href{https://github.com/juangamella/causal-chamber/tree/main/datasets/wt_validate_v1}{\nolinkurl{wt_validate_v1}} dataset at \href{https://causalchamber.org}{\nolinkurl{causalchamber.org}}.}
\label{tab:validation_wt_standard}
\end{table}

\begin{table}[H]\centering
\ra{1.3}
\begin{tabular}{@{}rlrlrlrl@{}}\toprule
Edge & p-value & Edge & p-value & Edge & p-value & Edge & p-value\\
\midrule
$\theta_1\to\tilde{\text{I}}\text{m}$ & $2 \times 10^{-13}\quad$& $\theta_2\to\tilde{\text{I}}\text{m}$ & $1.6 \times 10^{-14}\quad$& $R\to\tilde{\text{I}}\text{m}$ & $2.3 \times 10^{-14}\quad$ & $G\to\tilde{\text{I}}\text{m}$ & $1.6 \times 10^{-14}\quad$\\
$B\to\tilde{\text{I}}\text{m}$ & $1.9 \times 10^{-14}\quad$& $T_\text{Im}\to\tilde{\text{I}}\text{m}$ & $3 \times 10^{-14}\quad$& $\text{Ap}\to\tilde{\text{I}}\text{m}$ & $2.3 \times 10^{-14}\quad$ &$\text{ISO}\to\tilde{\text{I}}\text{m}$ & $4.2 \times 10^{-14}\quad$\\
\bottomrule
\end{tabular}
\caption{Results of the validation procedure for the additional edges in the ``camera'' graph of the light tunnel (\autoref{fig:ground_truths}b). For each edge, we show the p-value resulting from the Kolmogorov-Smirnov \parencite{smirnov1948table} test computed in step 5 of the procedure. All p-values are below $10^{-12}$. For the image variable $\tilde{\text{I}}\text{m}$, the test is performed on the average of pixel values. The experimental data, together with the interventions $x^A, x^B$, waiting time $T$ and sample size $N$ for each edge can be found in the \href{https://github.com/juangamella/causal-chamber/tree/main/datasets/lt_camera_validate_v1}{\nolinkurl{lt_camera_validate_v1}} dataset at \href{https://causalchamber.org}{\nolinkurl{causalchamber.org}}.}
\label{tab:validation_lt_camera}
\end{table}

\begin{table}[H]\centering
\ra{1.3}
\begin{tabular}{@{}rl@{}}\toprule
Edge & p-value\\
\midrule
$\tilde{P}_\text{dw}\to L_\text{in}$ & $2.3 \times 10^{-14}\quad$\\
$\tilde{P}_\text{dw}\to L_\text{out}$ & $2.3 \times 10^{-14}\quad$\\
\bottomrule
\end{tabular}
\caption{Results of the validation procedure for the additional edges of the ``pressure-control'' graph of the wind tunnel (\autoref{fig:ground_truths}d). For each edge, we show the p-value resulting from the Kolmogorov-Smirnov \parencite{smirnov1948table} test computed in step 5 of the procedure. The experimental data, together with the interventions $x^A, x^B$, waiting time $T$ and sample size $N$ for each edge can be found in the \href{https://github.com/juangamella/causal-chamber/tree/main/datasets/wt_pc_validate_v1}{\nolinkurl{wt_pc_validate_v1}} dataset at \href{https://causalchamber.org}{\nolinkurl{causalchamber.org}}.}
\label{tab:validation_wt_pressure_control}
\end{table}
\vfill

%%%%%%%%%%%%%%%%%%%%%%%%%%%%%%%%%%%%%%%%%%%%%%%%%%%%%%%%%%%%%%%%%%%%%%%%%%%%%%%%%
%% APPENDIX
\part{Component Datasheets}
\label{apx:datasheets}

\setcounter{section}{0}

\begin{table}[H]\centering
\ra{1.3}
\begin{tabular}{@{}llp{14cm}@{}}\toprule
Component & Chamber & Description and datasheets\\
\midrule
%%%%
\texttt{light\_source} & \texttt{wt/lt} &
The light source of the light tunnel.
\begin{itemize}
    \item[-] LED array: \href{https://github.com/juangamella/causal-chamber/blob/main/hardware/datasheets/light_source.pdf}{\nolinkurl{datasheets/light_source.pdf}}
    \item[-] Individual LED (e.g. for wavelengths): \href{https://github.com/juangamella/causal-chamber/blob/main/hardware/datasheets/light_source_led.pdf}{\nolinkurl{datasheets/light_source_led.pdf}}
\end{itemize}
\\
%%%%
\texttt{current\_sensor} & \texttt{wt/lt} & Current sensor for the light source and fans. \href{https://github.com/juangamella/causal-chamber/blob/main/hardware/datasheets/current_sensor.pdf}{\nolinkurl{datasheets/current_sensor.pdf}}\\
%%%%
\texttt{motor} & \texttt{wt/lt} & The stepper motors to control the polarizers and the hatch.
\begin{itemize}
    \item [-] Motor: Model 17HS8401 in datasheet \href{https://github.com/juangamella/causal-chamber/blob/main/hardware/datasheets/motor.pdf}{\nolinkurl{datasheets/motor.pdf}}
    \item [-] A4988 motor driver: \href{https://github.com/juangamella/causal-chamber/blob/main/hardware/datasheets/motor_driver.pdf}{\nolinkurl{datasheets/motor_driver.pdf}}
\end{itemize}\\
%%%%
\texttt{angle\_sensor} & \texttt{lt} & The rotary potentiometer used to measure the polarizer angle. Model 3590S-6-502L in datasheet \href{https://github.com/juangamella/causal-chamber/blob/main/hardware/datasheets/angle_sensor.pdf}{\nolinkurl{datasheets/angle_sensor.pdf}}\\
%%%%
\texttt{light\_sensor} & \texttt{lt} & The light-intensity sensor of the light tunnel. Model Si1151-AB00-GMR in datasheet \href{https://github.com/juangamella/causal-chamber/blob/main/hardware/datasheets/light_sensor.pdf}{\nolinkurl{datasheets/light_sensor.pdf}}\\
%%%%
\texttt{led} & \texttt{lt} & The LEDs placed by each light sensor in the light tunnel.
\begin{itemize}
    \item [-] LEDs: \href{https://github.com/juangamella/causal-chamber/blob/main/hardware/datasheets/led.pdf}{\nolinkurl{datasheets/led.pdf}}
    \item [-] Digital rheostats: model MCP4151 in \href{https://github.com/juangamella/causal-chamber/blob/main/hardware/datasheets/potentiometer.pdf}{\nolinkurl{datasheets/potentiometer.pdf}}
\end{itemize}\\
%%%%
\texttt{camera} & \texttt{lt} & The light-tunnel camera, a Sony $\alpha$6100: \href{https://github.com/juangamella/causal-chamber/blob/main/hardware/datasheets/camera.pdf}{\nolinkurl{datasheets/camera.pdf}}\\
%%%%
\texttt{arduino} & \texttt{wt/lt} & The control computer used in the chambers, \ie an Arduino Mega Rev3: \href{https://github.com/juangamella/causal-chamber/blob/main/hardware/datasheets/arduino_mega.pdf}{\nolinkurl{datasheets/arduino_mega.pdf}}\\
%%%%
\texttt{fan} & \texttt{wt} & The wind tunnel fans: \begin{itemize}
    \item [-] Fan specifications: \href{https://github.com/juangamella/causal-chamber/blob/main/hardware/datasheets/fan.pdf}{\nolinkurl{datasheets/fan.pdf}}
    \item [-] Speed control white paper: \href{https://github.com/juangamella/causal-chamber/blob/main/hardware/datasheets/fan_pwm.pdf}{\nolinkurl{datasheets/fan_pwm.pdf}}
    \item [-] Specifications of a similar motor: \href{https://github.com/juangamella/causal-chamber/blob/main/hardware/datasheets/fan_motor.pdf}{\nolinkurl{datasheets/fan_motor.pdf}}
\end{itemize}\\
%%%%
\texttt{barometer} & \texttt{wt} & The high precision barometers used in the wind tunnel: \href{https://github.com/juangamella/causal-chamber/blob/main/hardware/datasheets/barometer.pdf}{\nolinkurl{datasheets/barometer.pdf}}\\
%%%%
\texttt{potentiometer} & \texttt{wt} & Digital potentiometer used to regulate amplitude in the amplification circuit of the wind tunnel. It is the same component as the rheostats of the light tunnel, \ie model MCP4151 in \href{https://github.com/juangamella/causal-chamber/blob/main/hardware/datasheets/potentiometer.pdf}{\nolinkurl{datasheets/potentiometer.pdf}}\\
%%%%
\texttt{microphone} & \texttt{wt} & The wind tunnel microphone consists of an electret microphone with the LM385-DIP8 amplifier in \href{https://github.com/juangamella/causal-chamber/blob/main/hardware/datasheets/microphone_amplifier.pdf}{\nolinkurl{datasheets/microphone_amplifier.pdf}}\\
%%%%
\texttt{speaker} & \texttt{wt} & The wind-tunnel speaker.
\begin{itemize}
    \item[-] Amplifier: model LM386-D08-T in \href{https://github.com/juangamella/causal-chamber/blob/main/hardware/datasheets/speaker_amplifier.pdf}{\nolinkurl{datasheets/speaker_amplifier.pdf}}
    \item[-] Speaker: \href{https://github.com/juangamella/causal-chamber/blob/main/hardware/datasheets/speaker.pdf}{\nolinkurl{datasheets/speaker.pdf}}
\end{itemize}\\
\bottomrule
\end{tabular}
\caption{Physical components of the chambers and their related datasheets. The datasheets can be found at \href{https://github.com/juangamella/causal-chamber/tree/main/hardware/datasheets}{\nolinkurl{github.com/juangamella/causal-chamber/}}.}
\label{tab:datasheets}
\end{table}

%%%%%%%%%%%%%%%%%%%%%%%%%%%%%%%%%%%%%%%%%%%%%%%%%%%%%%%%%%%%%%%%%%%%%%%%%%%%%%%%%
%% SECTION
\part{Additional Figures}
\label{apx:figures}
\setcounter{section}{0}

\begin{figure}[h]
\centerline{
\includegraphics[width=180mm]{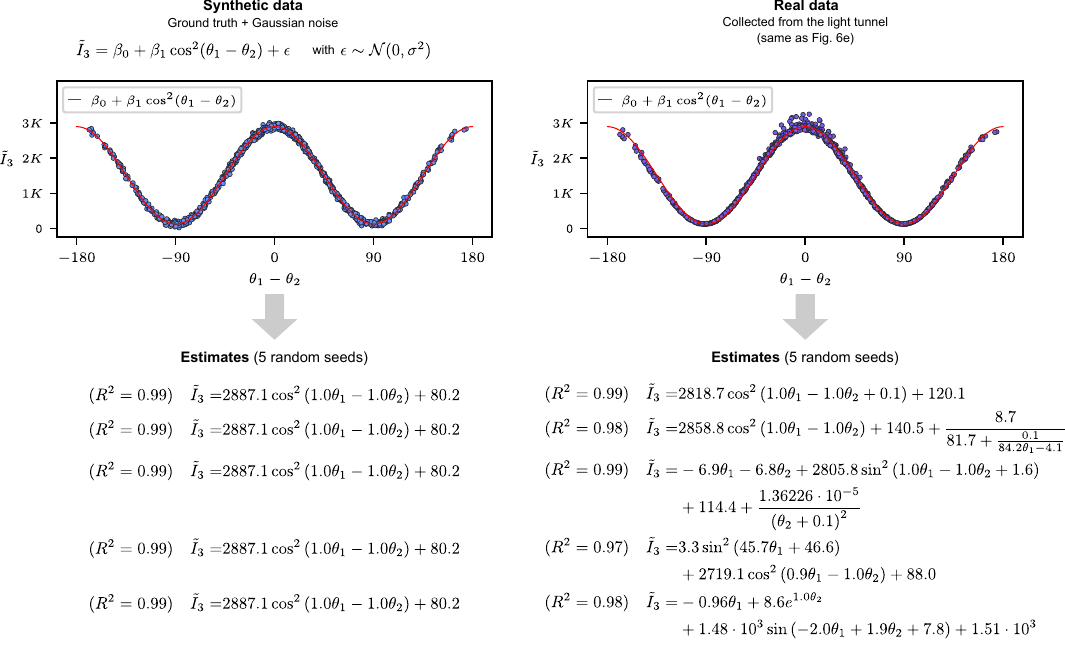}
}
\caption{Estimated expressions and their $R^2$ scores when we apply the symbolic regression method from \autoref{fig:benchmarks_2}e to the real data from the light tunnel (right), and synthetic data (left) following Malus' law (see \cref{ss:models_malus}). The synthetic data is produced by fitting the law to the data and adding Gaussian noise to simulate sensor noise. For the real data, the estimated expression varies with the random initialization of the method, whereas for the synthetic data, the method recovers the ground-truth expression in each run. This phenomenon does not carry over to the task of recovering Bernoulli's principle, where the method output is highly variable for both the synthetic and real data (see \autoref{fig:synthetic_symbolic_bern}).}
\label{fig:synthetic_symbolic_malus}
\end{figure}%

\begin{figure}[h]
\centerline{
\includegraphics[width=180mm]{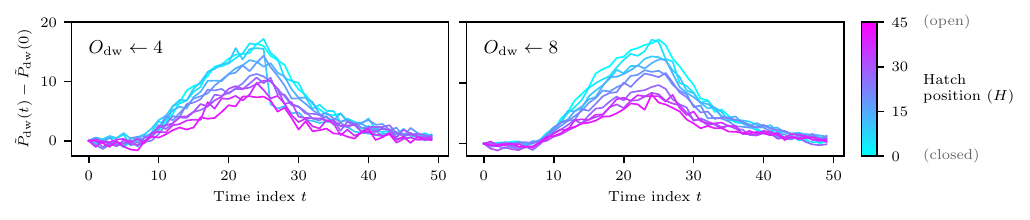}
}
\caption{Visualization of the pressure-curve dataset used in task b3 of \autoref{fig:benchmarks_1}. We show 10 curves picked at random for different positions of the hatch, under the oversampling rate seen during training ($O_\text{dw} \leftarrow 4$, left) and the one used as an out-of-distribution test ($O_\text{dw} \leftarrow 8$, right). The oversampling rate determines how many barometer readings are averaged to produce a single measurement (see \autoref{tab:wind_tunnel_vars}), increasing or decreasing its precision. While the effect on the signal is almost indistinguishable, the change is enough to cause the MLP in task b3 to fail in its out-of-distribution predictions.}
\label{fig:osr_comparison}
\end{figure}%
\vfill

\begin{figure}[h]
\centerline{
\includegraphics[width=180mm]{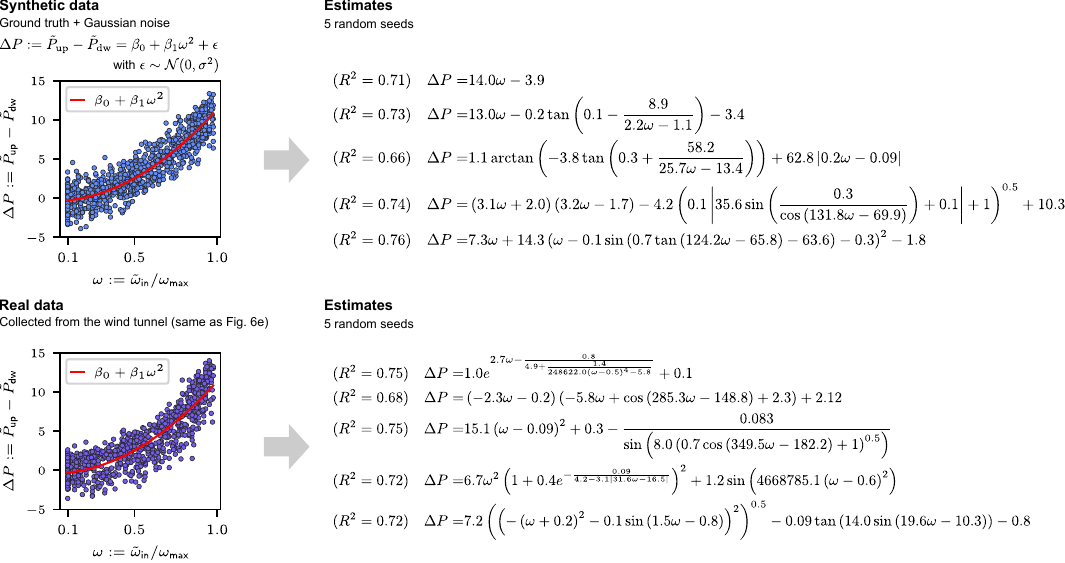}
}
\caption{Estimated expressions and their $R^2$ scores when we apply the symbolic regression method from \autoref{fig:benchmarks_2}e to recover Bernoulli's principle from synthetic data (top) and real data collected from the wind tunnel (bottom). The synthetic data is produced by fitting Bernoulli's principle to the data (see \cref{ss:models_bernoulli}) and adding Gaussian noise to simulate sensor noise. For both datasets, the output varies with the random initialization of the method.}
\label{fig:synthetic_symbolic_bern}
\end{figure}%
\vfill

\end{document}